\journal{Journal of \LaTeX\ Templates}
\newcolumntype{L}{>{$}l<{$}}
\newcommand{\overbar}[1]{\mkern 1.5mu\overline{\mkern-1.5mu#1\mkern-1.5mu}\mkern 1.5mu}
\newcommand{\no}[1]{\overbar{#1}}
\def\F{\mathcal F}
\def\P{\mathcal P}
\def\pr{\mathbb{P}}
\def\prev{\mathbb{P}}
\def\K{\mathcal{K}}
\def\C{\mathcal{C}}
\newtheorem{example}{Example}
\newtheorem{remark}{Remark}
\newtheorem{definition}{Definition}
\newtheorem{theorem}{Theorem}
\begin{document}

\begin{frontmatter}

\title{On Trivalent Logics,  Compound Conditionals, \\ and  Probabilistic Deduction Theorems\tnoteref{mytitlenote}}
%\tnotetext[mytitlenote]{Fully documented templates are available in the elsarticle package on \href{http://www.ctan.org/tex-archive/macros/latex/contrib/elsarticle}{CTAN}.}

%% Group authors per affiliation:

 \author[ag]{Angelo Gilio\fnref{fn2}}
 \address[ag]{Department of Basic and Applied Sciences for Engineering, University of Rome ``La Sapienza'', Italy}
 \ead{angelo.gilio1948@gmail.com}

 \author[do]{David E.\ Over}
 \address[do]{Department of Psychology, Durham University,  United Kingdom}
 \ead{david.over@durham.ac.uk}
 \author[np]{Niki Pfeifer}
 \address[np]{Department of Philosophy, University of Regensburg, Germany}
 \ead{niki.pfeifer@ur.de}
 
 \author[gs]{Giuseppe Sanfilippo\corref{cor1}\fnref{fn1}}
 \address[gs]{Department of Mathematics and Computer Science, University of Palermo, Italy}
 \ead{giuseppe.sanfilippo@unipa.it}

 \fntext[fn1]{Also affiliated with INdAM-GNAMPA, Italy}

 \fntext[fn2]{Retired}
 \cortext[cor1]{Corresponding author}

% \author{Authors\fnref{myfootnote}}
% \address{Radarweg 29, Amsterdam}
% \fntext[myfootnote]{Since 1880.}

% %% or include affiliations in footnotes:
% \author[mymainaddress,mysecondaryaddress]{Elsevier Inc}
% \ead[url]{www.elsevier.com}

% \author[mysecondaryaddress]{Global Customer Service\corref{mycorrespondingauthor}}
% \cortext[mycorrespondingauthor]{Corresponding author}
% \ead{support@elsevier.com}

% \address[mymainaddress]{1600 John F Kennedy Boulevard, Philadelphia}
% \address[mysecondaryaddress]{360 Park Avenue South, New York}

%\tableofcontents
\begin{abstract}
In this paper we recall  some results for conditional events, compound conditionals, conditional random quantities, p-consistency, and p-entailment.
Then, we show the  equivalence between  bets on conditionals   and  conditional bets, by reviewing   de Finetti's trivalent analysis of conditionals. But our approach  goes beyond de Finetti's early trivalent logical analysis and is  based on  
his later ideas, aiming to take
his proposals to a higher level. We  examine two recent articles that explore trivalent logics for conditionals and their definitions of logical validity and compare them with our approach to compound conditionals. 
We  prove a Probabilistic Deduction Theorem for  conditional events. After that,  we study some probabilistic deduction theorems, by presenting several examples.
 We  focus on iterated conditionals and the invalidity of the Import-Export principle  in the  light of our  Probabilistic Deduction Theorem. We  use the inference from a disjunction, \emph{$A$ or $B$}, to the  conditional, \emph{if not-$A$ then $B$}, as an example  to show  the invalidity of the  Import-Export principle. We also introduce a General Import-Export principle and we illustrate it by examining  some p-valid inference rules of System P. Finally, we briefly discuss some related work relevant to AI. 
\end{abstract}

\begin{keyword}
Levels of knowledge\sep
Betting schemes\sep 
Coherence \sep  
Conditional random quantities \sep 
Trivalent logics \sep 
Compound and iterated conditionals
\sep  Inference rules
\sep Probabilistic entailment
\sep 
Probabilistic Deduction Theorem
\sep General Import-Export principle.

%Immediately after the abstract, provide a maximum of 10 keywords, avoiding general and plural terms and multiple concepts (avoid, for example, "and", "of"). Be sparing with abbreviations: only abbreviations firmly established in the field may be eligible. These keywords will be used for indexing purposes.
\end{keyword}

\end{frontmatter}

%\linenumbers
%\maketitle
%\tableofcontents

% We analyzerelations between conditionals and the consequence relation in the light ofprobabilistic versions of the deduction theorem. Moreover, we study generalproperties of the behaviour of iterated conditionals in the context ofprobabilistic versions of Import-Export Principles, which connect  antecedent.conditionals with antecedent conjunctions. Compared to other approaches in theliterature, our approach allows for preserving probabilistic properties even inthe case when conjunctions and iterations of conditionals are considered.
\section{Introduction}
Conditionals are important in human reasoning under uncertainty because they allow individuals to make decisions and inferences based on incomplete or uncertain information. 
Thus, the interpretation and evaluation of conditionals is a key challenge for
artificial and human reasoning under uncertainty. 
Conditionals are relevant in AI because they are often used to implement if-then rules and are a crucial tool in a wide range of tasks and applications, from language processing to decision making.

There has recently been increasing interest in de Finetti's analysis of the conditional, \emph{if $A$ then $B$}, as what he called a \emph{conditional event} (\cite{definetti36,definetti37}) and symbolized as $B|A$.
Some logicians and psychologists of reasoning have focused on his \emph{trivalent} classification of conditionals as true, false, or void (\cite{BaOP13,BPOT18,EgRS20,Lass20,LaBa21,PoOB10}).
This classification has been also adopted in the field of artificial intelligence and probabilistic nonmonotonic reasoning  (see, e.g., \cite{biazzo02,biazzo05,CaLS07,coletti02,CoSV15,gilio02,gilio16,PeVa17,PfSa17}).
Research on trivalent logics and compounds of conditionals has been presented in many papers
(see, e.g., \cite{adams75,benferhat97,Cala87,Cala17,ColettiPV19,CiDu12,CiDu13,edgington95,GoNW91,Kauf09,McGe89,Miln97,NgWa94,vanFraassen76}).
A number of  authors argue that the set of values $\{${\em true, false, void}$\}$ of a conditional event $B|A$ should be represented in numerical terms as $\{1,0,P(B|A)\}$, where the (subjective) probability $P(B|A)$ may assume any value in the interval between 0 and 1, $[0,1]$ 
(see, e.g., \cite{gilio90,gilio12,GiSa14,jeffrey91,kleiter18,Lad95,pfeifer09b,SGOP20}). Based on this representation,  the geometrical approach for coherence checking has been extended to conditional probability assessments  (\cite{gilio90}). 
%A computational analysis of comparative conditional probabilities in the framework of coherence has been given in \cite{mundici21}. 
\\
In this paper, we give our reasons for adopting this proposal, and we explain how to use it as the basis of an account of some compound conditionals as conditional random quantities with values in the interval $[0, 1]$.  We use a principled way for
our analysis at the logical, cognitive, and psychological level of analysis and
discuss our work in the light of  selected recent literature. In particular, we
 examine two recent articles that adopt the trivalent view of conditionals and compare trivalent definitions of validity with  the notion of  probabilistic validity, p-validity. 
We apply our
coherence-based probability interpretation of conditionals, and compositions of
conditionals, by studying  their connections to the consequence relation, 
where each conclusion is related to the respective premises.
We give, in our account of conditional reasoning, some results on  several probabilistic versions of the deduction theorem. 
Moreover, we study 
properties of the behaviour of iterated conditionals in the context of a General
Import-Export principle, by examining some well known inference rules of 
nonmonotonic reasoning.
Compared to other approaches in the
literature, our approach, besides preserving basic probabilistic properties, allows to properly extending them to the case  where compound and iterated  conditionals are considered (\cite{GiSa21A}).  In particular, conditionals and iterated conditionals allow to suitably characterize the p-validity of inference rules in nonmonotonic reasoning (see, e.g., \cite{GiPS20,GiSa19, GiSa21E}).

We follow de Finetti in closely comparing an indicative conditional (IC), with a \emph{conditional bet} (CB). When we make this link, we do not look at an IC, \emph{if $A$ then $B$}, and a CB, informally of the form \emph{if $A$ then I bet on $B$}, as a material conditional, which is logically equivalent to $\no{A}\vee B$ (not-$A$ or $B$),
%$A \supset  B$, 
or a Stalnaker/Lewis conditional (see \cite{SPOG18} for the differences in detail). A CB only comes into effect when $A$ holds. We will show formally later (see Section \ref{SUBSEC:CONDBETCOND}) that such a conditional bet can be interpreted as a bet on the conditional event $B|A$. For an informal example, let the IC be \emph{If the coin is spun then it will come up heads}, and the CB be \emph{If the coin is spun, I bet it comes up heads}. There is this parallel relation between these conditionals. The IC is true, and the CB is won, when $AB$ holds, the IC is false, and the CB is lost, when $A \no{B}$ holds. When $\no{A}$ holds, the IC is void, and a counterfactual, \emph{If the coin had been spun, then it would have come up heads}, might be used in its place, and the CB will also be   void in the sense of being neither won nor lost. 

At this stage, it might appear that we have described only a trivalent account of the conditional, yielding a three-valued truth table, with the numerical values 1, 0, and $v$ for void. But de Finetti did not stop at this early trivalent analysis (\cite{definetti36}): he eventually extended his theory to another level. 

In \cite[pp. 1164-1165]{deFi80}, de Finetti  distinguished three levels of analysis: Level 0, Level 1, and Level 2. Level 0 is that of classical logic, where we assume that every statement is true or false, i.e., in numerical terms 1 or 0.
These levels of knowledge are also discussed in \cite{BPOT18,OvBa16}.
We recall below, with more details, the three levels of knowledge of an individual with respect to any event $E$. \\ 
Level 0. From a logical point of view, $E$ is false, ``0'', or true, ``1''. \\
Level 1. From a cognitive point of view,
$E$ can be false, ``0'', or uncertain, ``?'', or true, ``1''. \\
Level 2. From a psychological (subjective) point of view, in case of certainty $E$ is false, ``0'', or true, ``1''. In case of uncertainty, 
$E$ is given  a (subjective) probability ``$P(E)$''. \\ 
In other words, Level 0 specifies that events are two-valued logical entities (independently from the opinion of individuals). \\
Level 1 clarifies that  individuals can be certain (i.e., they know that $E$ is false, or that it is true), or uncertain about $E$ (i.e., they do not know if $E$ is false, or it is true).  
Level 2 specifies that people who are uncertain about $E$ represent their uncertainty by a (subjective) probability, which is a measure of their degree of belief on $E$ being true. \\ 
More in general, given a family of $n$ events $\F=\{E_1,\ldots,E_n\}$, we can say that: \\ \ \\
- we stay at Level 0 on $\F$ if we stay at Level 0 on each $E \in \F$; \\ \ \\
- we stay (partially) at Level 1 on $\F$ if we stay at Level 1 on some $E \in \F$; \\ \ \\
- we stay (partially) at Level 2 on $\F$ if we stay at Level 2 on some $E \in \F$. \\ \ \\
The analysis of de Finetti of the three levels of knowledge could  be extended to a conditional event $A|H$ by considering the partition $\pi=\{AH, \no{A}H, \no{H}\}$, as  illustrated below:\\ 
- we stay at Level 0 on  $A|H$ when we stay at Level 0 on $\pi$, in which case we know that $A|H$ is true, or we know that it is false, or we know that it is void. \\ \ \\
- we stay at Level 1 on $A|H$ when we stay at Level 1 on $\pi$; that is, we are uncertain, among the three possible cases, $AH, \no{A}H, \no{H}$, as to which is the true one. \\ \ \\
- when we represent uncertainty by probabilities, we can move to Level 2 in different ways depending on the uncertainty we would like to measure by degrees of belief; 
in particular,  we  are at Level 2 on $A|H$
if we assign   $P(A|H)$ to $A|H$, which measures the {\em conditional uncertainty} of $AH$ with respect to the two alternatives $AH$, or $\no{A}H$. Indeed, 
\[
P(A|H)=P(AH|H)=P[AH \, | \,( AH \vee \no{A}H)] \,.
\]
Notice that  a similar comment applies to the  unconditional event $A$ because $P(A)=P(A|\Omega)
=P(A|(A\vee \no{A})).$\\
We observe that, if  we represent our uncertainty about $AH$ and $\no{A}H$  by $P(AH)$ and $P(\no{A}H)$, then $P(A|H)$ is uniquely determined, when $P(AH)+P(\no{A}H)>0$, by the formula
\[
P(A|H)\;=\;P[AH \, | (\, AH \vee \no{A}H)]\;=\;\frac{P(AH)}{P(AH)+P(\no{A}H)} \,.
\]
Notice that 
we need a direct evaluation of $P(A|H)$ when $P(AH)=P(\no{A}H)=0$,  because in this case $P(A|H)$ is  not uniquely determined. But in this case, we can use the Ramsey test as extended by Stalnaker
(\cite{ramsey94,stalnaker68}). In this version of the test, we evaluate a conditional, \emph{if $H$ then $A$}, by hypothetically supposing $H$, while making minimal changes to preserve consistency in our beliefs, and then we judge our degree of belief in $A$ under this supposition of $H$. 
Similar analyses apply for  representing other conditional uncertainties. \\ 

%The analysis of de Finetti on the three levels of knowledge could be extended to a conditional event $A|H$, as briefly illustrated below:\\ 
%- we  stay at Level 0 on $A|H$  when we know that $A|H$ is true (i.e., $AH$ is true), or we know that it is false (i.e., $\no{A}H$ is true), or we know it is void (i.e., $\no{H}$ is true). \\ 
%- we stay at Level 1 on $A|H$ when we are uncertain among the three possible cases, $AH, \no{A}H, \no{H}$, as to which is the true one; \\ 

By coming back to the indicative conditional IC,  at Level 1 we can express some uncertainty, and this is   what at Level 2 the  numerical value  $v=P(A|H)$ can be taken to represent. When the coin is not spun, we are uncertain whether it would have landed heads. At Level 2, we can refine our uncertainty into different degrees of belief, i.e., of subjective probability, and at this level, we can make precise bets on outcomes (see
\cite{BPOT18,OvBa16}
for more on these three levels). For example, we may know that the coin has a worn edge on one side and a certain tendency to come up heads. At this level of de Finetti's analysis, $v$ is what we are willing to pay, in money or epistemic utility, in a fair bet, which will pay us 1 unit, of money or epistemic utility, when $AB$  holds,  and  0 units, when $A \no{B}$ holds, and which will be returned to us when $\no{A}$ holds. In the simplest case, we may believe that the coin is fair, and then we will pay, 0.5 of a Euro, 50 cents, to win 1 Euro provided that the coin is spun and comes up heads. We will receive 0 Euros when the coin is spun and comes up tails, and we will get our 50 cents back when the coin is not spun. 

We can derive what $v$ is from the expected value, or prevision, of the conditional bet when this bet is fair, i.e., its expected value is 0. We have 
% That $v$ is returned to us in the last $\no{A}$ case at Level 2 justifies calling it the "void" outcome at Level 1. We can derive, at Level 2, what $v$ is from the expected value of the fair bet, where $P(X)$ is the subjective probability of $X$:
\[
0 = P(A\wedge B)(1 - v) + P(A \wedge\no{B})(0-v) + P(\no{A})(v-v),
\]
or equivalently: $v = P(A\wedge  B)+ P(\no{A})v$, and so  $P(A)v = P(A\wedge  B)$ and 
$v = P(A\wedge B)/P(A)$ when $P(A)>0$. 

We see, then, that $v$ is the conditional probability of $B$ given $A$, $P(B|A)$, and we identify $v$ with $P(B|A)$ for both IC and CB. Further aspects will be examined in Section \ref{SUBSEC:CONDBETCOND}.

Our full de Finetti analysis, at Level 2, is like an ``interval-valued" account, because $P(B|A)$ can have any value from 0 to 1. If we believe the coin is double-headed, we will say $P(B|A) = 1$. If we think that the coin is slightly biased to heads, we may judge $P(B|A) = 0.55$.

There are certainly reasons to study the partial three-valued analysis at Level 1. It is important, for instance, to study uncertainty in the psychology of reasoning, without presupposing that this state of mind can always be made more precise with probability judgments. People could sometimes just be uncertain about $X$ and unable to refine this to a probability judgment, $P(X)$. Nevertheless, there are many advantages in a full Level 2 de Finetti analysis. One is that a logical truth, such as most simply \emph{if   $A$  then $A$}, is never merely labelled ``void'' at Level 2, but always has the value 1, since $P(A|A) = 1$. Another advantage is that the full analysis allows us to define a logically valid inference in a more intuitive and natural way than can be done at Level 1, as we will point out in the formal development below, where we compare different definitions of logical validity (see further \cite{Cruz20}, on logically valid inference). Yet another advantage is that we can fully clarify the status of the Probabilistic Deduction Theorem in a de Finetti analysis, by exploiting compound conditionals, as we will also explain below.  Notice that in our approach compound conditionals, such as conjunctions and disjunctions, are defined (not as three-valued objects, but) as suitable conditional random quantities, where some of their values are (coherent) probability values. 
For instance, given two conditional events $A|H$ and $B|K$, if (for a given individual) $P(A|H)=x$ and $P(B|K)=y$, then the  possible values of the conjunction $(A|H)\wedge(B|K)$ are: $1,0,x,y,z$, where $z$ is (for the same individual) the prevision of $(A|H)\wedge(B|K)$. 
Then, we directly define compound conditionals at Level 2. In what follows this aspect will be  implied, and  it will be clear   from the context at what level we are examining the  objects.

A related point, and yet another advantage of our approach, is that we can clarify the status of the Probabilistic Deduction Theorem at Level 2. The deduction theorem is a fundamental metalogical theorem in classical logic \footnote{The presence of the deduction theorem allows for proofs in logic which are ``much more natural, simple and convenient'' compared to its absence \cite[p. 80]{surma81}.
Historically, the distinction between object- and meta-language was introduced much later, the admissibility of the deduction theorem, however,  was already ``taken for granted by Aristotle and explicitly by the Stoics'' \cite[p.320]{kneale84}. According to Kleene, ``the deduction theorem as an informal theorem proved about particular systems like the propositional calculus and the predicate calculus [\dots] first appears explicitly in Herbrand 1930 [\emph{Recherches sur la th\'eorie de la d\'emonstration}] (and without proof in Herbrand 1928 [\emph{Sur la th\'eorie de la d\'emonstration}]); and as a general methodological principle for axiomatic-deductive systems in Tarski 1930 [\emph{\"Uber einige fundamentale Begriffe der Metamathematik}]. According to Tarski 1956 [\emph{Logic, semantics, metamathematics}] footnote to p. 32, it was known and applied by Tarski since 1921'' \cite[p. 39]{kleene02}. Surma (\cite{surma81}) traces the deduction theorem back to Bolzano's \emph{Wissenschaftslehre} (1837). Bolzano's discovery was  rediscovered by Tarski in the 1920ies. Surma  also mentions that the  deduction theorem's name was coined by David Hilbert \cite[p. 79]{surma81}. 
}. It explains the relation between the material conditional, $\no{A}\vee B$, and logical consequence. 
More formally, in classical propositional logic, the (full) deduction theorem states that 
the premise set 
\textbf{$\Gamma\cup \{A\}$}, where $\Gamma$ is a set of  propositional formulas, logically implies the conclusion
$B$ if and only if 
$\Gamma$ implies  the material conditional 
$\no{A}\vee B$ (see, e.g., \cite{Franks2021}).
 However, the  deduction theorem is not generally valid in our probability logic, where $\Gamma$ is a (p-consistent) set of conditional events, the consequence relation is based on p-entailment ($\Rightarrow_p$) and the material conditional $\no{A}\vee B$ is replaced by the  conditional event  $B|A$. Some closely related metalogical theorems for the material conditional, particularly monotonicity, contraposition, and transitivity, usually do not hold in probability logic for the conditional event. In this paper, we will investigate the conditions under which a (restricted) version of the deduction theorem, the Probabilistic Deduction Theorem, holds. We will also compare our approach with other proposals for a probabilistic logic and what these imply about the deduction theorem.  We remark that the contents of our paper are relevant to AI for several motivations: 
\begin{itemize}
    \item   we investigate the interpretation and evaluation of simple  and compound conditionals, which  is a key challenge in artificial intelligence;
 \item  our probabilistic coherence-based approach is more realistic as it allows for considering  only those conditional events which are of interest (and hence algebraic structures are not required for the families of conditional events);
 \item we can properly manage conditionals with conditioning events evaluated by zero probabilities;
 \item  our approach goes beyond trivalent logics, because our compound and nested conditionals are defined as conditional random quantities in the setting of coherence: as a consequence all the basic logical and probabilistic properties are preserved;
\item  we give probabilistic versions of the deduction theorem and we introduce a General Import-Export
principle.
\end{itemize} 
 
The paper is organized in the following way. In Section \ref{SEC:PRELIM}, we recall some results for conditional events, compound conditionals, conditional random quantities, p-consistency, and p-entailment. In Section 
\ref{SEC:TRIVLOG}, we firstly show the equivalence between a bet on a conditional event and a conditional bet, and secondly review the trivalent de Finetti  analysis of conditionals. We  examine two recent articles that explore trivalent logics for conditionals and their definitions of logical validity and compare them with our approach to compound conditionals. In Section
\ref{SEC:PROBDEDTHM}, we  show that the  deduction theorem does not follow for the conditional events when using p-entailment, and then we obtain some probabilistic versions of the deduction theorem, with further results and examples. 
In Section \ref{SEC:IMPEXP}
we first focus on iterated conditionals and the invalidity of the Import-Export principle for the conditional event. In particular, we consider the 
inference from a disjunction, \emph{$A$ or $B$}, to a conditional, \emph{if not-$A$ then $B$}, as an example  and explain how the invalidity of the classical deduction theorem is related to the invalidity of the Import-Export principle for the conditional event.
Then,  we introduce  a \emph{General Import-Export principle} 
in relation to iterated conditionals
 and we give a result which relates it to p-consistency and p-entailment. We also illustrate the validity of the General Import-Export principle in some inference rules of System P
where the  probabilistic deduction theorem is not appliable.  In Section \ref{EQ:FURTHER} we give some further comments on the p-entailment from a family  of conditional events $\F$ to a conditional event $E|H$ and the p-entailment from $\F\cup\{H\}$ to the event $E$. 
In Section \ref{SEC:RW} we briefly compare our approach to some related work on conditionals.
Finally, in Section \ref{SEC:CONCL}  we give some concluding remarks.

\section{Preliminary notions and results}
\label{SEC:PRELIM}
Uncertainty about real facts will be formalized here by judgments about events. 
In formal terms, an event $A$  is a two-valued logical entity:  \emph{true}, or \emph{false}. 
The \emph{indicator} of $A$, denoted by the same symbol, is  1, or 0, according to  whether $A$ is true, or false, respectively. 
We  denote by
$\Omega$ the sure event and by $\emptyset$ the impossible one.
We denote by $A\land B$ (resp., $A\vee B$), or simply by $AB$, the  conjunction (resp., disjunction) of $A$ and $B$. By $\no{A}$ we denote the negation of $A$.  Given two events $A$ and $B$ we say that $A$ logically implies $B$, denoted by   $A \subseteq B$, when $A\no{B}=\emptyset$, or equivalently  $\no{A}\vee B=\Omega$.  Notice that the symbol $\subseteq$ will be also used to denote the inclusion relation between two sets. Therefore the interpretation of  $\subseteq$ will be context-dependent.
Moreover, we use the symbol $\Longleftrightarrow$ to denote the logical equivalence {\em if and only if}.

Given two events $A$ and $H$, with $H \neq \emptyset$, the conditional event $A|H$  is a three-valued logical entity which is \emph{true}, or
\emph{false}, or \emph{void}, according to whether $AH$ is true, or
$\no{A}H$ is true, or $\no{H}$ is true, respectively. The negation $\no{A|H}$ of $A|H$ is defined as $\no{A}|H$.

In the betting framework, to assess $P(A|H)=x$ amounts to saying that, for every real number $s$,  you are willing to pay 
an amount $s\,x$ and to receive $s$, or 0, or $s\, x$, according
to whether $AH$ is true, or $\no{A}H$ is true, or $\no{H}$
is true (in this case the bet is called off), respectively. Hence, for the random gain $G=sH(A-x)$, the possible values are $s(1-x)$, or $-s\,x$, or $0$, according
to whether $AH$ is true, or $\no{A}H$ is true, or $\no{H}$
is true, respectively. \\
We denote by $X$ a \emph{random quantity}, that is  an 
uncertain real quantity,  which has a well determined but unknown value, and we use the symbol $\prev$ for the prevision. 
We assume that  $X$ has a finite set of possible values. Given any event $H\neq \emptyset$,  by the betting analysis, if you  assess that the prevision of $``X$ {\em conditional on} $H$'' (or short:  $``X$ {\em given} $H$''), $\pr(X|H)$, is equal to $\mu$, this means that for any given  real number $s$ you are willing to pay an amount $s\mu$ and to receive  $sX$, or $s\mu$, according  to whether $H$ is true, or  false (bet  called off), respectively. 
We recall that the \emph{coherence} of a conditional prevision assessment on  a  family of conditional random quantities means  that \emph{in any finite combination of $n$ bets,  after discarding the case where all the bets are called off, it cannot happen that the values  of the random gain  are all positive, or all negative}.  If you are incoherent a Dutch book can be made against you, i.e. there exists a finite combination of $n$ bets, ensuring that you suffer  a loss in all the cases where at least a bet is not called off.

In particular, when $X$ is (the indicator of) an event $A$, then $\prev(X|H)=P(A|H)$.
Given a conditional event $A|H$
with  $P(A|H) = x$,
the indicator of $A|H$, denoted by the same symbol, is
\begin{equation}\label{EQ:AgH}
	A|H=
	AH+x \no{H}=AH+x (1-H)=\left\{\begin{array}{ll}
		1, &\mbox{if $AH$ is true,}\\
		0, &\mbox{if $\no{A}H$ is true,}\\
		x, &\mbox{if $\no{H}$ is true.}\\
	\end{array}
	\right.
\end{equation} 
Notice that  $\prev(AH + x\no{H})=xP(H)+xP(\no{H})=x$. 
The third  value of the random quantity  $A|H$  (subjectively) depends on the assessed probability  $P(A|H)=x$. 
When $H\subseteq A$ (i.e., $AH=H$), it holds that $P(A|H)=1$; then,
for the indicator $A|H$ it holds that 
\begin{equation}\label{EQ:AgH=1}
	A|H=AH+x\no{H}=H+\no{H}=1, \;\; (\mbox{when }
	H\subseteq A). 
\end{equation}
Likewise, if $AH=\emptyset$ (and $H\neq \emptyset$), it holds that $P(A|H)=0$; then
\begin{equation}\label{EQ:AgHzero}
A|H=0+0\no{H}=0, \;\; (\mbox{when }
AH=\emptyset). 
\end{equation}
For the indicator of the negation  of $A|H$ it holds that  $\no{A}|H=1-A|H$.
Given a random quantity $X$ and an event $H \neq \emptyset$, 
with a conditional prevision assessment $\prev(X|H) = \mu$, in our approach, likewise formula (\ref{EQ:AgH}), the conditional random quantity $X|H$ is defined as 
\begin{equation}\label{EQ:XgH}
	X|H=XH+\mu\no{H}.
\end{equation}
Notice that  the prevision of the conditional random quantity  $X|H$ coincides with the conditional prevision $\mu$, indeed
\begin{equation}\label{EQ:PREVXgH}
	\prev(XH + \mu\no{H})=\prev(XH)+\mu P(\no{H})=\mu P(H)+\mu P(\no{H})=\mu.
\end{equation}
For a discussion on this extended notion of a conditional random quantity and on the notion of coherence of a conditional prevision assessment see, e.g., \cite{GiSa14,GiSa20,SGOP20}. 

\begin{remark}\label{REM:CONST} Given a random quantity $X$ and an event $H \neq \emptyset$,
	if $XH=cH$ for a suitable constant $c$,  then $\prev(X|H)=c$ and hence  $X|H=cH+c\no{H}=c$. In particular, for any event $A\neq \emptyset$, $A|A=1$ and $\no{A}|A=0$.
\end{remark}

\subsection{Compound conditionals}
We recall below the notion of conjunction of two conditional events (\cite{GiSa14}).
\begin{definition}\label{CONJUNCTION}
	Given a coherent probability assessment $P(A|H)=x, P(B|K)=y$, the conjunction of  $A|H$ and $B|K$ is defined as 
	\[
	(A|H) \wedge (B|K) = \left\{
	\begin{array}{ll}
		1, & \mbox{if $AHBK$ is true,} \\
		0, & \mbox{if $\no{A}H \vee \no{B}K$ is true,} \\
		x, & \mbox{if $\no{H}BK$ is true,} \\
		y, & \mbox{if $AH\no{K}$ is true,} \\
		z, & \mbox{if $\no{H}\no{K}$ is true,} \\
	\end{array}
	\right.
	\]
	that is,
	\begin{equation}\label{EQ:CONJRQ}
		(A|H) \wedge (B|K) = AHBK + x \no{H}BK + y AH\no{K} + z \no{H}\no{K} \,,
	\end{equation}
	where $z$ is the prevision of $(A|H) \wedge (B|K)$.
\end{definition}
We require that  $(x,y,z)$ be coherent.   Notice that, differently from conditional events which are three-valued objects, the conjunction $(A|H) \wedge (B|K)$
is no longer a three-valued object, but  a five-valued object with values in $[0,1]$.
As for the conditional event, the values $1$, $0$, and  $z$, in a conditional bet where you pay $z$, correspond to  the cases \emph{win}, \emph{lose}, and \emph{money back}, when both conjuncts are true, at least one is false, or both conjuncts are void, respectively. The additional two values $x$ and $y$ stem from the fact that we need also to consider the two cases of \emph{partial win}, where one conjunct is true and the other one is void. 
In these last two cases, one can also say  that  the conjunction  is \emph{partially true} (\cite{Cantwell22}).

As it will be shown in Remark (\ref{REM:CONGCRQ}), the conjunction $(A|H) \wedge (B|K)$ coincides with the conditional random quantity $(AHBK + x \no{H}BK + y AH\no{K})|(H\vee K)$. 
We recall below the  notion of conjunction  of $n$ conditional events.
\begin{definition}\label{DEF:CONGn}	
	Let  $n$ conditional events $E_1|H_1,\ldots,E_n|H_n$ be given.
	For each  non-empty strict subset $S$  of $\{1,\ldots,n\}$,  let $x_{S}$ be a prevision assessment on $\bigwedge_{i\in S} (E_i|H_i)$.
	Then, the conjunction  $(E_1|H_1) \wedge \cdots \wedge (E_n|H_n)$ is the conditional random quantity $\C_{1\cdots n}$ defined as
	\begin{equation}\label{EQ:CF}
		\begin{array}{lll}
			\C_{1\cdots n}
			=\left\{
			\begin{array}{llll}
				1, &\mbox{ if } \bigwedge_{i=1}^n E_iH_i\, \mbox{ is true,} \\
				0, &\mbox{ if } \bigvee_{i=1}^n \no{E}_iH_i\, \mbox{ is true}, \\
				x_{S}, &\mbox{ if } (\bigwedge_{i\in S} \no{H}_i)\wedge(\bigwedge_{i\notin S} E_i{H}_i)\, \mbox{ is true}, \; \emptyset \neq S\subset \{1,2\ldots,n\},\\
				x_{1\cdots n}, &\mbox{ if } \bigwedge_{i=1}^n \no{H}_i \mbox{ is true},
			\end{array}
			\right.
		\end{array}
	\end{equation}
	where $x_{1\cdots n}=x_{\{1,\ldots, n\}}=\prev(\C_{1\cdots n})$.
	
\end{definition}
For  $n=1$ we obtain $\C_1=E_1|H_1$.  In  Definition \ref{DEF:CONGn}  each possible value $x_S$ of $\C_{1\cdots n}$,  $\emptyset\neq  S\subset \{1,\ldots,n\}$, is evaluated  when defining (in a previous step) the conjunction $\C_{S}=\bigwedge_{i\in S} (E_i|H_i)$. 
Then, after the conditional prevision $x_{1\cdots n}$ is evaluated, $\C_{1\cdots n}$ is completely specified. Of course, 
we require coherence for  the prevision assessment $(x_{S}, \emptyset\neq  S\subseteq \{1,\ldots,n\})$, so that $\C_{1\cdots n}\in[0,1]$.
In the framework of the betting scheme, $x_{1\cdots n}$ is the amount that you agree to pay with the proviso that you will receive:\\
- the amount $1$, if all conditional events are true;\\
- the amount  $0$, if at least one of the conditional events is false; \\
- the amount $x_S$ equal to the prevision of the conjunction of that conditional events which are void,  otherwise. In particular you receive back $x_{1\cdots n}$ when all  conditional events are void.\\
As we can see from (\ref{EQ:CF}), the conjunction $\C_{1\cdots n}$ assumes values in the interval $[0,1]$ and is (in general) a $(2^n+1)$-valued object because the number of nonempty subsets $S$, and hence the number of possible values $x_S$,  is $2^n-1$. 
\begin{remark}
	It can be verified that 
	\begin{equation}\label{EQ:PREVCN}
		\begin{array}{ll}
			x_{1\cdots n}=\prev[(\bigwedge_{i=1}^n E_iH_i+\sum_{\emptyset \neq S\subset \{1,2\ldots,n\}}x_{S}(\bigwedge_{i\in S} \no{H}_i)\wedge(\bigwedge_{i\notin S} E_i{H}_i))|(\bigvee_{i=1}^n H_i)]
		\end{array}
	\end{equation}
	and
	\[
	\C_{1\cdots n}=[\bigwedge_{i=1}^n E_iH_i+\sum_{\emptyset \neq S\subset \{1,2\ldots,n\}}x_{S}(\bigwedge_{i\in S} \no{H}_i)\wedge(\bigwedge_{i\notin S} E_i{H}_i)]|(\bigvee_{i=1}^n H_i).
	\]
	A similar comment can be done for  $x_S$ and $\C_S$, for each non empty subset $S\subset \{1,2\ldots,n\}$. In particular
	\begin{equation}
	(A|H)\wedge (B|K)=(AHBK + x \no{H}BK + y AH\no{K})|(H\vee K)=AHBK|(H\vee K)+  x \no{H}BK|(H\vee K) + y AH\no{K}|(H\vee K),
	\end{equation}
where $x=P(A|H)$, $y=P(B|K)$.
Then,
	\begin{equation}\label{EQ:PCONG}
	\prev[(A|H)\wedge (B|K)]=P[AHBK|(H\vee K)]+  P(A|H)P[\no{H}BK|(H\vee K)] + P(B|K) P[AH\no{K}|(H\vee K)].
	\end{equation}	
Notice that, when $P(H\vee K)>0$, formula (\ref{EQ:PCONG}) becomes the well known formula of McGee  (\cite{McGe89}) and Kaufmann (\cite{Kauf09})
\[
	\prev[(A|H)\wedge (B|K)]=	\frac{P(AHBK)+  P(A|H) P(\no{H}BK) + P(B|K) P(AH\no{K})}{P(H\vee K)}.
\]
A critical examination of claimed counterexamples to this notion of the conjunction of conditionals is given in \cite{Cantwell22}.
\end{remark}
We recall a result which shows that the prevision of the  conjunction on $n$  conditional events satisfies the Fréchet-Hoeffding bounds (\cite[Theorem13]{GiSa19}).
\begin{theorem}\label{THM:TEOREMAAI13}
	Let  $n$ conditional events $E_1|H_1,\ldots,E_{n}|H_{n}$ be given, with $x_i=P(E_i|H_i)$, $i=1,\ldots, n$  and $x_{1\cdots n}=\prev(\C_{1 \cdots n })$. Then
	\[
	\max\{x_1+\cdots+x_{n}-n+1,0\}
	\,\,\leq \,\, x_{1\cdots n} \,\,\leq\,\, \min\{x_1,\ldots,x_n\}.
	\]
\end{theorem}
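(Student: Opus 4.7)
My plan is strong induction on $n$, using coherence (Dutch book) arguments at the inductive step. The case $n=1$ is trivial, since $\C_1 = E_1|H_1$ reduces the bounds to $x_1 \le x_1 \le x_1$. For the inductive step, assume both Fréchet--Hoeffding bounds for every non-empty proper subset $S \subsetneq \{1,\ldots,n\}$, i.e. $\max\{\sum_{i\in S} x_i - |S| + 1,\, 0\} \le x_S \le \min_{i \in S} x_i$, and derive both bounds for $x_{1\cdots n}$ by contradicting coherence under the negated hypothesis. In both sub-arguments I will work with the atoms of the partition generated by $\{E_i, H_i\}_{i=1}^n$, for which the value of $\C_{1\cdots n}$ is read off from formula~(\ref{EQ:CF}).

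For the upper bound $x_{1\cdots n} \le x_j$, suppose $x_{1\cdots n} > x_j$ and combine a sale of $\C_{1\cdots n}$ with a purchase of $E_j|H_j$, each of unit stake, giving random gain $G = (x_{1\cdots n} - \C_{1\cdots n}) + (E_j|H_j - x_j)$. The combination is fully called off precisely on $\bigwedge_i \no{H}_i$. On $\bigwedge_i E_iH_i$ and on atoms with some $\no{E}_kH_k$ true, $\C_{1\cdots n} \in \{0,1\}$ and one checks directly that $G \ge x_{1\cdots n} - x_j > 0$. The essential case is an atom of the form $\bigwedge_{i\in S}\no{H}_i\,\bigwedge_{i\notin S}E_iH_i$ with $\emptyset \ne S \subsetneq \{1,\ldots,n\}$, where $\C_{1\cdots n} = x_S$: if $j \in S$ the bet on $E_j|H_j$ is called off and $G = x_{1\cdots n} - x_S \ge x_{1\cdots n} - x_j > 0$ by the inductive upper bound on $x_S$; if $j \notin S$ then $E_j|H_j = 1$ and $G = x_{1\cdots n} - x_S + 1 - x_j \ge x_{1\cdots n} - x_j > 0$ since $x_S \le 1$. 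Hence $G > 0$ on every non-called-off atom, contradicting coherence.

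For the lower bound, $x_{1\cdots n} \ge 0$ is immediate from $\C_{1\cdots n} \ge 0$. To obtain $x_{1\cdots n} \ge \sum_i x_i - n + 1$, suppose the reverse strict inequality and combine a purchase of $\C_{1\cdots n}$ with purchases of each $\no{E}_i|H_i$, all of unit stake, yielding $G = \C_{1\cdots n} + \sum_i \no{E}_i|H_i - \bigl(x_{1\cdots n} + n - \sum_i x_i\bigr)$. The combination is wholly called off precisely on $\bigwedge_i \no{H}_i$. On every other atom I would verify the key inequality $\C_{1\cdots n} + \sum_i \no{E}_i|H_i \ge 1$: on $\bigwedge_i E_iH_i$ the sum is exactly $1$; on any atom with some $\no{E}_kH_k$ true it is at least $\no{E}_k|H_k = 1$; and on an atom $\bigwedge_{i\in S}\no{H}_i\,\bigwedge_{i\notin S}E_iH_i$ the sum equals $x_S + |S| - \sum_{i\in S} x_i$, which is $\ge 1$ precisely by the inductive lower Fréchet--Hoeffding bound on $x_S$. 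Thus $G \ge \sum_i x_i - n + 1 - x_{1\cdots n} > 0$ on every non-all-called-off atom, again a Dutch book.

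The main obstacle is the analysis on atoms of the form $\bigwedge_{i\in S}\no{H}_i\,\bigwedge_{i\notin S}E_iH_i$: it is only there that the induction hypothesis enters non-trivially, and in the lower-bound argument one needs exactly the inductive lower bound on $x_S$ to secure $x_S + |S| - \sum_{i\in S} x_i \ge 1$. The secondary subtlety is book-keeping the called-off conditions of the constituent bets (one for $\C_{1\cdots n}$, one or $n$ for the individual conditional events) and checking that their joint called-off event collapses to $\bigwedge_i \no{H}_i$, so that the Dutch book is genuine.
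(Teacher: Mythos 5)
Your argument is correct. Note that the paper does not prove this statement itself: it is recalled from \cite[Theorem 13]{GiSa19}, and the argument there is organized around pointwise inequalities between conditional random quantities --- essentially the monotonicity property recalled in Remark~\ref{REM:CONGCONG}, which gives $\C_{1\cdots n}\leq E_j|H_j$, and an analogous superadditivity-type lower inequality --- after which the prevision bounds follow from coherence (monotonicity and linearity of coherent previsions), again by induction over the subsets $S$. Your proposal has exactly the same combinatorial core: on an atom $(\bigwedge_{i\in S}\no{H}_i)\wedge(\bigwedge_{i\notin S}E_iH_i)$ the conjunction takes the value $x_S$, and the inequalities you need there, $x_S\leq\min_{i\in S}x_i$ and $x_S\geq\sum_{i\in S}x_i-|S|+1$, are precisely the induction hypothesis; what you do differently is to convert the violation of each bound directly into an explicit Dutch book (sell $\C_{1\cdots n}$ and buy $E_j|H_j$ for the upper bound; buy $\C_{1\cdots n}$ and each $\no{E}_i|H_i$ for the lower bound), checking that the only constituent on which all component bets are called off is $\bigwedge_i\no{H}_i$, so the uniformly positive gain genuinely contradicts the coherence condition stated in Section~\ref{SEC:PRELIM}. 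This buys a self-contained, betting-level verification that does not presuppose general properties of coherent previsions beyond the definition, at the cost of the atom-by-atom bookkeeping you carry out. Two hypotheses you use tacitly and should state: (i) the restriction of the coherent assessment $(x_T,\ \emptyset\neq T\subseteq\{1,\ldots,n\})$ to the subfamily indexed by $S$ is again coherent, which is what legitimizes invoking the induction hypothesis for $x_S$; and (ii) $x_S\in[0,1]$ (used when you bound $x_{1\cdots n}-x_S+1-x_j$), which the paper grants as a consequence of requiring coherence in Definition~\ref{DEF:CONGn}. With these made explicit, both gambles are sound and the proof is complete.
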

In \cite[Theorem 10]{GiSa21} we have shown, under logical independence, the sharpness of the Fréchet-Hoeffding bounds.
\begin{remark}\label{REM:CONGCONG}
	Given a finite family  $\F$ of  conditional events,  their conjunction is also  denoted by $\C(\F)$. We recall that in \cite{GiSa19}, given  two finite families of conditional events $\F_1$ and $\F_2$,  the object $\C(\F_1) \wedge  \C(\F_2)$ is 
	defined as $\C(\F_1\cup \F_2)$. Then, conjunction satisfies the commutativity and associativity properties (\cite[Propositions 1 and 2]{GiSa19}). Moreover,  the operation of conjunction satisfies the monotonicity property  (\cite[Theorem7]{GiSa19}), that is
	$
	\C_{1\cdots n+1}\leq \C_{1\cdots n}.$
	Then,
	\begin{equation}\label{EQ:MONOTGEN}
		\C(\mathcal{F}_1\cup \mathcal{F}_2)\leq \C(\mathcal{F}_1),\;\;\C(\mathcal{F}_1\cup \mathcal{F}_2)\leq \C(\mathcal{F}_2).     
	\end{equation}
\end{remark}
\noindent \emph{Iterated conditioning.}
We now recall the notion of iterated conditional given in \cite{GiSa13a} (see also \cite{GiSa14}). 
Such notion has the structure 
\def\mcirc{\mathbin{\scalerel*{\bigcirc}{t}}}
$ \Box | \mcirc= \Box\wedge \mcirc +\prev(\Box|\mcirc)\no{\mcirc}$, 
where $\prev$ denotes the prevision,
which reduces to formula (\ref{EQ:AgH})  when $\Box=A$ and $\mcirc=H$.
\begin{definition}[Iterated conditioning]
	\label{DEF:ITER-COND} Given any pair of conditional events $A|H$ and $B|K$, with $AH\neq \emptyset$, the iterated
	conditional $(B|K)|(A|H)$ is defined as 
	\begin{equation}\label{EQ:ITER-COND}
		(B|K)|(A|H) = (B|K) \wedge (A|H) + \mu \no{A}|H,
	\end{equation}
	where
	$\mu =\mathbb{P}[(B|K)|(A|H)]$.
\end{definition}
\begin{remark}\label{REM:A|H=0}
	Notice that we assumed that $AH\neq \emptyset$ to give a nontrivial meaning to the notion of iterated conditional. Indeed,   if  $AH$ were equal to $\emptyset$,   then  $A|H=(B|K) \wedge (A|H)=0$ and  $\no{A}|H =1$, from which it  would follow  
	$(B|K)| (A|H)=(B|K)|0=(B|K) \wedge (A|H) + \mu \no{A}|H=\mu$; that is, $(B|K)| (A|H)$ 
	would coincide with the (indeterminate) value $\mu$. Similarly to the case of a conditional event $A|H$, which is of no interest when $H=\emptyset$ (in numerical terms $H=0$), the iterated conditional $(B|K)| (A|H)$ is not considered  in our approach when $AH=\emptyset$ (and $H\neq \emptyset$),  in which case, from (\ref{EQ:AgHzero}),  $A|H$ is constant and coincides with 0. Of course,  we do not consider the iterated conditional also when $H=\emptyset$, because in this case $A|H$ is not defined.
\end{remark}
Definition \ref{DEF:ITER-COND} has been generalized in \cite{GiSa19} (see also \cite{GiSa21E}) to the case where the antecedent is the conjunction of more than two conditional events.
\begin{definition}\label{DEF:GENITER}
	Let   $n+1$ conditional events $E_1|H_1, \ldots, E_{n+1}|H_{n+1}$ be given, with $\C_{1\cdots n}=(E_1|H_1) \wedge \cdots \wedge (E_n|H_n)\neq 0$. We denote by $(E_{n+1}|H_{n+1})|\C_{1\cdots n}$ the random quantity
	\[
	\begin{array}{ll}
		(E_1|H_1) \wedge \cdots \wedge (E_{n+1}|H_{n+1})  + \mu \, (1-(E_1|H_1) \wedge \cdots \wedge (E_n|H_n))= \\
		=\C_{1\cdots n+1}+\mu \, (1-\C_{1\cdots n}),
	\end{array}
	\]
	where $\mu = \prev[(E_{n+1}|H_{n+1})|\C_{1\cdots n}]$.
\end{definition}
We observe that, based on the betting analysis,  the quantity $\mu$ is the amount to be paid in order to receive the amount $\C_{1\cdots n+1}+\mu \, (1-\C_{1\cdots n})$. 
We also observe that, defining 	$\prev(\C_{1\cdots n})=x_{1\cdots n}$ and 
$\prev(\C_{1\cdots n+1})=x_{1\cdots n+1}$, by the linearity of prevision it holds that $\mu=x_{1\cdots n+1}+\mu \, (1-x_{1\cdots n})$; then, $x_{1\cdots n+1}=\mu \, x_{1\cdots n}$, that is (\emph{compound prevision theorem})
\begin{equation}\label{EQ:COMPPREVTHM}
	\prev(\C_{1\cdots n+1})=\prev[(E_{n+1}|H_{n+1})|\C_{1\cdots n}]\prev(\C_{1\cdots n}).
\end{equation} 
\subsection{Conditional random quantities and the notions of p-consistency and p-entailment}
We recall the notions of p-consistency and p-entailment for  conditional random quantities  which take values in a finite subset of $[0,1]$ (\cite{SPOG18}).

\begin{definition}\label{DEF:PC}
	Let $\mathcal{F}_n = \{X_i|H_i \, , \; i=1,\ldots,n\}$ be  a family of $n$  conditional random quantities which take values in a finite subset of $[0,1]$. Then, $\mathcal{F}_n$ is  {\em p-consistent} if and only if,
	the (prevision) assessment $(\mu_1,\mu_2,\ldots,\mu_n)=(1,1,\ldots,1)$ on $\mathcal{F}_n$ is coherent.
\end{definition}
\begin{definition}\label{DEF:PE}
	A p-consistent family  $\mathcal{F}_n = \{X_i|H_i \, , \; i=1,\ldots,n\}$ {\em p-entails} a conditional random quantity $X|H$ which takes values in a finite subset of $[0,1]$, denoted by $\mathcal{F}_n \; \Rightarrow_p \; X|H$, if and only if  for any  coherent (prevision) assessment $(\mu_1,\ldots,\mu_n,z)$ on $\mathcal{F}_n \cup \{X|H\}$: if $\mu_1=\cdots=\mu_n=1$, then  $z=1$.
\end{definition}
We say that the inference from a p-consistent family $\F_n$ to $X|H$ is \emph{p-valid} if and only if  $\mathcal{F}_n \; \Rightarrow_p X|H$.

\begin{remark}
	Notice that, if we consider conditional events instead of conditional random quantities, we 
	obtain the  notions of p-consistency, p-entailment,
	and p-validity given in the setting of coherence (see, e.g., \cite{biazzo02,gilio02,gilio10,gilio11ecsqaru,GiSa13IJAR}).
	We recall that  the notion of p-entailment in the setting of coherence (\cite[Definition 6]{gilio02}, \cite[Theorem 4.9]{biazzo02}) is 
	based on Adams's theory (\cite{adams75}).
	However, our analysis is different from Adams's in our treatment  of zero-probability antecedents. We recall that in his theory Adams by convention  defines $P(E|H)=1$ when $P(H)=0$. 
	From this convention, however, problematic consequences follow. For example, we would obtain that  the following condition
	\begin{equation}\label{EQ:CONTRAP}
		P(C|A)=1 \; \Longleftrightarrow \; P(\no{A}|\no{C})=1 
	\end{equation}
	is  satisfied, as shown below.\\
	$(\Rightarrow)$ If $P(C|A)=1$, it holds that $P(A)=P(C|A)P(A)=P(AC)$ and hence $P(A\no{C})=0$. Then, $P(\no{C})=P(\no{A}\no{C})$ and, in the case $P(\no{C})>0$, it  follows $P(\no{A}|\no{C})=\frac{P(\no{A}\,\no{C})}{P(\no{C})}=1$. In the case $P(\no{C})=0$, by the convention of Adams, it still follows $P(\no{A}|\no{C})=1$. \\
	$(\Leftarrow)$ If $P(\no{A}|\no{C})=1$, by a symmetrical reasoning, it  follows $P(C|A)=1$.
	
	 Adams could not accept Definition \ref{DEF:PE}, because his convention would then make Contraposition valid, and he knew that it should be invalid in a probabilistic approach. Note that, in the coherence-based analysis, the assessment  $(x,y)$ on $\{C|A, \no{A}|\no{C}\}$ is coherent, for every $(x,y) \in [0,1]^2$; thus, the condition (\ref{EQ:CONTRAP}) does not hold and hence Contraposition is not p-valid. 
	Of course, equation (\ref{EQ:CONTRAP})  is satisfied under the restriction $P(A)>0$ and $P(\no{C})>0$, that is under the assumption that the antecedents have positive probability. However,  the notion of p-entailment given in the setting of coherence does not include these restrictions and allows zero-probability  antecedents.
\end{remark}
We recall the quasi conjunction $QC(\F)$ of a family of conditional events $\mathcal{F}_n = \{E_1|H_1, \ldots, E_n|H_n\}$ is defined as the following conditional event
\begin{equation}
	QC(\F)=\bigwedge_{i=1}^n(\no{H}_i\vee E_iH_i)|\bigvee_{i=1}^n(H_i).
\end{equation}	
 A characterization of p-entailment by means of  the quasi conjunction is given below
 (\cite[Theorem 6]{GiSa13IJAR}, see also \cite[Definition 1]{DuPr94}).
\begin{theorem}\label{THM:ENTAIL-CS}{\rm
		Let be given a p-consistent family $\mathcal{F}_n = \{E_1|H_1, \ldots, E_n|H_n\}$ and a conditional event $E|H$. The following assertions are equivalent: \\
		1. $\mathcal{F}_n$ p-entails $E|H$; \\
		2. The assessment $\mathcal{P}=(1,\ldots,1,z)$ on $\mathcal{F}=\mathcal{F}_n \cup \{E|H\}$, where $P(E_i|H_i)=1,$ $i=1,\ldots,n, P(E|H)=z$, is coherent if and only if $z=1$; \\
		3. The assessment $\mathcal{P}=(1,\ldots,1,0)$ on $\mathcal{F}=\mathcal{F}_n \cup \{E|H\}$, where $P(E_i|H_i)=1,$ $i=1,\ldots,n, P(E|H)=0$, is not coherent; \\
		4. Either there exists a nonempty subset $\mathcal{S}$ of  $\mathcal{F}_n$  such that $\mathcal{QC}(\mathcal{S})$ implies $E|H$, or $H \subseteq E$. \\
		5.  There exists a nonempty subset  $\mathcal{S}$ of $\mathcal{F}_n$ such that $\mathcal{QC}(\mathcal{S})$ p-entails $E|H$.
}\end{theorem}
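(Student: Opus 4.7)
The plan is to prove the equivalence of the five assertions via a cyclic chain $(1) \Rightarrow (2) \Rightarrow (3) \Rightarrow (4) \Rightarrow (5) \Rightarrow (1)$, obtaining $(3) \Rightarrow (2)$ as a by-product of completing the loop. The two short implications $(1) \Rightarrow (2)$ and $(2) \Rightarrow (3)$ are essentially formal: by p-consistency of $\F_n$ the assessment $(1,\ldots,1,1)$ is coherent, so Definition \ref{DEF:PE} applied to the parametric extension $(1,\ldots,1,z)$ says exactly that $z=1$ is the only coherent value, which in particular rules out $z=0$. I emphasize that $(3) \Rightarrow (2)$ is not obtained directly: the fundamental theorem of coherence only guarantees that the set of coherent values of $z$ forms a closed interval $[\alpha,\beta] \subseteq [0,1]$, and the mere fact $\alpha > 0$ does not force $\alpha = \beta = 1$ without the additional combinatorial structure produced by the rest of the chain.

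The decisive step is $(3) \Rightarrow (4)$, which I would tackle by contraposition. Assume $H \not\subseteq E$ and that no $QC(\S)$ with nonempty $\S \subseteq \F_n$ logically implies $E|H$; the task is to construct a coherent extension with $z=0$, i.e., a probability assignment on the atoms generated by $\{E_i, H_i, E, H\}$ that makes every $P(E_i|H_i)=1$ while $P(E|H)=0$. The tool is the algorithmic characterization of coherence via the nested sequence of linear systems on atoms, which accommodates zero-probability conditioning events by descending through successive strata. The failure of $QC(\S) \subseteq E|H$ for every nonempty $\S$ furnishes, at each stratum, atoms that are outside the truth set of $E|H$ but compatible with the truth of the relevant quasi-conjunctions, and from these the required distribution is assembled; the assumption $H \not\subseteq E$ supplies the base case in which no conditioning event is active. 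This is the main technical obstacle of the theorem and is where an Adams-style argument, adapted to the coherence framework with its freedom to condition on zero-probability events, is essential.

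The remaining implications are smoother. For $(4) \Rightarrow (5)$: if $H \subseteq E$, formula (\ref{EQ:AgH=1}) gives $E|H = 1$ identically, so any nonempty $\S \subseteq \F_n$ trivially p-entails $E|H$; if instead $QC(\S)$ logically implies $E|H$, then $P(QC(\S))=1$ forces $P(E|H)=1$ by monotonicity of probability with respect to logical implication of conditional events, hence $QC(\S) \Rightarrow_p E|H$. For $(5) \Rightarrow (1)$: given $QC(\S) \Rightarrow_p E|H$ for some nonempty $\S \subseteq \F_n$, it suffices to verify $\F_n \Rightarrow_p QC(\S)$ and to chain the two p-entailments through Definition \ref{DEF:PE}. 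The first holds because $P(E_i|H_i)=1$ for each $i \in \S$ yields $P(\no{H}_i \vee E_i H_i) = P(\no{H}_i) + P(E_i|H_i) P(H_i) = 1$, and therefore $P\bigl(\bigwedge_{i \in \S}(\no{H}_i \vee E_i H_i)\bigr) = 1$, giving $P(QC(\S))=1$ and closing the loop.
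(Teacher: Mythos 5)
First, note that the paper does not prove this statement at all: Theorem \ref{THM:ENTAIL-CS} is \emph{recalled} from the literature (it is cited as Theorem 6 of \cite{GiSa13IJAR}, cf.\ also \cite{DuPr94}), so your proposal can only be measured against the standard proof given there. Your architecture $(1)\Rightarrow(2)\Rightarrow(3)\Rightarrow(4)\Rightarrow(5)\Rightarrow(1)$ is the natural one, and the easy links $(1)\Rightarrow(2)$, $(2)\Rightarrow(3)$ and $(4)\Rightarrow(5)$ are essentially fine (for $(4)\Rightarrow(5)$ you correctly use $E|H=1$ when $H\subseteq E$, and the monotonicity of coherent probability under the Goodman--Nguyen inclusion for the other case).

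The genuine gap is that the step $(3)\Rightarrow(4)$ --- which is the entire mathematical content of the theorem, namely the converse, Adams-style characterization --- is only announced, not carried out. Saying that the failure of $QC(\mathcal{S})\subseteq E|H$ for every nonempty $\mathcal{S}\subseteq\mathcal{F}_n$, together with $H\nsubseteq E$, ``furnishes at each stratum atoms from which the required distribution is assembled'' is precisely the combinatorial lemma that has to be proved: one must exhibit constituents, chosen coherently across the zero-layers and simultaneously for all $2^n-1$ subsets $\mathcal{S}$, on which a layered probability assignment gives every $P(E_i|H_i)=1$ while $P(E|H)=0$. Nothing in your sketch shows that such a simultaneous choice is possible, and this is exactly where the published proof does its real work. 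A second, smaller but real, flaw sits in $(5)\Rightarrow(1)$: the computation $P(\no{H}_i\vee E_iH_i)=1$ for $i\in\mathcal{S}$, hence $P\bigl(\bigwedge_{i\in\mathcal{S}}(\no{H}_i\vee E_iH_i)\bigr)=1$, does \emph{not} by itself give $P(QC(\mathcal{S}))=1$, because $QC(\mathcal{S})$ is conditioned on $\bigvee_{i\in\mathcal{S}}H_i$, and when this disjunction has probability zero the unconditional values leave $P(QC(\mathcal{S}))$ undetermined --- this is exactly the zero-probability-antecedent regime that the coherence framework is designed to include. To close that step you need the p-validity of the QAND rule (equivalently, the Lukasiewicz-t-norm lower bound for quasi conjunction, as in \cite{gilio02,gilio13}), not an unconditional-probability argument; with that, the chaining of the two p-entailments via the extension theorem is fine.
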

The next two results characterize p-consistency and p-entailment by exploiting the notion of conjunction
(\cite[Theorems 17 and 18 ]{GiSa19}).
\begin{theorem}\label{THM:PCC}
	A family of n conditional events $\F$  is p-consistent if and only if the prevision assessment $\prev[\C(\F)] = 1$ is coherent.
\end{theorem}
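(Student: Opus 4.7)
The plan is to prove the biconditional using the Fréchet--Hoeffding bounds of Theorem \ref{THM:TEOREMAAI13} as the main technical tool, together with the classical extension property of coherent (prevision) assessments (the fundamental theorem of coherence). Throughout, I would denote by $x_i = P(E_i|H_i)$ and by $x_{1\cdots n} = \prev[\C(\F)]$, so that the bounds read
\[
\max\{x_1+\cdots+x_n-n+1,\,0\} \,\leq\, x_{1\cdots n} \,\leq\, \min\{x_1,\ldots,x_n\}.
\]

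For the direction ($\Rightarrow$), assume $\F$ is p-consistent, i.e.\ the assessment $(1,\ldots,1)$ on $\F$ is coherent. By the fundamental theorem of coherence, this assessment admits a coherent extension to $\F \cup \{\C(\F)\}$, say $(1,\ldots,1,x_{1\cdots n})$. Plugging $x_1=\cdots=x_n=1$ into the Fréchet--Hoeffding inequalities forces both bounds to collapse to $1$, so necessarily $x_{1\cdots n}=1$. Hence the assessment $\prev[\C(\F)]=1$ is coherent.

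For the converse ($\Leftarrow$), assume $\prev[\C(\F)]=1$ is coherent. Again by the fundamental theorem of coherence, this single-component assessment extends to a coherent assessment $(x_1,\ldots,x_n,1)$ on $\F \cup \{\C(\F)\}$. The upper Fréchet--Hoeffding bound yields $1 = x_{1\cdots n} \leq \min\{x_1,\ldots,x_n\}$, and since each $x_i \in [0,1]$, we obtain $x_1=\cdots=x_n=1$. Restricting the coherent assessment $(1,\ldots,1,1)$ to $\F$, we conclude that $(1,\ldots,1)$ is coherent on $\F$, i.e.\ $\F$ is p-consistent.

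The argument is short once the Fréchet--Hoeffding bounds are in hand; the only delicate point is the appeal to the extension property of coherent assessments, which guarantees that we can always add $\C(\F)$ to the family being assessed without losing coherence. I do not anticipate a serious obstacle: the bounds deterministically pin down the prevision of $\C(\F)$ precisely at the extremes $x_i=1$, which is exactly what makes the equivalence work in both directions. An alternative route would be to go through Theorem \ref{THM:ENTAIL-CS} and the quasi conjunction $QC(\F)$, but the Fréchet--Hoeffding route is more direct.
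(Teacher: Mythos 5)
Your argument is correct, and it is essentially the standard one: the paper itself does not prove this statement but recalls it from \cite[Theorem 17]{GiSa19}, where the characterization rests on exactly the ingredients you use, namely the Fr\'echet--Hoeffding bounds of Theorem \ref{THM:TEOREMAAI13} together with the extension (fundamental) theorem for coherent conditional prevision assessments. The only point worth tightening is that $\C(\F)$ is not a fixed object independent of the assessment: by Definition \ref{DEF:CONGn} it is specified only once coherent previsions $x_S$ have been chosen for \emph{all} nonempty $S\subset\{1,\dots,n\}$, so your single-step ``extend to $\F\cup\{\C(\F)\}$'' should really be read as an iterated extension over the sub-conjunctions; moreover, in the converse direction the values $x_1,\dots,x_n$ are already built into the coherent assessment defining $\C(\F)$, so no extension is actually needed there --- the upper bound $1=x_{1\cdots n}\leq\min\{x_1,\ldots,x_n\}$ (equivalently, the monotonicity $\C(\F)\leq E_i|H_i$) immediately forces $x_i=1$ for all $i$.
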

\begin{theorem}\label{THM:PENT}
	Let be given a p-consistent family of $n$ conditional events $\F$ and a further conditional event $E|H$. Then, the following assertions are equivalent:\\
	(i) $\F$ p-entails $E|H$;\\
	(ii) the conjunction $\C(\F \cup \{E|H\})$ coincides with
	the  conjunction $\C(\F)$;\\
	(iii) the inequality  $\C(\F)\leq \; E|H$ is satisfied.
\end{theorem}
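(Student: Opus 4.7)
The plan is to establish the circle of implications (ii) $\Rightarrow$ (iii) $\Rightarrow$ (i) $\Rightarrow$ (ii), drawing on the tools already developed earlier in the paper: the characterization of p-consistency via $\prev[\C(\F)]=1$ (Theorem~\ref{THM:PCC}), the Fr\'echet--Hoeffding bounds (Theorem~\ref{THM:TEOREMAAI13}), the monotonicity property of conjunction (\ref{EQ:MONOTGEN}), and the monotonicity of coherent prevision.

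For (ii) $\Rightarrow$ (iii), I would apply (\ref{EQ:MONOTGEN}) with $\F_1=\F$ and $\F_2=\{E|H\}$, obtaining $\C(\F\cup\{E|H\})\leq \C(\{E|H\})=E|H$; combined with the hypothesis $\C(\F)=\C(\F\cup\{E|H\})$, this yields $\C(\F)\leq E|H$. For (iii) $\Rightarrow$ (i), I would fix any coherent assessment $(1,\ldots,1,z)$ on $\F\cup\{E|H\}$. Applying Theorem~\ref{THM:TEOREMAAI13} with all $x_i=1$, both Fr\'echet--Hoeffding bounds for $\prev[\C(\F)]$ collapse to $1$, so $\prev[\C(\F)]=1$ is forced. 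Since coherent prevision is monotone, the pointwise inequality $\C(\F)\leq E|H$ gives $1=\prev[\C(\F)]\leq \prev[E|H]=z$, hence $z=1$, confirming p-entailment.

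For (i) $\Rightarrow$ (ii), p-entailment of $E|H$ implies that $\F\cup\{E|H\}$ is itself p-consistent, so by Theorem~\ref{THM:PCC} (applied to each family) together with Theorem~\ref{THM:TEOREMAAI13} one has $\prev[\C(\F)]=\prev[\C(\F\cup\{E|H\})]=1$, both uniquely. Setting $Y=\C(\F)-\C(\F\cup\{E|H\})$, monotonicity (\ref{EQ:MONOTGEN}) gives $Y\geq 0$, and linearity of prevision yields $\prev(Y)=1-1=0$. The remaining step is to deduce that a non-negative random quantity with finitely many values in $[0,1]$ and coherently forced prevision $0$ must vanish on every logically admissible atom: if $Y$ were strictly positive on some atom $\omega$, one would exhibit a coherent probability extension giving $\omega$ positive weight, producing a coherent value $\prev(Y)>0$ and contradicting uniqueness; hence $\C(\F)=\C(\F\cup\{E|H\})$.

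The main obstacle is precisely this concluding step of (i) $\Rightarrow$ (ii): justifying rigorously that $Y\geq 0$ with uniquely coherent $\prev(Y)=0$ implies pointwise equality of the two conjunctions. This requires careful inspection of the atomic structure described in Definition~\ref{DEF:CONGn} and of which atoms are forced to have zero probability in every coherent extension compatible with $\mu_i=1$. An alternative and perhaps cleaner route would exploit Theorem~\ref{THM:ENTAIL-CS}~(4), extracting a subset $\mathcal{S}\subseteq \F$ such that $\mathcal{QC}(\mathcal{S})$ implies $E|H$ (or treating the trivial case $H\subseteq E$), and then verifying the equality of $\C(\F)$ and $\C(\F\cup\{E|H\})$ atom by atom using their explicit representations.
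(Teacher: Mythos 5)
A preliminary remark: the paper does not prove Theorem~\ref{THM:PENT} itself; it recalls the result from the cited work of Gilio and Sanfilippo (their Theorems 17 and 18), so there is no in-paper proof to compare against, and I assess your argument on its own terms. Your implications (ii)$\Rightarrow$(iii) and (iii)$\Rightarrow$(i) are sound: monotonicity (\ref{EQ:MONOTGEN}) gives $\C(\F\cup\{E|H\})\leq \C(\{E|H\})=E|H$, and under the assessment $\mu_1=\cdots=\mu_n=1$ Theorem~\ref{THM:TEOREMAAI13} (applied to $\F$, and to each subfamily if one wants every $x_S=1$) forces $\prev[\C(\F)]=1$, whence the pointwise inequality $\C(\F)\leq E|H$ and monotonicity of coherent prevision yield $z=1$.

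The genuine gap is (i)$\Rightarrow$(ii), as you yourself flag, and the route you sketch would not close it. First, assertion (ii) is an identity between conditional random quantities that the paper uses (e.g.\ in the proofs of Theorems~\ref{THM:PRELPDT} and \ref{THM:PDT}, and in the System~P examples) as holding for an \emph{arbitrary} coherent assessment: on constituents where some conditional events are void and none is false, $\C(\F)$ and $\C(\F\cup\{E|H\})$ take as values assessed previsions $x_S$, and the substance of (ii) is that p-entailment together with coherence forces these values to match for every coherent assessment; your argument via $Y=\C(\F)-\C(\F\cup\{E|H\})$ engages only the all-ones assessment. Second, even restricted to that assessment, the decisive step fails: from ``$Y\geq 0$ with uniquely coherent $\prev(Y)=0$'' you want to conclude that $Y$ vanishes on every possible constituent by exhibiting a coherent extension giving positive weight to a constituent where $Y>0$; but the uniqueness of $\prev(Y)=0$ was obtained only under the constraints $\mu_1=\cdots=\mu_n=1$, and an extension putting positive mass on such a constituent (one where no premise is false while $E|H$ is false) need not respect those constraints, so no contradiction is produced. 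Whether a logically possible constituent of that kind can carry positive probability while all premises retain probability $1$ is exactly the nontrivial content of the p-entailment characterizations (cf.\ items 3 and 4 of Theorem~\ref{THM:ENTAIL-CS}), so as written the argument is circular. Your alternative suggestion --- use Theorem~\ref{THM:ENTAIL-CS}(4) to get either $H\subseteq E$ or a nonempty $\S\subseteq\F$ with $QC(\S)\subseteq E|H$, deduce that $\no{E}H$ implies $\bigvee_{i}\no{E}_iH_i$ and that truth of all premises in $\S$ implies $EH$, and then verify the coincidence of the two conjunctions constituent by constituent, letting coherence force the remaining prevision values to agree --- is the viable way to finish, but it is not carried out in the proposal.
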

Notice that in the inequality  $(iii)$ of Theorem (\ref{THM:PENT}) the symbol $E|H$ is the indicator of the corresponding conditional event.
We recall a  result where it is shown that the p-entailment of a conditional event $E|H$ from a p-consistent family $\F$ is equivalent to condition  $(E|H)|\mathcal{C}(\F)=1 $
(\cite[Theorem 7]{GiSa21E}).
\begin{theorem}\label{THM:PEITER}
	A p-consistent family $\F$ p-entails $E|H$ if and only if the iterated conditional $(E|H)|\mathcal{C}(\F)$ is equal to 1.
\end{theorem}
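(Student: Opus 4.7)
The strategy is to combine Definition \ref{DEF:GENITER} with the structural characterization of p-entailment in Theorem \ref{THM:PENT}(ii). First I would unfold the iterated conditional: setting $\mu = \prev[(E|H)|\C(\F)]$ and using the associativity/commutativity of conjunction (Remark \ref{REM:CONGCONG}) so that $(E|H)\wedge\C(\F) = \C(\F\cup\{E|H\})$, Definition \ref{DEF:GENITER} gives
\[
(E|H)|\C(\F) \;=\; \C(\F\cup\{E|H\}) + \mu\bigl(1-\C(\F)\bigr).
\]
This identity is the common hinge of both directions.

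For the direction ``$\Rightarrow$'', assume $\F \Rightarrow_p E|H$. By Theorem \ref{THM:PENT}(ii), $\C(\F\cup\{E|H\}) = \C(\F)$ as random quantities, so the display above collapses to
\[
(E|H)|\C(\F) \;=\; \C(\F) + \mu\bigl(1-\C(\F)\bigr) \;=\; \mu + (1-\mu)\,\C(\F).
\]
To force $\mu = 1$, I would apply the compound prevision theorem (\ref{EQ:COMPPREVTHM}): $\prev[\C(\F\cup\{E|H\})] = \mu\,\prev[\C(\F)]$; substituting $\C(\F\cup\{E|H\})=\C(\F)$ yields $(1-\mu)\prev[\C(\F)] = 0$. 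Since $\F$ is p-consistent, Theorem \ref{THM:PCC} guarantees that the assessment $\prev[\C(\F)] = 1$ is coherent, which rules out $\prev[\C(\F)] = 0$ and forces $\mu = 1$. With $\mu = 1$ the previous display becomes the constant $1$, so $(E|H)|\C(\F) = 1$ as a random quantity.

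For the direction ``$\Leftarrow$'', assume $(E|H)|\C(\F) = 1$. Taking previsions of both sides immediately gives $\mu = 1$. Plugging $\mu = 1$ into the hinge identity gives $\C(\F\cup\{E|H\}) + (1-\C(\F)) = 1$ pointwise, hence $\C(\F\cup\{E|H\}) = \C(\F)$, and Theorem \ref{THM:PENT}(ii) yields $\F \Rightarrow_p E|H$. The main subtlety I anticipate is precisely the degenerate case $\prev[\C(\F)] = 0$ that could in principle leave $\mu$ undetermined by the compound-prevision equation in the forward direction; I would dispose of it by appealing to p-consistency (Theorem \ref{THM:PCC}), which makes $\prev[\C(\F)] = 1$ (and thus $\mu = 1$) a coherent assessment, after which the conclusion $(E|H)|\C(\F) = 1$ holds as an identity of random quantities independent of any further choice of assessment.
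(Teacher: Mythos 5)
Your skeleton --- unfolding Definition \ref{DEF:GENITER} into $(E|H)|\C(\F)=\C(\F\cup\{E|H\})+\mu\,(1-\C(\F))$ and invoking Theorem \ref{THM:PENT}(ii) --- is the right one (note the paper only recalls this statement from \cite[Theorem 7]{GiSa21E} and gives no proof of it in the text), and your ``$\Leftarrow$'' direction is fine. The genuine gap is in ``$\Rightarrow$'', at the step forcing $\mu=1$. From the compound prevision theorem (\ref{EQ:COMPPREVTHM}) you correctly obtain $(1-\mu)\prev[\C(\F)]=0$, but you then argue that p-consistency ``rules out $\prev[\C(\F)]=0$'' via Theorem \ref{THM:PCC}. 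That is a non sequitur: Theorem \ref{THM:PCC} says the assessment $\prev[\C(\F)]=1$ is \emph{a} coherent choice, not that it is the only one. For a p-consistent family the value $0$ is typically coherent as well; e.g.\ $\F=\{A|H\}$ with $A,H,B,K$ logically independent is p-consistent, $A|H\Rightarrow_p B|K$ can hold (say $A|H\subseteq B|K$), and yet $P(A|H)=0$, hence $\prev[\C(\F)]=0$, is perfectly coherent. Since the theorem asserts that the iterated conditional equals $1$ as a random quantity --- i.e.\ for every coherent choice of the underlying assessments, with coherence itself forcing $\mu=1$ --- you cannot establish it only under the convenient assessment $\prev[\C(\F)]=1$; your closing remark that the identity then holds ``independent of any further choice of assessment'' is precisely what is left unproven whenever $\prev[\C(\F)]=0$, where your equation leaves $\mu$ undetermined.

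What closes the gap is not the compound prevision theorem but the coherence of $\mu$ itself, in the betting sense the paper uses throughout (scheme $(S_3)$: discard every case in which you receive back exactly what you paid, whatever it be). Once $\C(\F\cup\{E|H\})=\C(\F)$, you have $(E|H)|\C(\F)=\mu+(1-\mu)\,\C(\F)$, so paying $\mu$ yields the gain $(1-\mu)\,\C(\F)$. Every constituent in which $\C(\F)$ takes the numerical value $0$ is a money-back case and is discarded; in every remaining constituent $\C(\F)$ is positive, so the gain has the sign of $1-\mu$; and since $\C(\F)\neq 0$ (required anyway for Definition \ref{DEF:GENITER}, and guaranteed by p-consistency in the cases of interest) at least one non-discarded constituent exists. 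Hence any $\mu\neq 1$ makes the gains all positive or all negative and is incoherent, so $\mu=1$ and $\mu+(1-\mu)\C(\F)$ collapses to the constant $1$. This is the same mechanism the paper itself uses to settle extreme cases in its examples (e.g.\ ``in the extreme case where $x=0$ \dots by coherence $\mu=1$''), and it is what your argument needs in place of the appeal to Theorem \ref{THM:PCC}; your equation $(1-\mu)\prev[\C(\F)]=0$ is consistent with this conclusion but cannot deliver it on its own.
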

In particular,	given two (p-consistent) conditional events $A|H$ and $B|K$, it holds that  (\cite[Theorem 4]{GiPS20}) 
	\begin{equation}\label{EQ:p-entail-iter}
		A|H \text{ p-entails }  B|K\;\; \Longleftrightarrow\;\; (B|K)|(A|H)=1.
	\end{equation}

%\begin{remark} We observe that our notions of p-entaiment in the setting of coherence is based on the theory of Adams, but is different for what concerns the treatment of zero-probability antecedents. For instance, Contraposition holds in the approach of Adams, while in the coherence-based approach is not.
%	Indeed, in our approach the assessment $(x,y)$ on $\{C|A, \no{A}|\no{C}\}$ is coherent, for every $(x,y) \in [0,1]^2$. Then, the relation 
%	\begin{equation}\label{CONTRAP}
%		P(C|A)=1 \; \Longleftrightarrow \; P(\no{A}|\no{C})=1
%	\end{equation}
%	does not hold. On the contrary, in the approach of Adams where by definition $P(E|H)=1$ when $P(H)=0$, the condition in (\ref{CONTRAP}) is satisfied. Indeed, ...
%	
%	I would suggest clarifying some points about validity right at the beginning. On p. 1, we define our notion of p-entailment, and then remark that it recovers Adams's p-validity. But we shouldn't we be clear here that Adams doesn’t precisely use our notion? If he did, contraposition would be valid for him. Since P(if A then C) = 1 for him whenever P(A) = 0, it holds that P(if A then C) = 1 iff P(if not-C then not-A) = 1 in his account. Nevertheless, we can point out that we're the same as Adams on validity when P(A) isn't 0 for the relevant conditionals. Making these points would also give us a chance to clarify and justify our analysis of the P(A) = 0 case, and that can help us to clarify later points in the draft.\\
%	
%	
%\end{remark}
\section{Some notes on trivalent logics and our approach}
\label{SEC:TRIVLOG}
In this section we firstly  deepen the notions of  conditional bets and bets on conditionals. Secondly, we recall that  many papers  adopt the three-valued  approach of de Finetti for the study of conditionals. In particular, we compare and discuss  two recent papers which adopt a trivalent logic for conditionals, and we show why selected properties hold in our Level 2 framework, which do not hold in these other approaches  (\cite{EgRS20,LaBa21}).

\subsection{Conditional bets and bets on conditionals} \label{SUBSEC:CONDBETCOND}
In this section we show that, in order to make a conditional probability assessment $P(B|A)=x$, the notions of {\em bet on a conditional} and {\em conditional bet} are equivalent. Then, we examine a further equivalent scheme for such probability assessment.
In our approach, a conditional \emph{if $A$ then $B$} is represented by  the conditional event  $B|A$, which in a pioneering paper  by de Finetti (\cite{definetti36}) was defined as a  three-valued object with possible  values: {\em true, false, void}; thus we adopt the Equation, or conditional probability hypothesis (CPH), that is    $P(\emph{if } A\emph{ then }B) = P(B|A)$. 
We point out that the approach of de Finetti has been developed in many respects. In numerical terms $B|A$ has possible values 1, 0, and $P(B|A)$, which is useful, for instance, in order to extend the  geometrical approach for coherence checking to the case of conditional events. Moreover, the  coherence-based  approach allows to study in full generality the notion of p-entailment. We next consider the three equivalent betting schemes ($S_1$), ($S_2$), and ($S_3$)  in order to evaluate $P(B|A)$.
In the  paper of de Finetti (\cite[p. 186]{definetti36}) the assessment $P(B|A)=x$ is operatively based on the following first betting scheme, which corresponds to a {\em conditional bet}: 
\paragraph{Scheme $(S_1)$} You evaluate $P(B|A)$ when $A$ is uncertain. After $A$ is verified, the bet has effect  and you accept to pay $x$ in order to bet on $B$, by receiving 1 if $B$ is true, or 0 if $B$ is false. In the case where $A$ is not verified the bet has no effect, because there is no bet. Within the scheme $(S_1)$ we can say that \emph{if $A$, then I bet that $B$}, that is a bet on $B$ conditionally on $A$ being true. However, if $\no{A}$, then there is \emph{no bet} within the scheme~$(S_1)$.  \\
Then, by definition, the coherence of the assessment $P(B|A)=x$ is checked by (only) considering the cases where the bet is effective, that is  when $A$ is verified. \\ ~\\
%The following scheme $(S'_1)$ is trivially equivalent to $(S_1)$: \\
%$(S'_1)$. You evaluate $P(B|A)$ when $A$ is uncertain. If $A$ is verified, you accept to pay $x$ in order to bet on $B$, by receiving 1 if $B$ is true, or 0 if $B$ is false. In the case where $A$ is not verified, you pay $x$ and immediately you receive back the same amount $x$. By definition, for the checking of coherence of the assessment $P(B|A)=x$  only  the cases in which $A$ is verified are considered.
%In equivalent terms, $(S'_1)$ can be expressed in the following form: \\
In equivalent terms, scheme $(S_1)$ can be expressed as a bet on a  conditional by the following scheme (\cite[p. 145]{definetti37}): 
\paragraph{Scheme $(S_2)$} If you assess $P(B|A)=x$, then (before knowing the truth value of $A$) you accept to pay $x$, by receiving 1 if $AB$ is true, or 0 if $A\no{B}$ is true, or $x$ if $\no{A}$ is true (the bet is called off). For  the checking of  coherence only the cases in which the bet is not called off are considered (that is, the case when $A$ is \emph{false}, in which the bet is called off, is discarded).\\
Based on  the scheme $(S_2)$, we can introduce the indicator of $B|A$, denoted by the same symbol, defined as (see, e.g. \cite[Section 2.2]{GiSa21})
\[
B|A = AB + x \no{A} = \left\{\begin{array}{ll}
	1, &\mbox{if $AB$ is true,}\\
	0, &\mbox{if $A\no{B}$ is true,}\\
	x,  &\mbox{if $\no{A}$ is true.}	
\end{array}
\right.
\]
Then, when you assess $P(B|A)=x$, you accept to pay the amount $x$, by receiving the random quantity $B|A$. Of course, for the prevision of the indicator it holds that
\[
\pr(B|A)= \pr(AB + x \no{A})= P(AB) + x P(\no{A}) = x[P(A)+P(\no{A})] = x = P(B|A).
\]
Within the scheme $(S_2)$ we can speak of a bet on the conditional event $B|A$, or on the conditional {\em if} $A$ {\em then} $B$, and this bet is  equivalent to the conditional bet on $B$, supposing that $A$ is true (and nothing more). In other words the notions of  conditional bets and  bets on conditionals coincide, in agreement with the equivalence of $(S_1)$ and $(S_2)$.\\~\\
Now we will examine a further scheme $(S_3)$, in which we consider the random quantity $Y=AB+y\no{A}$, where by definition $y$ is the prevision of $Y$. In the betting framework $y$ is the amount to be paid in order to receive $Y$. You must assess $y$, by remembering that for the checking of coherence {\em you discard all the cases where you receive back the amount you paid, whatever the amount be}, that is you discard the case where $A$ is not verified.\\
Question: in what way does the  quantity $y$ (subjectively) depend on the events $A$ and $B$? \\ {\em Answer: It can be shown that $y=P(B|A)$.} \\
%Now we will consider a further scheme $(S_3)$, in which %operatively you assess the prevision $y$ of the random %quantity $AB+y\no{A}$,
%by showing  that $y$ coincides with  $P(B|A)$.
%Concerning  the checking of coherence of $y$ we still %discard the case where $A$ is not verified. In %equivalent terms,  in order to check coherence ``you %discard all the cases where you receive back the amount %you paid, whatever the amount be''.\\
\paragraph{Scheme $(S_3)$} You have to assess the value $y$, which represents your prevision of the random quantity $AB + y\no{A}$; then, by the betting scheme, you agree to pay $y$ with the proviso to receive the random quantity $AB + y \no{A}$.\\
{\em Coherence condition for the scheme $(S_3)$:} in order to check the coherence of $y$, you must discard all the cases where you receive back $y$, whatever $y$ be. \\ \ \\
{\em For an individual who wants to assess $P(B|A)$ are the schemes $(S_2)$ and $(S_3)$ equivalent? In other words, is it the case that $y=x=P(B|A)$?} \\ 
{\em We show below that the answer is ``Yes", i.e. $(S_2)$ and $(S_3)$ are equivalent.} \\
We observe that, within the scheme $(S_2)$, you assess $P(B|A)=x$; then you pay $x$ and you receive $AB + x\no{A}$. Within the scheme $(S_3)$, you assess the prevision $y$ of $AB + y\no{A}$; then you pay $y$ with the proviso that you receive $AB + y\no{A}$. As a consequence, you agree to pay $x-y$ by receiving $(AB + x\no{A}) - (AB + y\no{A}) = (x-y)\no{A}$, which is equal to 0, or $x-y$, according to whether $A$ is true, or false, respectively. By the coherence condition  in the scheme $(S_3)$, the case where $A$ is false must be discarded because in this case you receive back the paid amount $x-y$ (whatever $x-y$ be). When $A$ is true you receive 0; then, in order that the assessment $x-y$ be coherent, it must be that $x-y=0$, that is $x=y$. \\ In conclusion, for the same individual, it is equivalent to evaluate $P(B|A)$ by the scheme $(S_1)$, or $(S_2)$, or $(S_3)$.
\begin{remark}\label{REM:CONGCRQ}
By the previous reasoning we can verify that the conjunction 
\[
(A|H) \wedge (B|K)=AHBK + x \no{H}BK + y AH\no{K} + z \no{H}\no{K},
\]
where $x=P(A|H)$, $y=P(B|K)$, and $z=\prev[(A|H) \wedge (B|K)]$, 
 coincides with the conditional random quantity $Z|(H \vee K)=
(AHBK + x \no{H}BK + y AH\no{K})|(H\vee K)$.  
 In the framework of the betting scheme $z$ is the amount to be paid (resp., to be received) in order to receive (resp., to pay) the random amount $(A|H) \wedge (B|K)$. Moreover,   in the case when $\no{H}\no{K}$ is true you receive  back (resp., give back)  $z$, whatever it be; thus, in order to check coherence, the case $\no{H}\no{K}$ must be discarded. 
By setting  $\mu=\prev(Z|(H \vee K))$, from (\ref{EQ:XgH}),  it holds that 
\begin{equation} \label{EQ:ZRQ}
	\begin{array}{l}
		Z|(H \vee K) =
		(AHBK + x \no{H}BK + y AH\no{K})(H\vee K) + \mu \no{H}\no{K} =\\
		=
		AHBK + x \no{H}BK + y AH\no{K} + \mu \no{H}\no{K} \,.
	\end{array}
\end{equation}
Moreover, for the random quantity
\[
D = (A|H) \wedge (B|K) - Z|(H \vee K) = (z - \mu) \no{H}\no{K} \,,
\]
it holds that $\prev(D)=z-\mu$.  Then,  in a bet on $D$
one should pay, for instance,  $z - \mu$ by receiving the random amount $D$, with the bet called off when $\no{H}\,\no{K}$ is true (indeed, in this case   one would receive back the paid amount $z-\mu$, whatever it be). We observe that $D$ is 0, or $z-\mu$, according to whether $H \vee K$ is true, or $\no{H}\no{K}$ is true, respectively. Therefore, when the bet is not called off it holds that $D=0$ and, by coherence, $\prev(D)=z-\mu=0$, that is $\pr[(A|H) \wedge (B|K)] =z=\mu= \pr[Z|(H \vee K)]$. Therefore,   from   (\ref{EQ:CONJRQ}) and  (\ref{EQ:ZRQ}) we obtain that 
\begin{equation}\label{EQ:CONJCRQ}
	(A|H) \wedge (B|K) = (AHBK + x \no{H}BK + y AH\no{K})|(H \vee K) \,.
\end{equation}

	By the previous comments  the conjunction  can be also  defined as the conditional random quantity in (\ref{EQ:CONJCRQ}) (as done in \cite[Definition 2]{GiSa21}), that is, by recalling  (\ref{EQ:PREVXgH}),
	\[
	(A|H) \wedge (B|K)=AHBK + x \no{H}BK + y AH\no{K} + z \no{H}\no{K}, \]
	where $x=P(A|H)$, $y=P(B|K)$,  
	$z=\prev[(AHBK + x \no{H}BK + y AH\no{K})|(H \vee K)]$.  
\end{remark}	

%In this section we recall that many papers  adopt the three-valued approach of de Finetti for the study of conditionals. Then  we  illustrate some properties satisfied in our approach and  we  compare and discuss  two recent papers which adopt the trivalent logic for conditionals (\cite{EgRS20,LaBa21}).

\subsection{Compound and iterated conditionals in the approach of de Finetti}
In this section we illustrate the notions of  compound and iterated conditionals in the trivalent logic  of de Finetti. 
By the  trivalent truth tables  in de Finetti (\cite{definetti36}) the conjunction of two conditional events $A|H$ and $B|K$, here denoted by $(A|H)\wedge_{df} (B|K)$, is the logical product between tri-events, which is \emph{true} when $AHBK$ is true, \emph{false} when $\no{A}H\vee \no{B}K$, and  \emph{void} otherwise. In other words, $(A|H)\wedge_{df}(B|K)$ is true when all conditional events are true, is false when at least a conditional event is false, and is void otherwise.
Thus
\begin{equation}\label{EQ:DEFC}
	(A|H)\wedge_{df}(B|K)=AHBK|(AHBK \vee \no{A}H \vee  \no{B}K),
\end{equation}
and hence  $P[(A|H)\wedge_{df}(B|K)]$, that is $P[AHBK|(AHBK \vee \no{A}H \vee  \no{B}K)]$, is the probability that the conjunction is true, given that it is true or false.
In particular,  when $P(AHBK \vee \no{A}H \vee  \no{B}K)>0$, it holds that 
\[
P[(A|H)\wedge_{df}(B|K)]=\frac{P(AHBK)}{P(AHBK \vee \no{A}H \vee  \no{B}K)},
\]
which is the probability that the conjunction is true divided by  the probability  that it is true or false.

The notion of logical inclusion among events has been generalized to conditional events by Goodman and Nguyen in \cite{GoNg88}. 
Given two conditional events  $A|H$ and $B|K$,  $A|H$ logically implies $B|K$, denoted by $A|H \subseteq B|K$, if and only if $AH$ logically implies $BK$  and $\no{B}K$ logically implies $\no{A}H$, that is (\cite[formula (3.18)]{GoNg88})
%$AH \subseteq BK$ and $\no{B}K \subseteq \no{A}H$.
%
%We  recall that the Goodman-Nguyen inclusion relation among conditional events (\cite{GoNg88}), denoted by  $ \subseteq$, is defined as 
\begin{equation}\label{EQ:GN}
	A|H \subseteq B| K\;\; \Longleftrightarrow \;\;
	AH\subseteq BK \text{ and }  \no{B}K\subseteq \no{A}H.
\end{equation}

\begin{remark}
We recall that, when $A|H \subseteq B|K$, it holds that $P(A|H)\leq P(B|K)$, for every conditional probability $P$ (see, e.g. \cite[Theorem 6]{gilio13}). This means that the set of coherent assessments $(x,y)$ on $\{A|H,B|K\}$, where $x=P(A|H),y=P(B|K)$, is characterized by the property that $x \leq y$; that is to say, there does not exist any coherent assessment $(x,y)$ such that $x>y$. In terms of comparative, or qualitative, probabilities (see, e.g., \cite{BiGS03Wupes,Cole90,Cole94,definetti31,definetti37,KrPS59}), if $A|H \subseteq B|K$, then ``$A$ given $H$ is no more probable than $B$ given $K$''. 
A computational analysis of comparative conditional probabilities in the framework of coherence has been given in \cite{mundici21}.
In \cite{Delgrande2019} a logic of qualitative probability has been developed, where the binary operator $\preceq$ describing the relation "is no more probable than" is regarded as an operator on finite sequences of formulas.
\end{remark}
We observe that the conjunction of Goodman and Nguyen (\cite[formula (3.4)]{GoNg88}) coincides with $\wedge_{df}$. Then, (see \cite[formula (3.17)]{GoNg88}, see also \cite[Remark 1]{SUM2018S}) 
\begin{equation}\label{EQ:GNDEF}
	(A|H)\wedge_{df}(B|K)=A|H \;\;	 \Longleftrightarrow \;\; A|H \subseteq B| K.
\end{equation}	
This trivalent notion of conjunction of de Finetti coincides with the Kleene-Lukasiewicz-Heyting, or KLH,  conjunction (see, e.g., \cite{BPOT18}). 
Such trivalent logic has been used for studying the connections among  belief revision, defeasible conditionals and nonmonotonic inference in \cite{Kern-Isb2023}.
We also observe that the Fréchet-Hoeffding  bounds are not preserved by $\wedge_{df}$. Indeed, under logical independence of the basic events, the lower and upper bounds for $z=P((A|H)\wedge_{df}(B|K))$
 are  $z'=0$ and $z''=\min\{P(A|H),P(B|K)\}$, respectively (\cite[Theorem 3]{SUM2018S}),   while the  Fréchet-Hoeffding lower bound
is $\max\{P(A|H)+P(B|K)-1,0\}$.
We  observe that, if the Fréchet-Hoeffding  bounds were valid,  the \emph{And} rule would be also satisfied, that is  $\{A|H,B|H\}$ would p-entail 
 $(A|H)\wedge_{df}(B|K)$. On the contrary, with de Finetti's trivalent conjunction,  when there are some logical dependencies, it may happen that the only coherent extension on the  conjunction of conditional events is zero, even if we assign probability 1 to each  conditional event.  
 That the And rule is invalid is counterintuitive as its validity is also required by the basic nonmonotonic reasoning System P (\cite{gilio02,KrLM90}).   We now illustrate the 
 invalidity of the And rule, when using the conjunction $\wedge_{df}$, by two examples.
\begin{example}\label{EX:AND1}
	We show a counterintuitive aspect of  de Finetti's trivalent conjunction.
	A double-headed coin is going to be either tossed or spun. Consider the events
	$H=$``the coin is tossed'', $\no{H}=$``the coin is spun'', and $A=$``the coin comes up heads''. Of course $P(A|H)=P(A|\no{H})=1$, but 
	as $H\no{H}=\emptyset$ it holds  that $(A|H)\wedge_{df}(A|\no{H})=AH\no{H}|(AH\no{H}\vee \no{A}H\vee \no{A}\,\no{H})=\emptyset|\no{A}$, and hence
 $P[(A|H)\wedge_{df}(A|\no{H})]=P(\emptyset|\no{A})=0$.
	Thus, with de Finetti's conjunction, as the coin cannot be both tossed and spun at the same time, the compound	``if the coin is tossed it will come up heads and if the coin is spun it will come up heads'' has, counterintuitively, a probability of 0, even if both conjuncts,   ``if the coin is tossed it will come up heads'' and ``if the coin is spun it will come up heads'', have probability 1.	Notice that  this problem is avoided when  using our Definition \ref{CONJUNCTION}. Indeed,   by Theorem \ref{THM:TEOREMAAI13} Fréchet-Hoeffding  bounds are satisfied, and hence  $\prev[(A|H)\wedge (A|\no{H})]=1$ when $P(A|H)=P(A|\no{H})=1$.
\end{example}
%Example \ref{EX:AND1} is interesting for connexive logic (for a survey see \cite{wansing22};  a first probabilistic semantics for connexivity  is given in \cite{PS21connexive} and its  psychological plausibility  was confirmed in  \cite{pfeifer12x,PfeiferSchoepplInPrep}), as it concerns a special case of what is negated in Aristotle's Second Thesis (\cite{PfSa23SL}). 
Example \ref{EX:AND1} is interesting for connexive logic, and its probabilistic semantics (\cite{PS21connexive}),  as it concerns a special case of what is negated in Aristotle's Second Thesis (\cite{PfSa23SL}).
\begin{example}
	In \cite[Example 2]{GiSa13IJAR}, given any logically independent events $A,B,C,D$, it has been shown that the family $\{C|B,B|A,A|(A\vee B),B|(A\vee B),D|\no{A}\}$ is p-consistent and p-entails $C|A$.
	In this example we consider the sub-family  $\F=\{C|B,B|A,A|(A\vee B),D|\no{A}\}$, which of course is p-consistent too. Then, the assessment 
	\begin{equation}\label{EQ:EX1}
		P(C|B)=P(B|A)=P(A|(A\vee B))=P(D|\no{A})=1
	\end{equation}
	is coherent.
	We observe that
	\[
	(C|B)\wedge_{df}(B|A)\wedge_{df}(A|(A\vee B))=ABC|(ABC\vee B\no{C}\vee A\no{B}\vee \no{A}B)=ABC|(A\vee B)
	\] 
	and 
	\[
	\begin{array}{ll}
		(C|B)\wedge_{df}(B|A)\wedge_{df}(A|(A\vee B))\wedge_{df}(D|\no{A})=(ABC|(A\vee B))\wedge_{df}(D|\no{A})=\\
		=ABC\no{A}D|(ABC\no{A}D \vee AB\no{C}\vee A\no{B}\vee \no{A}B \vee \no{A}\no{D})
		=\emptyset| (AB\no{C}\vee A\no{B}\vee \no{A}B \vee \no{A}\no{D}).
	\end{array}
	\]
	Then, the unique coherent extension $z$ of the assessment (\ref{EQ:EX1}) on $(C|B)\wedge_{df}(B|A)\wedge_{df}(A|(A\vee B))\wedge_{df}(D|\no{A})$ is $z=0$. As we can see, with the trivalent notion of conjunction of de Finetti, the well known And rule does not hold. On the contrary, by using our Definition \ref{CONJUNCTION},  the And rule is valid (\cite[Remark 4]{SPOG18}).
\end{example}
As another counterintuitive  aspect, given any event $A$, with $A\neq \emptyset$ and $A\neq \Omega$, 
coherence requires that both   conditional events $A|A$,  $(\no{A}|\no{A})$ have probability 1 
and hence, if the And rule  were valid,  their conjunction
 $(A|A)\wedge_{df}(\no{A}|\no{A})$ 
 would  have probability 1. Actually,  this conjunction does not exist because $(A|A)\wedge_{df}(\no{A}|\no{A})=\emptyset|\emptyset$. Notice that, by using Definition \ref{CONJUNCTION}, in agreement with the intuition, in our approach it holds that $(A|A)\wedge(\no{A}|\no{A})=1$.  This example will  be also examined at the end of  Section  \ref{SEC:EGRE}.

\begin{remark}
	We also observe that, differently from 
	our Level 2 approach based on the later development in de Finetti's thought, de Finetti's early trivalent notions of conjunction and disjunction do not satisfy  the probabilistic sum rule. Indeed, by exploiting De Morgan Laws (which hold in de Finetti approach), 
	\[
	(A|H)\vee_{df}(B|K)=1-(\no{A}|H)\wedge_{df}(\no{B}|K).
	\]
Assuming  $P(A|H)=P(B|K)=1$, the extension $P((A|H)\wedge_{df} (B|K))=0$ is coherent (\cite[Theorem 3]{SUM2018S}).
	Moreover,  $P(\no{A}|H)=P(\no{B}|K)=0$ and  the unique coherent extension on $(\no{A}|H)\wedge_{df}(\no{B}|K)$ is $P((\no{A}|H)\wedge_{df}(\no{B}|K))=0$  (\cite[Theorem 3]{SUM2018S}). As a consequence, the unique coherent extension on $(A|H)\vee_{df}(B|K)$ is $P((A|H)\vee_{df}(B|K))=1$.
	Then, the assessment $P(A|H)=P(B|K)=1$, $P((A|H)\wedge_{df} (B|K))=0$  and $P((A|H)\vee_{df}(B|K))=1$, is coherent, with
	\[
	P((A|H)\vee_{df}(B|K))=1\neq 2=P(A|H)+P(B|K)-P((A|H)\wedge_{df}(B|K)).
	\]
\end{remark}

Concerning the iterated conditional $(B|K)|_{df}(A|H)$, 
de Finetti's trivalent approach implies that the Import-Export principle is valid both for antecedent-nested and consequent-nested conditionals, that is it holds that $B|_{df}(A|H)=B|AH$ and $(B|K)|_{df}A=B|AK$, respectively. Indeed, de Finetti defines the iterated conditional  as the conditional event
\begin{equation}
	\label{EQ:DFITER}
	(B|K)|_{df}(A|H)=B|AHK;
\end{equation}
thus  $B|_{df}(A|H)=B|AH$,  when $K=\Omega$,  and 
$(B|K)|_{df}A=B|AK$, when   $H=\Omega$.

We remark that, differently from our approach,  the  iterated conditional in (\ref{EQ:DFITER}) does not satisfy the relation 
\[
P((A|H)\wedge_{df}(B|K))=P[(B|K)|_{df}(A|H)]P(A|H).
\]
In other words, within the approach of de Finetti,  for  the iterated conditional defined in (\ref{EQ:DFITER})
the compound probability theorem does not hold. Indeed, in general 
\[
\begin{array}{ll}
	P((A|H)\wedge_{df}(B|K))=P[AB|(AHBK \vee \no{A}H \vee  \no{B}K)]\\\neq P(B|AHK)P(A|H)=P[(B|K)|_{df}(A|H)]P(A|H). 
\end{array}
\]
For instance, by assuming that $A,H,B,K$ are stochastically independent, with  
$P(AHBK \vee \no{A}H \vee  \no{B}K)>0$, $P(AHK)>0$, it holds that
\[
\begin{array}{ll}
	P((A|H)\wedge_{df}(B|K))=
	\frac{P(ABHK)}{P(AHBK \vee \no{A}H \vee  \no{B}K)}=P(A)P(B)\frac{P(H)P(K)}{P(AHBK \vee \no{A}H \vee  \no{B}K)},
\end{array}
\]
while
\[
\begin{array}{ll}
	P[(B|K)|_{df}(A|H)]P(A|H)=
	P(B|AHK)P(A|H)=P(A)P(B),
\end{array}
\]
because $P(B|AHK)=P(B)$ and $P(A|H)=P(A)$.

\subsection{Compound and iterated conditionals in our approach}
We remark that in our Level 2 approach compound and iterated conditionals satisfy all the basic  properties relative to unconditional events, as illustrated in \cite{GiSa21A}. For instance: \\
\ \\
(a) the Fr\'echet-Hoeffding lower and upper prevision bounds for the conjunction of conditional events still hold;\\
(b) the inequalities $\max\{A+B-1,0\}\leq AB\leq \min\{A,B\}$ become
\[
\max\{A|H+B|K-1,0\}\leq (A|H)\wedge (B|K)\leq \min\{A|H,B|K\} \,,
\]
and the inequalities $ \max\{A,B\} \leq  A\vee B\leq \min\{A+B,1\}$ become 
\[
\max\{A|H,B|K\}\leq (A|H)\vee (B|K)\leq \min\{A|H+B|K,1\} \,;
\]
\\
(c) by defining $\overline{(A|H)\wedge(B|K)} =1-(A|H)\wedge(B|K)$ and $\overline{(A|H)\vee (B|K)} =1-(A|H)\vee(B|K)$, De Morgan’s Laws are satisfied (\cite{GiSa19}), that is
\[
\overline{(A|H)\wedge(B|K)}=(\no{A|H})\vee(\no{B|K}) \text{ and } \overline{(A|H)\vee(B|K)}=(\no{A|H})\wedge(\no{B|K})\,;
\]
(d) the inclusion-exclusion formula for the disjunction of conditional events is valid (\cite{GiSa20}); for instance, the equality
\[
E_1\vee E_2=E_1+E_2-E_1E_2
\]
becomes
\[
(E_1|H_1)\vee(E_2|H_2) =(E_1|H_1)+(E_2|H_2)-(E_1|H_1)\wedge(E_2|H_2) \,;
\]
(e) we can introduce the set of (conditional) constituents, with properties analogous to the case of unconditional events
(\cite{GiSa20}); \\
(f) by exploiting conjunction we obtain a characterization of the probabilistic entailment of Adams for conditionals; moreover, by exploiting iterated conditionals,
the p-entailment of $E|H$ from a p-consistent family $\F$ is characterized by the property that the iterated conditional $(E|H)|\C(\F)$, where $\C(\F)$ is the conjunction of the conditional events in $\F$, is constant and coincides with~1 (\cite{GiSa21E}, see also \cite{GiPS20}); \\
(g) we give a generalization of probabilistic modus ponens to conditional events (\cite{SaPG17}) and we study one-premise and two-premise centering inferences (\cite{GOPS16,SPOG18}); \\
(h) we explain some intuitive probabilistic assessments
discussed in 
\cite{douven11b}
by making some implicit background information explicit (\cite{SGOP20}); \\
(i) finally, the compound probability theorem, that is $P(AB)=P(B|A)P(A)$,  is generalized  by the compound prevision theorem 
\begin{equation}\label{EQ:COMPREV}
	\prev[(B|K)\wedge (A|H)]=\prev[(B|K)|(A|H)]P(A|H),
\end{equation}
which is a particular case of (\ref{EQ:COMPPREVTHM}). 

\subsection{The generalized Equation and the negation for iterated conditionals} 
The hypothesis that the probability of a conditional in natural language is the conditional probability has so many deep consequences for understanding reasoning in natural language that it has been called the \emph{Equation} (\cite{edgington95}): 
\begin{equation}\label{EQ:EQUATION2}
P(\text{if } A, \text{ then } B) = P(B|A).
\end{equation}
The Equation becomes the \emph{conditional probability hypothesis} (CPH)  in the psychology of reasoning, where it has been highly confirmed as descriptive of people's probability judgments about a wide range of conditionals (see, e.g., \cite{OHEHS07}, see also \cite{kleiter18}).   

We observe that  $P(B|A)=P(AB|(AB\vee A\no{B}))$, then from (\ref{EQ:EQUATION2})
the probability of the conditional is the probability that it is true, given that it is true or false.
Moreover,  when $P(A)>0$ it holds that
\begin{equation}\label{EQ:EQUATION}
	P(\mbox{If $A$, then $B$}) = P(B|A) = \frac{P(AB)}{P(A)} = \frac{P(AB)}{P(AB)+P(A\no{B})} \,,
\end{equation}
that is, the probability of the conditional is the probability that it is true, divided by the probability that it is true or false. 

Given two conditionals  \emph{if
	$H$, then $A$} 
and \emph{if
	$K$, then $B$}, we represent them by the  conditional events $A|H$ and $B|K$, respectively. 
Then, we write the  iterated conditional 
%\emph{if (if$H$, then $A$), then (if  $K$, then $B$)},
\emph{if ($A$ given 
	$H$), then ($B$ given 
	$K$)}  as \emph{if $A|H$, then $B|K$}, and we  represent it by the iterated conditional $(B|K)|(A|H)$.  Based on  this representation, we identify the ``probability''  of  \emph{if $A|H$, then $B|K$}
with  the prevision of $(B|K)|(A|H)$, that is 
\begin{equation}
	P(\text{\emph{if $A|H$, then $B|K$}})=\prev[(B|K)|(A|H)],
\end{equation}
which can be called \emph{generalized  Equation}, or \emph{conditional prevision hypothesis}.
Now, by the reasoning below, we generalize formula (\ref{EQ:EQUATION}) to the  iterated conditional $(B|K)|(A|H)$.  We set $\mu = \pr[(B|K)|(A|H)]$ and $\nu = \pr[(\no{B}|K)|(A|H)]$, then
\[
(B|K)|(A|H) = (B|K)\wedge(A|H) + \mu \no{A}|H \,,\;\;\;\; (\no{B}|K)|(A|H) = (\no{B}|K)\wedge(A|H) + \nu \no{A}|H \,,
\]
and hence
\[
(B|K)|(A|H) + (\no{B}|K)|(A|H) = (B|K)\wedge(A|H) + (\no{B}|K)\wedge(A|H) + (\mu + \nu) \no{A}|H \,.
\]
Moreover, we set
\[
P(A|H)=x \,,\;\;P(B|K)=y \,,\;\; \pr[(B|K)\wedge(A|H)]=z \,,\;\; \pr[(\no{B}|K)\wedge(A|H)]=\eta \,.
\]
Then, by 
\cite[Proposition 1]{SaPG17} (see also  the decomposition formula  in \cite{GiSa20}), it holds that
\[
(B|K)\wedge(A|H) + (\no{B}|K)\wedge(A|H) = A|H \,.
\]
Indeed
\[
(B|K)\wedge(A|H) + (\no{B}|K)\wedge(A|H) = \left\{\begin{array}{ll}
	1, & \mbox{ if } AHBK \vee AH\no{B}K \vee AH\no{K} \mbox{ is true, } \\
	0, & \mbox{ if } \no{A}HBK \vee \no{A}H\no{B}K \vee \no{A}H\no{K} \mbox{ is true.} \\
	x, & \mbox{ if } \no{H}BK \vee \no{H}\no{B}K \mbox{ is true, } \\
	z+\eta, & \mbox{ if } \no{H}\no{K} \mbox{ is true, } \\
\end{array}
\right.
\]
that is
\[
(B|K)\wedge(A|H) + (\no{B}|K)\wedge(A|H) = \left\{\begin{array}{ll}
	1, & \mbox{ if } AH \mbox{ is true, } \\
	0, & \mbox{ if } \no{A}H \mbox{ is true.} \\
	x, & \mbox{ if } \no{H}K \mbox{ is true, } \\
	z+\eta, & \mbox{ if } \no{H}\no{K} \mbox{ is true, } \\
\end{array}
\right.
\]
moreover
\[
A|H = \left\{\begin{array}{ll}
	1, & \mbox{ if } AH \mbox{ is true, } \\
	0, & \mbox{ if } \no{A}H \mbox{ is true.} \\
	x, & \mbox{ if } \no{H} \mbox{ is true. } \\
\end{array}
\right.
\]
Then,  based on \cite[Theorem 4]{GiSa14}, we obtain $z+\eta = x$, that is
\begin{equation}\label{EQ:DECOMPPREV}
	\pr[(B|K)\wedge(A|H)] + \pr[(\no{B}|K)\wedge(A|H)]  = P(A|H),
\end{equation}
and hence $(B|K)\wedge(A|H) + (\no{B}|K)\wedge(A|H) = A|H$. Therefore, a formula like (\ref{EQ:EQUATION}) also holds for the iterated conditional $(B|K)|(A|H)$, that is when $P(A|H)>0$ from (\ref{EQ:COMPREV}) and   (\ref{EQ:DECOMPPREV}) it holds that
\begin{equation}
	\pr[(B|K)|(A|H)] = \frac{\pr[(B|K)\wedge(A|H)]}{P(A|H)} = \frac{\pr[(B|K)\wedge(A|H)]}{\pr[(B|K)\wedge(A|H)] + \pr[(\no{B}|K)\wedge(A|H)]} \,.
\end{equation}
In addition, as $\pr[(B|K)|(A|H) + (\no{B}|K)|(A|H)]=\mu + \nu$, it holds that
\[
(B|K)|(A|H) + (\no{B}|K)|(A|H) = (B|K)\wedge(A|H) + (\no{B}|K)\wedge(A|H) + (\mu + \nu) \no{A}|H =
\]
\[
= A|H + (\mu + \nu) \no{A}|H = \left\{\begin{array}{ll}
	1, & \mbox{ if } AH \mbox{ is true, } \\
	\mu + \nu, & \mbox{ if } \no{A}H \mbox{ is true,} \\
	x + (\mu + \nu)(1-x), & \mbox{ if } \no{H} \mbox{ is true, } \\
\end{array}
\right. \;\; = \;\; \left\{\begin{array}{ll}
	1, & \mbox{ if } AH \mbox{ is true, } \\
	\mu + \nu, & \mbox{ if } \no{A}H \vee \no{H} \mbox{ is true,} \\
\end{array}
\right.
\]
because
\[
x + (\mu + \nu)(1-x) = \pr[A|H + (\mu + \nu) \no{A}|H] = \pr[(B|K)|(A|H) + (\no{B}|K)|(A|H)] = \mu + \nu \,.
\] 
Then, by coherence, it holds that $\mu + \nu = 1$ and hence 
\[
(B|K)|(A|H) + (\no{B}|K)|(A|H) = 1 \,,\;\;
\mbox{or equivalently:} \;\; (\no{B}|K)|(A|H) = 1 - (B|K)|(A|H) \,.
\]
Finally,
by defining  the negation of 
$(B|K)|(A|H)$ as
$\no{(B|K)|(A|H)}=1-(B|K)|(A|H)$,  it follows that
\begin{equation}\label{EQ:GENNEG}
	\no{(B|K)|(A|H)}=(\no{B}|K)|(A|H),
\end{equation}
which reduces  to the well known relation $\no{B|A}=\no{B}|A$, when $H=K=\Omega$.
\subsection{Comments on a paper by    Lassiter and Baratgin}
In this section we make some comparisons  with a recent paper by  Lassiter and Baratgin   (\cite{LaBa21}), where a trivalent semantics of  nested conditionals is adopted.
Some comments on \cite{LaBa21} are:\\
- the authors argue that the counterexamples to and criticisms of the trivalent logic of de Finetti are resolved by taking into account the notion of genericity and the basic aspect that in de Finetti's theory events are ``singular''. \\
-  the validity of the Import-Export principle is accepted;\\
- no particular analysis and/or proposal is given for defining compound and/or iterated conditionals, under the constraint of preserving the basic probabilistic properties. \\
In our approach  the events are singular, not generic, as clearly stated by de Finetti. Moreover, the Import-Export principle for  consequent-nested conditionals, which we also call {\em consequent-nested conditional reduction}, is not valid, that is $(D|C)|A\neq D|AC$. The invalidity of this principle in our approach is illustrated, for instance,  by the following  example. 
\begin{example}
	\label{EX:HgAgH}
	Given two events $H$ and $A$,
	let us consider
	the sentence 
	\begin{equation}\label{EQ:EXITER1}
		\mbox{``if $H$, then (if $A$ then $H$)''},
	\end{equation}
	which we represents by the iterated conditional
	$(H|A)|H)$. If the Import-Export principle were valid, the sentence (\ref{EQ:EXITER1}) would be equivalent to the sentence 
	\begin{equation}\label{EQ:EXITERIE2}
		\mbox{``if ($H$ and $A$), then $H$'' }.
	\end{equation}
	Then, it would be  $\prev[(H|A)|H)]=P(H|AH)=1$.  However, within our approach, by setting 
	$P(A|H)=x$,
	$P(H|A)=y$ and $\prev[(H|A)|H)]=\mu$,  it holds that
	\[
	(H|A)|H=(H|A)\wedge H+\mu\, \no{H}=
	\left\{
	\begin{array}{ll}
		1, \mbox{ if } AH \mbox{ is true},\\
		y, \mbox{ if } \no{A}H \mbox{ is true},\\
		\mu, \mbox{ if } \no{H} \mbox{ is true}.
	\end{array}
	\right.
	\]
	By coherence $\mu \in[y,1]$, hence $\prev[(H|A)|H]\geq P(H|A)$, with necessarily  $\prev[(H|A)|H]=1$   when $P(H|A)=1$.  More precisely it holds that \[
	\mu=P(A|H)+yP(\no{A}|H)=x+y(1-x).
	\]
\end{example}
Another example on the invalidity of the Import-Export principle for consequent-nested conditionals is given  in Section \ref{SEC:ITTOOR}, where it is shown that $(B|\no{A})|(A\vee B)\neq B|(\no{A}\wedge (A\vee B))=B|\no{A}B$. In the next example 
we observe that an
Import-Export principle  for antecedent-nested conditionals, which we call
{\em antecedent-nested conditional reduction},
is  invalid too (\cite[Remark 7]{SGOP20}).
\begin{example}
	Let us consider
 an experiment of drawing a ball randomly from an urn. Let $A$ and $H$ be the following events:\\
	$A$=``The drawn ball is white'';\\
	$H$=``The urn contains 99 white balls and 1 black ball''.\\
	Of course $P(A|H)=0.99$.
	Then let us consider the sentence ``if $A$ when $H$,  then $H$'', that is the sentence
	\begin{equation}\label{EQ:EXITER}
		\mbox{``if (if $H$ then $A$), then $H$''},
	\end{equation}
	which we represent by the iterated conditional
	$H|(A|H)$. If the antecedent-nested conditional reduction were valid, the sentence (\ref{EQ:EXITER}) would be equivalent to the sentence 
	\begin{equation}\label{EQ:EXITERIE}
		\mbox{``if ($H$ and $A$), then $H$'' }.
	\end{equation}
	Then, it would be that  $\prev[H|(A|H)]=P(H|AH)=1$, which is absurd in the cases were $P(H)$ is low. On the contrary within our approach, by setting $P(A|H)=x$ and $\prev[H|(A|H)]=\mu$,  it holds that
	\[
	H|(A|H)=H\wedge (A|H)+\mu\, \no{A}|H=AH+\mu \no{A}|H=
	\left\{
	\begin{array}{ll}
		1, \mbox{ if } AH \mbox{ is true},\\
		\mu, \mbox{ if } \no{A}H \mbox{ is true},\\
		\mu(1-x), \mbox{ if } \no{H} \mbox{ is true}.
	\end{array}
	\right.
	\]
	By linearity of prevision it holds that
	\[
	\mu=P(AH)+\mu P(\no{A}|H)=xP(H)+\mu (1-x),
	\]
	that is $\mu x=xP(H)$, and when  $x>0$ it follows that $\mu=P(H)$. In the extreme case where $x=0$, it holds that  $H|(A|H)\in\{1,\mu\}$ and by coherence $\mu=1$. Indeed,   as  $A|H=AH+x\no{H}=AH$, it follows that  $H|(A|H)=H|AH=1$ and hence $\mu$=1. 
\end{example}
Another example on the invalidity of the antecedent-nested conditional reduction is obtained if we consider   the iterated sentence ``if (if $H$ then $H$), then $H$'', which we represent by the iterated conditional $H|(H|H)$.
\begin{example}
	Given any event $H\neq \emptyset$, if  the antecedent-nested conditional reduction were valid, we would obtain  that  $
	H|(H|H)=H|HH=1,
	$
	and $\prev[H|(H|H)]=1$.
	But, the equality $H|(H|H)=1$ cannot be valid in our approach. Indeed, if it were $H|(H|H)=1$, then from  
	(\ref{EQ:p-entail-iter})
	it  would follow that  $H|H$ would p-entail $H$,  in which case the unique coherent extension   $P(H)=x$ of the (unique coherent) assessment $P(H|H)=1$ should be $x=1$. On the contrary,  the assessment $(1,x)$ on $\{H|H,H\}$, with $H\neq \Omega$, is coherent for every $x\in[0,1]$.
	
	In our approach, in agreement with the intuition, 
	we can show  that $H|(H|H)=H$. Indeed, 
	by defining $\prev[H|(H|H)]=\mu$ and by observing that $H\wedge (H|H)=H$ and $\no{H}|H=0$, it holds that 
	\begin{equation}\label{EQ:SUE}
		\begin{array}{ll}
			H|(H|H)=H\wedge (H|H)+\mu(\no{H}|H)=H\;\;\; \mbox{and}\;\;\; \prev[H|(H|H)]=P(H).
		\end{array}
	\end{equation}
	For instance, by considering the  event $H=$ {\em Sue passes the exam},  under  the antecedent-nested conditional reduction
	the probability that {\em Sue passes the exam, conditionally on   Sue passes the exam,  given that she passes the exam},  should be 1, which is a very strange result. In our approach,  as a natural result, 
	the prevision of the iterated sentence  {\em if Sue passes the exam  when she passes the exam, then she passes the exam}  simply coincides with the probability that {\em Sue passes the exam}.  
\end{example}
\subsection{Trivalent semantics and validity: notes on a paper by Egr{\'e}, Rossi, and Sprenger}
\label{SEC:EGRE}
In this section we make a  comparison with  some results given in  \cite{EgRS20}. With respect to \cite{EgRS20}, we adopt a restricted approach where basic events, like $A$ and $C$, are two-valued objects, which are true or false. Then, we construct on them  conditional events which are true, or false, or void. We do not consider the case where basic events can be uncertain,  
because this uncertainty results from individual judgments (\cite{deFi80}).
\\
In  \cite{EgRS20} the truth values of a conditional 
$\emph{if } A \emph{ then } C$, true, or false, or void, are represented numerically by 1, or 0, or $\frac{1}{2}$, respectively. In particular, $\frac{1}{2}$ is a kind of objective representation of the logical value {\em void}.  The same representation is used for compound conditionals which are still interpreted as three-valued objects. Egré et al. (\cite{EgRS20}) remark, among many other aspects, that the choice of how ``grouping
indeterminate values with false antecedents ... is a classical point of contention between trivalent logics of conditionals''. Moreover, they observe that de Finetti does not handle conjunction and disjunction à la Bochvar/Weak Kleene (\cite{Bochvar37}), but that he is in line
with the strong Kleene scheme (\cite{BeKr01,Kleene38,Kleene52}). \\
In addition, in \cite{EgRS20} it is noted that ``de Finettian conditionals keep the epistemic notion of assertability and the semantic notion of truth separate, while allowing for a fruitful interaction: degrees of assertability can be defined directly in terms of the truth conditions''. Indeed, given a
simple conditional  $X$, its degree of assertability is defined as:
Ast$(X) = P(X \;\mbox{is true}|X \; \mbox{has a classical truth value})$.
The notion of assertion has a central importance in our approach too. Indeed, given a conditional $\emph{if } A \emph{ then } C$, in our analysis when for instance $P(C|A) = \frac{1}{2}$ we will not be inclined to assert the conditional as true nor as false. As $P(C|A)$ goes up to $0.6, 0.7, \ldots$, we'll become more and more inclined to assert $\emph{if } A \emph{ then } C$. But what does $\emph{if } A \emph{ then } C$ having a value of $\frac{1}{2}$ mean for assertion in \cite{EgRS20}? When do we become more inclined to assert $\emph{if } A \emph{ then } C$, according to them? Probably, their answer would be that $\emph{if } A \emph{ then } C$ having a value of $\frac{1}{2}$ only means that the conditional is `void', while its assertability is represented by $P(C|A)$. We can then observe that our approach is more natural because by $P(C|A)$ we represent both the logical value `void' and the assertability of the conditional. Moreover, similarly to the case of unconditional events where you pay $P(A)$ by receiving $A \in \{1,0\}$, if in a conditional bet you pay the amount $P(C|A)$ you will receive the random amount $C|A \in \{1,0,P(C|A)\}$. Also the close relationship between truth and assertability can be better explained in our approach. Let us consider, for instance, the following sentence, examined in \cite{EgRS20},
\begin{equation}\label{EQ:PPF}
	\text{if } \overbrace{\emph{Paul is in Paris}}^{A}, \text{ then} \overbrace{\emph{Paul is in France}}^{C},
	%\mbox{If Paul is in Paris, then Paul is in France,}
\end{equation}
which typically is judged as true, even if Paul is for instance in Berlin, in which case the value of (\ref{EQ:PPF}) is \emph{void}. In   \cite{EgRS20} it is remarked that ``when we call sentences such as (\ref{EQ:PPF}) \emph{true}, what we really mean is that they are {\em maximally
	assertable}'', that is $P(C|A)=1$. Indeed, as $A$ logically implies $C$, it holds that the value 0 becomes impossible and $P(C|A)=1$; then, $C|A \in \{1,P(C|A)\}=\{1\}$, that is $C|A=1$, which is the natural way to explain the intuitive judgement that the sentence (\ref{EQ:PPF}) is true.
As another example (see  \cite[Conditional (3)]{EgRS20}), suppose Mary believes the
following conditional:
\begin{equation}\label{EQ:CCH}
	\text{If } 
	\overbrace{\emph{the Church is East of the City Hall}}^{A}, \text{ then } \overbrace{\emph{the City Hall is West of the Church}}^{C}.
\end{equation}
As claimed in \cite{EgRS20}, ``the proposition that Mary believes appears analytically true. Nonetheless,
on the de Finettian analysis its truth value depends on the position of the City Hall
with respect to the Church: the conditional may be evaluated either as true or as indeterminate.'' 
The apparent analyticity of (\ref{EQ:CCH}) is  explained because it is 
maximally assertable, regardless of its actual truth value. Indeed, the sentences $A$  and $C$ in (\ref{EQ:CCH}) are equivalent, and   the conditional $\emph{if } A \emph{ then } C$  is maximally assertable because $P(C|A)=P(A|A)=1$. 
\begin{remark}
	We add a further comment about the notion of assertability. For \cite{EgRS20}, given any event $A\neq \emptyset$, the conditional
	$\emph{if } A \emph{ then } A$ is not a logical truth, for it fails to be \emph{true} for them when $A$ is \emph{false} (and in the case where $A$ is a conditional event, also when $A$ has the \emph{middle}, or  $\frac12$, indeterminate truth value). Nevertheless, $\emph{if } A \emph{ then } A$ is always \emph{maximally assertable} for them because $P(A|A) = 1$. They make essentially this point in the example examined in  \cite[p. 194, sentence     (1)]{EgRS20}. 
 But, in their interpretation of  de Finetti's three-valued logic, we can derive maximum assertability also for the following nested conditionals:
 %But we can derive the following in their interpretation of  de Finetti's three-valued logic:
	\[
	(a) \;\;\; \emph{\mbox{if (if $A$ then $A$) then $A$} }\,;\;\;\;\;\;\;
	(b) \;\;\; \emph{\mbox{if (if $\no{A}$ then $\no{A}$) then $\no{A}$}} \,.
	\]
	Notice that $(a)$ and $(b)$ do not make the de Finetti three-valued logic inconsistent. That is because 
	$\emph{if } A \emph{ then } A$ and 
	$\emph{if } \no{A} \emph{ then } \no{A}$  are not always \emph{true} in that system, and so we cannot  infer, by MP, that both $A$ and $\no{A}$ are always \emph{true} in the system. But by what \cite{EgRS20} say about \emph{assertability}, $(a)$ and $(b)$, and \emph{if} $A$ \emph{ then} $A$ and 
	\emph{if} $\no{A}$ \emph{then} $\no{A}$, should always be \emph{assertable} for them. According to \cite{EgRS20}'s account of assertability  (\cite[p. 193]{EgRS20}), the probability of a conditional, $\emph{if } A \emph{ then } C$, is the probability that it is true given that it is non-void, i.e. it has a \emph{classical} truth value. This is given by the cases in which $AC$ is true out of the cases in which $A$ is true. Now, considering for instance $(a)$, 
 (\emph{if} $A$ \emph{then} $A$) \emph{and} $A$
 is true whenever 
 \emph{if} $A$ \emph{then} $A$ is true, and so $(a)$ must have maximum assertability.
	By what \cite{EgRS20} hold, $(a)$ is assertable when $A$ is not. To fix the assertability of $(a)$, not-$A$ cases are irrelevant and are ignored, giving $(a)$ maximum assertability. As we see it, their position implies this result.
	Why then are not both $A$ and $\no{A}$ always assertable for them? It should always be the case that $C$ is assertable when 
 \emph{if} $A$ \emph{then} $C$ and $A$ are both assertable. 
 Otherwise, we have to follow \cite{McGe89} in claiming that MP is invalid, and \cite{EgRS20} appear disinclined to do this. \\
In formal terms, in de Finetti trivalent logic, 
 by recalling  (\ref{EQ:DFITER}), it holds that  $(B|K)|_{df}(A|H)=B|AHK$, and hence
	\[
	A|_{df}(A|A) \; = \; A|A, \,\;\;\no{A}|_{df}(\no{A}|\no{A}) \; = \; \no{A}|\no{A}.
	\]	
 Therefore,  both $(a)$ and $(b)$ are  maximally assertable. \\
	This problem can be avoided by taking our approach, where the iterated conditional sentence 
 \emph{if} (\emph{if} $H$ \emph{then} $A$), \emph{then} (\emph{if} $K$ \emph{then} $B$)
 is represented by the iterated conditional 
	\[
	(B|K)|(A|H) = (A|H)\wedge(B|K) + \mu \no{A}|H , 
	\]
	where $\mu$ is the prevision of $(B|K)|(A|H)$, which in the betting framework represents the amount to be paid in order to receive the random quantity $(B|K)|(A|H)$. We observe that, when $B=H=A$ and $K=\Omega$, it follows that $(A|H)\wedge(B|K)=(A|A)\wedge(A|\Omega) = A$ and $\no{A}|H=\no{A}|A=0$. Then the iterated conditional sentence becomes the sentence $(a)$, which we represent by the iterated conditional
	\[
	A|(A|A) = (A|A)\wedge A + \mu \no{A}|A = A  \,,\;\; \mbox{with} \;\; \mu = P(A) \,,
	\]
	and similarly the sentence $(b)$ is represented by the iterated conditional
	\[
	\no{A}|(\no{A}|\no{A}) = (\no{A}|\no{A})\wedge \no{A} + \eta A|\no{A} = \no{A}  \,,\;\; \mbox{with} \;\; \eta = P(\no{A}) \,.
	\]
	As we can see, both $(a)$ and $(b)$  are not maximally assertable in
our approach.
\end{remark}
Our approach differs, in particular, from that  of \cite{EgRS20}  in the notion of validity, which for us is given by the notion of p-validity of Adams,  except for his convention about conditionals whose antecedents have 0 probability. By considering, for instance, just one-premise inferences, we can define p-validity by saying that such an
inference is p-valid if and only if the conclusion has probability 1 whenever the premise has
probability 1. This definition does not imply, for us, that it is p-valid to infer $AC$ from $C|A$,
as $P(AC)$ will be 0 when $P(A) = 0$, while $P(C|A)$ could be  1 for us.
In \cite{EgRS20} the notion of validity where a one-premise inference is valid if and only if its conclusion has value 1 whenever the premise has value 1 is called $SS-$validity ({\em strict-to-strict} validity in \cite{CERV12}). This definition would imply for
them that it was ``valid'' to infer $AC$ from their conditional ``if $A$ then $C$''. When $A$ has value 0, for them ``if $A$ then $C$'' must have value $\frac{1}{2}$.
In our approach when $A$ is false it holds that  $C|A$ has value $P(C|A)$. Moreover, 
to infer $AC$ from $C|A$ is not p-valid because  $P(C|A)=1$  does not imply $P(AC)=1$; indeed the assessment $(1,y)$ on $\{C|A,AC\}$ is coherent for every $y\in[0,1]$.
Another definition of ``validity'' considered in \cite{EgRS20} is the {\em tolerant} one ($TT-$validity), which implies that an inference is invalid when its premise can have value 1/2 while its conclusion has value 0.  
We observe that to infer ``if $A$ then $C$'' from ``not-$A$ or $C$'' is
TT-valid, but not p-valid because $P(\no{A}\vee C)=1$ does not imply $P(C|A)=1$. Indeed 
the assessment $(1,y)$ on $\{\no{A}\vee C,C|A\}$ is coherent for every $y\in[0,1]$.
\\
In our approach compound conditionals are no longer three-valued conditionals, but they are introduced at Level 2 as suitable conditional random quantities with values in the unit interval $[0,1]$. We continue to  represent a conditional ``if $A$ then $C$'' by the conditional event $C|A$, which is a three-valued object; but, compound conditionals  have an increasing number of numerical  values. In particular, under logical independence of basic events, the conjunction of $n$ conditional events has $2^n+1$ possible values. For instance, given any logically independent events $A,H,B,K$, the conjunction of the  two conditionals ``if $H$ then $A$'' and ``if $K$ then $B$'' is a conditional random quantity, denoted by $(A|H)\wedge(B|K)$, with a set of $2^2+1=5$ possible values $\{1, 0, x, y, z\} \subset [0,1]$, where $x=P(A|H), y=P(B|K),$ and $z$ is the prevision of $(A|H)\wedge(B|K)$. The values $x, y, z$ are coherently assessed in a subjective way and hence, when specifying them, there are infinitely many ways of defining the conjunction. In particular, for $A|H$ there are infinitely many ways of (subjectively) assessing $x$ and hence, as a numerical object (the so-called indicator), we can (subjectively) define $A|H$ in infinitely many ways. \\
We can also briefly examine the Linearity principle, which we can represent by the disjunction $(A|B) \vee (B|A)$. This principle is validated by the  two main logics (CC/TT and DF/TT), studied in \cite{EgRS20}, and hence appears as a limitation of these logics. For instance, neither of ``if John is red-haired, then John is a doctor'' and ``if John is a doctor, then he is red-haired'' is accepted in ordinary reasoning (\cite{MacColl08}). \\ In our approach, the Linearity principle appears as not valid; indeed, we observe that in general $(A|B) \vee (B|A)$ is not maximally assertable because its prevision is not  necessarily equal to 1. Hence the random object $(A|B) \vee (B|A)$ does not coincide with the constant 1. Indeed, defining $P(A|B)=x, P(B|A)=y$ and $P[AB|(A \vee B)]=z$, in our approach it holds that (\cite{gilio13}): \\
\[
(A|B) \wedge (B|A) = AB|(A \vee B) \,,\;\;\; z= \left\{\begin{array}{ll}
	\frac{xy}{x+y-xy}, & \mbox{if}\; x+y>0, \\
	0,  & \mbox{if} \; x=y=0.
\end{array}
\right.
\]
Moreover, it can be verified that $(A|B) \vee (B|A) = (AB + x A\no{B} + y \no{A}B)|(A \vee B)$ and, defining  $\prev[(AB + x A\no{B} + y \no{A}B)|(A \vee B)]=\mu$, it holds that
\[
\mu=P[AB|(A \vee B)] + x P[A\no{B}|(A \vee B)] + y P[\no{A}B|(A \vee B)] \,.
\]
Then, by observing that
\[
P[AB|(A \vee B)]= P(A|B)P[B|(A \vee B)]=P(B|A)P[A|(A \vee B)]\,,
\]
and
\[
P[A\no{B}|(A \vee B)] = 1-P[B|(A \vee B)] \,,\;\;\; P[\no{A}B|(A \vee B)] = 1-P[A|(A \vee B)] \,,
\]
it follows that
\[
\mu = z + x (1-P[B|(A \vee B)]) + y (1-P[A|(A \vee B)]) = z+x-z+y-z =
\]
\[= x+y-z = \left\{\begin{array}{ll}
	\frac{(x+y)^2-xy(x+y+1)}{x+y-xy}, & \mbox{if}\; x+y>0, \\
	0,  & \mbox{if} \; x=y=0.
\end{array}
\right.
\]
Notice that $(A|B) \vee (B|A) = (A|B) + (B|A) - (A|B) \wedge  (B|A)$; moreover, the equality $\mu = x + y - z$ represents the {\em prevision sum rule} (\cite{GiSa14}). As we can see, $(A|B) \vee (B|A)$ and $(A|B) \wedge (B|A)$ are ``maximally assertable'' only if $x=y=1$, in which case it holds that $\mu=z=1$. \\
In \cite{EgRS20}  are also examined some conjunctive sentences which can never be true on DF/TT or CC/TT logics, because one of the conjuncts will always be indeterminate. Given any event $A$, with  $A\neq \emptyset$,
and $A\neq \Omega$, 
 an instance of ``obvious truth'' is obtained when considering the conjoined conditional $(A|A) \wedge (\no{A}|\no{A})$, which is always classified as indeterminate.  However,  in \cite{EgRS20}  it is observed that ``a
sentence such as: \\
\emph{(4) If the sun shines tomorrow, John goes to the beach; and if it rains, he goes to
	the museum. }\\
seems to be true (with hindsight) if the sun shines tomorrow and John goes indeed to
the beach. $\ldots$  How can intuitively plausible compound sentences have positive degree of assertability if they can never be true?'' We can easily verify that, in our approach, the conjunction $(A|A) \wedge (\no{A}|\no{A})$ is  maximally assertable. Indeed, $P(A|A)=P(\no{A}|\no{A})=1$ and $A|A=\no{A}|\no{A}=1$; then, by recalling that
\[
(A|H) \wedge (B|K) = (AHBK + P(A|H) \no{H}BK + P(B|K) AH\no{K})|(H \vee K) \,,
\]
when $H=A$ and $B=K=\no{A}$ it follows that
\[
(A|A) \wedge (\no{A}|\no{A}) = (P(A|A) \no{A} + P(\no{A}|\no{A}) A)|(A \vee \no{A}) = \no{A} + A =1 \,,
\]
and hence $\pr[(A|A) \wedge (\no{A}|\no{A})]=1$, that is $(A|A) \wedge (\no{A}|\no{A})$ is maximally assertable. \\
\subsection{On $SS\cap TT$ validity}
In  \cite{EgRS20}  the  inference of $if \; A \; then \; B$ from $if \; H \; then \; E$ is said to be $SS$-valid (denoted $\emph{if $H$  then  $E\,\,$ $\models_{SS}$\,\,
	if $A$ then $B$}$) if $AB$ is true when $EH$ is true. The same  inference is said to be  $TT$-valid (denoted $\emph{if $H$  then  $E\,\,$ $\models_{TT}$\,\,
	if $A$ then $B$}$) if
$\no{A}\vee B$ is true when $\no{H}\vee E$ is
true (i.e., $H\no{E}$ is true
when $A\no{B}$ is true). 
Based on \cite[Definition 3.3]{EgRS20}, 
an inference from a  set of conditionals $\Gamma$  to a conclusion $B|A$ 
is	$SS$-valid if and only if 
when each  conditional  $E|H$ in $\Gamma$ is $S$-true then $B|A$ is $S$-true.
Moreover,  an inference from  $\Gamma$  to  $B|A$ 
is	$TT$-valid if and only if 
when each conditional $E|H$ in $\Gamma$ is $T$-true then $B|A$ is $T$-true.

The symbol  $\models_{SS\cap TT}$ denotes that  the inference is both $SS$-valid and $TT$-valid.
By the Goodman and Nguyen inclusion relation $\subseteq $, given in formula (\ref{EQ:GN}),
it holds that 
\begin{equation}\label{EQ:GNSSTT}
	\emph{if $H$  then  $E\,\,$ $\models_{SS\cap TT}$\,\,
		if $A$ then $B$}  \;\;\; \Longleftrightarrow  \;\;\; E|H \subseteq B|A.
\end{equation}	
 An inference from a  set $\Gamma$ of conditionals to a conclusion $B|A$ 
is	$SS\cap TT$-valid if and only if 
when each  conditional  $E|H$ in $\Gamma$ is $S$-true   then $B|A$ is $S$-true  and when each conditional $E|H$ in $\Gamma$ is $T$-true  then $B|A$ is $T$-true.

\begin{remark}
	Given a family $\Gamma=\{E_1|H_1,\ldots, E_n|H_n\}$, in the approach of de Finetti the conjunction $
	C_{df}(\Gamma)=
	(E_1|H_1)\wedge_{df} \cdots \wedge_{df} (E_n|H_n)$
	is given by
	\begin{equation}
		C_{df}(\Gamma)=E_1H_1\cdots E_nH_n|(E_1H_1\cdots E_nH_n \vee   \no{E}_1H_1 \vee \cdots \vee \no{E}_nH_n).
	\end{equation}
	We observe that $C_{df}(\Gamma)$ 
	is true if and only if all the conditional events in $\Gamma$ are true;  $C_{df}(\Gamma)$ is false if and only if at least a  conditional event in $\Gamma$ is false;  $C_{df}(\Gamma)$ is void otherwise.
	Then, given a further conditional event $B|A$, it holds that 
	\begin{equation}
		\Gamma \models_{SS\cap TT}  B|A\;\; \Longleftrightarrow\;\; C_{df}(\Gamma)\models_{SS\cap TT}  B|A,
	\end{equation}
	which,
	by recalling (\ref{EQ:GNSSTT}) and (\ref{EQ:GNDEF}), amounts to 
	\begin{equation}
		\Gamma \models_{SS\cap TT}  B|A\;\; \Longleftrightarrow\;\;  C_{df}(\Gamma)\subseteq B|A \;\;   \Longleftrightarrow
    C_{df}(\Gamma \cup \{B|A\})=C_{df}(\Gamma).
	\end{equation}
\end{remark}
\begin{example}
In this example we show that 
$SS\cap TT$-validity implies transitivity. We set  $\Gamma=\{C|B,B|A\}$ and we consider the conclusion $C|A$.
We observe that 
\[
C_{df}(\Gamma)=ABC|(ABC\vee A\no{B}\vee B\no{C})\subseteq C|A,
\]
because $ABC\subseteq AC$ and $A\no{C}=AB\no{C}\vee A\no{B}\no{C}\subseteq
A\no{B}\vee B\no{C}=
 \no{ABC}\wedge (ABC\vee A\no{B}\vee B\no{C})$. Therefore, $\{C|B,B|A\}\models_{SS\cap TT} C|A$, that is transitivity is $SS\cap TT$-valid. However, 
 $SS\cap TT$ validity from a set of premises does not imply that the set p-entails the conclusion, because transitivity is not p-valid (\cite{gilio16}). For instance, let us consider  the events 
$A=$``Peter will win millions in a lottery'',  
$B=$``Peter quits his job'', $C=$``Peter will have money troubles''.  Peter might be confident that, if he quits his job, he will have money troubles (i.e. $P(C|B)$ high), and that, if he wins millions in a lottery, he will quit his job (i.e. $P(B|A)$ high). But Peter will have no confidence at all that, if he win millions in the lottery, he will have money troubles (i.e. $P(C|A)$ low).  We also recall that Transitivity is a non-theorem of any nonmonontonic logic \cite{KrLM90}.
\end{example}

Now we show 
that the property of   p-entailment (see Definition \ref{DEF:PE}) does not imply $SS \cap TT$-validity.
We observe that (\cite[Equation (15)]{GiSa21A})
\begin{equation}
	E|H\leq B|A \Longleftrightarrow
	E|H \subseteq B|A \,, \text{ or }  \; EH=\emptyset\,, \text{ or }   \; A \subseteq B,
\end{equation}
and  by assuming $E|H$ p-consistent (i.e., $EH\neq \emptyset$), from (\cite[Theorem 7]{gilio13}) and (\ref{EQ:GNSSTT})
it holds that
\begin{equation}\label{EQ:GNPENTAIL}
	E|H \Rightarrow_p B|A \;\; \Longleftrightarrow \;\; E|H \subseteq B|A \,,\; or \; A \subseteq B  \;\;\; \Longleftrightarrow \;\; E|H \models_{SS\cap TT} B|A \,,\; or \; A \subseteq B  \;\;\; \Longleftrightarrow \;\;\; E|H \leq B|A \,.
\end{equation}
We observe that, in the particular case where $A \subseteq B$ it holds that $P(B|A)=1$, thus $B|A=1$; then $E|H \leq B|A$ and hence $E|H \Rightarrow_p B|A$. However, in this case one has that $\emph{if $H$  then  $E\,\,$ $\not \models_{SS\cap TT}$\,\,
	if $A$ then $B$}$, that is the inference is not $SS \cap TT$-valid. Indeed, when $\no{A}EH$ is true, it holds that $if \; H \; then \; E$ is true, but $if \; A \; then \; B$ is void, and hence the inference is not $SS$-valid. Thus,
 p-entailment does not imply $SS \cap TT$-validity. \\
To deepen the differences among p-entailment and $SS\cap TT$-validity, let us 
consider  a premise set  $\Gamma=\{E_1|H_1,E_2|H_2\}$ and a conclusion $E_3|H_3$, with $H_3 \nsubseteq E_3$. Under logical independence,  the 27 constituents generated by $\{E_1|H_1,E_2|H_2,E_3|H_3\}$ are illustrated in Table \ref{TAB:TABLE}.
\begin{table*}[]
	%	\vspace{-1cm}
	%	\renewcommand*{\arraystretch}{.8}
	\centering
	\begin{tabular}{L|L||L|L||L|L}
		
		C_1    & E_1H_1       E_2H_2       E_3H_3      &  	C_{10} & \no{E}_1H_1  E_2H_2       E_3H_3      & 	C_{19} & \no{H}_1     E_2H_2       E_3H_3      \\
		C_2    & E_1H_1       E_2H_2       \no{E}_3H_3 &	C_{11} & \no{E}_1H_1  E_2H_2       \no{E}_3H_3 &	C_{20} & \no{H}_1     E_2H_2       \no{E}_3H_3 \\
		C_3    & E_1H_1       E_2H_2       \no{H}_3    &	C_{12} & \no{E}_1H_1  E_2H_2       \no{H}_3    &	C_{21} & \no{H}_1     E_2H_2       \no{H}_3    \\
		C_4    & E_1H_1       \no{E}_2H_2  E_3H_3      &	C_{13} & \no{E}_1H_1  \no{E}_2H_2  E_3H_3      &	C_{22} & \no{H}_1     \no{E}_2H_2  E_3H_3      \\
		C_5    & E_1H_1       \no{E}_2H_2  \no{E}_3H_3 &	C_{14} & \no{E}_1H_1  \no{E}_2H_2  \no{E}_3H_3 &	C_{23} & \no{H}_1     \no{E}_2H_2  \no{E}_3H_3 \\
		C_6    & E_1H_1       \no{E}_2H_2  \no{H}_3    &	C_{15} & \no{E}_1H_1  \no{E}_2H_2  \no{H}_3    &	C_{24} & \no{H}_1     \no{E}_2H_2  \no{H}_3    \\
		C_7    & E_1H_1       \no{H}_2     E_3H_3      &	C_{16} & \no{E}_1H_1  \no{H}_2     E_3H_3      &	C_{25} & \no{H}_1     \no{H}_2     E_3H_3      \\
		C_8    & E_1H_1       \no{H}_2     \no{E}_3H_3 &	C_{17} & \no{E}_1H_1  \no{H}_2     \no{E}_3H_3 &	C_{26} & \no{H}_1     \no{H}_2     \no{E}_3H_3 \\
		C_9    & E_1H_1       \no{H}_2     \no{H}_3    &	C_{18} & \no{E}_1H_1  \no{H}_2     \no{H}_3    &	C_0    & \no{H}_1     \no{H}_2     \no{H}_3    \\
		
	\end{tabular}
	\caption{Constituents $C_h$'s  associated with  the family 
		$\{E_1|H_1,E_2|H_2,E_3|H_3\}$}
	\label{TAB:TABLE}
\end{table*}
We observe that, in order the inference from $\Gamma$ to $E_3|H_3$ be $ SS\cap TT$-valid, it should be  $C_2= C_3=C_8=C_{20}=C_{26}=\emptyset$. Moreover, in order the quasi conjunction $QC(\Gamma)$ of the premises in $\Gamma$ satisfies the condition $QC(\Gamma) \subseteq E_3|H_3$, it should be  $C_2= C_3=C_8=C_{9}=C_{20}=C_{21}=C_{26}=\emptyset$. Then
\[
QC(\Gamma) \subseteq E_3|H_3 \;\;\; \Longrightarrow \;\;\;
\Gamma\; {\models_{SS\cap TT} }\;E_3|H_3,
\]
but
\[
\Gamma\; {\models_{SS\cap TT} }\;E_3|H_3\;\;\; \not \Longrightarrow \;\;\; QC(\Gamma) \subseteq E_3|H_3 \,.
\]
In addition, the inference 
from $\Gamma$ to $E_3|H_3$ may be $SS\cap TT$-valid, but not p-valid, i.e. it may be that 
$\Gamma \models_{SS\cap TT}  E_3|H_3$, but
$\Gamma \not \Rightarrow_p E_3|H_3$.
In order to illustrate this aspect  let us assume that $C_2= C_3=C_8=C_{20}=C_{26}=\emptyset$ and, so that the inference is 
$SS\cap TT$-valid, with $C_9\neq \emptyset$ and $C_{21}\neq \emptyset$. Then, we can verify that  $\Gamma \not \Rightarrow_p E_3|H_3$. Indeed, if $C_9$ is true, then  $E_1|H_1$ is true and $E_3|H_3$ is void, and hence $E_1|H_1 \nsubseteq E_3|H_3$. Likewise, if $C_{21}$ is true, then  $E_2|H_2$ is true and $E_3|H_3$ is void, and hence $E_2|H_2 \nsubseteq E_3|H_3$. Moreover, if $C_{9}\vee C_{21}$ is true, then $QC(\Gamma)$ is true and $E_3|H_3$ is void, and hence $QC(\Gamma)\nsubseteq E_3|H_3$. Therefore, by Theorem \ref{THM:ENTAIL-CS}, $\Gamma \not \Rightarrow_p E_3|H_3$, and the assessment $(1,1,0)$ on $\{E_1|H_1,E_2|H_2,E_3|H_3\}$ is coherent.

% 
% 
%  We now verify that the assessment $\P=(x_1,x_2,x_3)=(1,1,0)$ on $\F=\{E_1|H_1,E_2|H_2,E_3|H_3\}$ is coherent and hence,  by Theorem \ref{THM:ENTAIL-CS},  it follows  that $\Gamma \not \Rightarrow_p E_3|H_3$. 
% To avoid trivial cases, assume that 
% $H_3 \nsubseteq E_3$. By considering the points $Q_h$'s associated with the pair $(\F,\P)$, it holds that $Q_9=Q_{21}=\P$; then, any vector $(\lambda_1, \ldots, \lambda_{26})$, with $\lambda_{9}=\lambda \in [0,1], \lambda_{21}=1-\lambda, \lambda_h=0, h \neq 9, h \neq 21$, is a solution of the following system:
%	\[
%	(\Sigma) \hspace{2cm} \P = \lambda_1Q_1 + \cdots +\lambda_{26}Q_{26} \,,\; \sum_{h=1}^{26} \lambda_h=1 \,,\; \lambda_h \geq 0 \,,\; \forall \, h \,.
%	\]
%	For all these solutions it holds that $\sum_{h:C_h \subseteq H_3} \lambda_h=0$, that is, by recalling Algorithm 1,  it holds that $I_0=\F_0=\emptyset$, or $I_0=\{3\}$ and $\F_0=\{E_3|H_3\}$. Then, in order to complete the checking of coherence, we have to verify the coherence of the sub-assessment $x_3=0$ when  $\F_0=\{E_3|H_3\}$. Of course, such sub-assessment is coherent because $H_3 \nsubseteq E_3$; therefore, the assessment $(1,1,0)$ on $\{E_1|H_1,E_2|H_2,E_3|H_3\}$ is coherent. \\
The notion of $SS\cap TT$-validity would become equivalent to the condition $QC(\Gamma) \subseteq E_3|H_3$ if we  slightly modified it  in the following way: \\
\emph{An inference from a set $\Gamma$ to a conclusion $B|A$ is $(SS\cap TT)^*$-valid, denoted by $\Gamma \models_{(SS\cap TT)^*} B|A$, if and only if the following conditions are satisfied: $(i)$ when at least one conditional in $\Gamma$ is true and all the other ones are void it holds that $B|A$ is true; $(ii)$ when $B|A$ is false it holds that at least a conditional in $\Gamma$ is false. }\\
Notice that, in the case where $\Gamma=\{E|H\}$,  the condition  $E|H\models_{(SS\cap TT)^*} B|A$ is equivalent to $E|H\models_{SS\cap TT} B|A$, that is $E|H\subseteq B|A$, as shown in (\ref{EQ:GNSSTT}).\\
Under the notion of $(SS\cap TT)^*$-validity, we can show that the concepts of p-validity (in the setting of coherence), delta-epsilon validity and ``yielding'' of Adams, and $(SS\cap TT)^*$-validity are all deeply related to each other. These relationships are illustrated by the following steps: \\
(a) $(SS\cap TT)^*$-validity of an inference from a set of premises $\Gamma$ to a conclusion $B|A$ means that when $QC(\Gamma)$ is true then $B|A$ is true and when $B|A$ is false then $QC(\Gamma)$ is false. \\
(b) $(SS\cap TT)^*$-validity of the inference from $\Gamma$ to $B|A$ is equivalent to the condition $QC(\Gamma) \subseteq B|A$. \\
(c) as defined by Adams (\cite[p.168]{adams98}), 	 $\Gamma$ ``yields'' $B|A$ if only if $QC(\Gamma) \subseteq B|A$ and hence 
$(SS\cap TT)^*$-validity coincides with  the notion of ``yielding''. \\
(d) a p-consistent family $\F$ p-entails $B|A$ if and only if there exists $\Gamma \subseteq F$ such that  $\Gamma$ ``yields" $B|A$, with $\Gamma=\emptyset$ when $B|A$ is a logical truth, i.e. $A \subseteq B$ 
(\cite[p. 187]{adams98}). \\
(e) analogously, in the setting of coherence, by Theorem \ref{THM:ENTAIL-CS}, it holds that 
\[
\F \; \Rightarrow_p \; B|A \;\;\;\;\;\; \Longleftrightarrow \;\;\;\;\;\; QC(S) \subseteq B|A, \; \mbox{for some nonempty} \; S \subseteq \F, \;\; \mbox{or} \;\; A \subseteq B.
\]
or, in other terms, the p-entailment of $B|A$ from $\F$  means that, given the coherent assessment $P(E|H)=1,\, \forall \, E|H \in \F$, its unique coherent extension on $B|A$ is $P(B|A)=1$. \\
(f) equivalently, denoting by $\C(\F)$ the conjunction of the conditional events in $\F$, p-entailment has been characterized in the following way (see Theorem \ref{THM:PENT})
\begin{equation}\label{EQ:THM18}
	\F \; \Rightarrow_p \; B|A \;\;\; \Longleftrightarrow \;\;\; \C(\F) \leq B|A  \;\;\; \Longleftrightarrow \;\;\; \C(\F) \wedge B|A = \C(\F) \,.
\end{equation}
We also remark that, given any finite families $\F$ and $\K$, with $\K \subseteq \F$, it holds that $\C(\K) \leq \C(\F)$. As a consequence, if $\F$ p-entails $B|A$ and $\C(S) \leq B|A$ for some $S \subset \F$, then $S$ p-entails $B|A$ and, for each $\K$ such that $S \subset \K \subset \F$, it holds that $\C(S) \leq \C(\K) \leq \C(\F) \leq B|A$, and hence $\K$ p-entails $B|A$.
\section{A Probabilistic   Deduction Theorem}
\label{SEC:PROBDEDTHM}
In this section we point out that the (full) deduction theorem is not valid in our approach; then, we
give some probabilistic deduction theorems with further related results and examples. \\
In \cite{EgRS20}, after observing that  the  de Finetti conditional  operator is not adequate in some respects (for instance, Modus Ponens and the Identity Law are not satisfied), the   Jeffrey conditional operator ([51]) is recalled and some comparison is made with the de Finetti operator. More precisely, given two conditionals $A$ and $B$, with values in $\{1, \frac12, 0\}$, the (trivalent) Jeffrey conditional $A \rightarrow B$ is defined in Table \ref{TAB:JEFFREY}, where $d_i \in \{1, \frac12\},\, i=1,2,3,4$. 
\begin{table}[!ht]
	\centering
	\begin{tabular}{rl|c|c|c}
	&$ B \;$ &  \;$1$\;   &  \;$\frac12$\;   & \;$0$   \\
	$A$ &&&&\\
		\hline		
		\, $1$           &&\; 1 \; & \;$d_1$\;  & \;0\;        \\
		\, $\frac12$   & & \;$d_2$\; & \;$d_3$\;  & \;0\;        \\
		\, $0$ && \;$\frac12$\; & \;$d_4$\; & \;$\frac12$\;   \\

	\end{tabular}
	\vspace{0.3cm}	
	\caption{Values of the Jeffrey conditional $A \rightarrow B$,  where $d_i \in \{1, \frac12\},\, i=1,2,3,4$.}
	\label{TAB:JEFFREY}
\end{table} \\
Then, in   \cite[Proposition 5.5]{EgRS20},  it is shown that, given a set $\Gamma$ of conditionals and two further conditionals $A$ and $B$, for Jeffrey’s conditionals and $TT$-validity the full classical deduction theorem holds, that is  \begin{equation}\label{EQ:DTTT}
	\Gamma \,,\; A \;\; \models_{J/TT} \;\; B \;\;\; \Longleftrightarrow \;\;\; \Gamma \;\; \models_{J/TT} \;\; A \rightarrow B \,.
\end{equation}
We recall that, in our analysis,  $A$ and $B$ are events (i.e., with values in $\{1, 0\}$) and hence $A \rightarrow B$ is the 
 conditional event $B|A$. Moreover, in our approach we use the notion  of p-validity; then,  the  classical deduction theorem  is not valid, as shown by the counterexample below.
\begin{example}\label{EX:ORIF}	
Given two logically independent events $A$ and $B$,	consider the \emph{or-to-if}  inference
	\begin{equation}
		\label{EQ:DISCOND}
		\mbox{from }
		``A \mbox{\emph{ or} } B \mbox{''} \mbox{ to }
		``\mbox{\emph{if }} \no{A} \mbox{ \emph{then }} B\mbox{''},
	\end{equation}
	which  is valid under the material conditional interpretation, where ``\emph{if} $\bigcirc$ \emph{then } $\Box$'' is interpreted as ``\;$\no{\bigcirc}\vee \Box$''.
	In addition, under the conditional event interpretation, where
	``\emph{if} $\bigcirc$ \emph{then } $\Box$'' is interpreted as ``\;$\Box|\bigcirc$'', the inference (\ref{EQ:DISCOND})
	%
	% $\bigcirc\rightarrow \Box=\no{\bigcirc}\vee \Box$.
	%Indeed,
	%\[
	%(A\vee B)\rightarrow (\no{A}\rightarrow B)=\no{A}\no{B}\vee (A\vee B)=\Omega.
	%\]
	%Under the conditional event interpretation, where
	%$\bigcirc\rightarrow \Box=\Box|\bigcirc$, the inference (\ref{EQ:DISCOND})
	is $J/TT$-valid. Indeed,   by setting  $\Gamma=\{A\vee B\}$, it holds that $\Gamma \models_{J/TT} B|\no{A}$, because $A\vee B$ is false when  $B|\no{A}$ is false. From (\ref{EQ:DTTT}) it also holds that $\Gamma \cup\{\no{A}\} \models_{J/TT} B$;  indeed, when $A \vee B$ and $\no{A}$ are both $T$-true (i.e., both true, or equivalently $\no{A}B$ true), then $B$ is $T$-true (i.e., $B$ is true). \\
 However, in our approach,  by observing that $\Gamma\cup\{\no{A}\}$ is p-consistent, it holds that
	\begin{equation}\label{EQ:DTP}
		\Gamma \cup \{\no{A} \}\;\; \Rightarrow_{p} \;\; B \,,\;\;\; \mbox{but} \;\; \Gamma\;\; \nRightarrow_{p} \;\; B|\no{A} \,,
	\end{equation}
	and hence (\ref{EQ:DTTT}) is not satisfied  when TT-validity is replaced by p-validity.
	In order to verify (\ref{EQ:DTP})  we observe that  if  $P(A \vee B)=1$ and $P(\no{A})=1$, then $P(AB)=P(A\no{B})=0$ and $P(B)=P(A \vee B)-P(A\no{B})=P(A \vee B)=1$, thus $\Gamma \cup \{\no{A} \}\;\; \Rightarrow_{p} \;\; B$. But  $P(A\vee B)=1$ does not imply  $P(B|\no{A})=1$; for instance, when $P(A)=1$ and $P(B)=0$, it holds that $P(A \vee B)=1$ and the assessment $(1,x)$ on $\{A \vee B,B|\no{A}\}$ is coherent for every $x \in [0,1]$.  Thus
	$\Gamma \nRightarrow_{p}  B|\no{A}$. The same conclusion also follows  from  (\ref{EQ:GNPENTAIL}) by observing that  $A\vee B\nleq B|\no{A}$.  Indeed, when $x<1$, if  $A$ is true, then   $A\vee B=1> x=B|\no{A}$.
Note that some instances of (\ref{EQ:DISCOND}) in natural language are highly intuitive, and that, if (\ref{EQ:DISCOND}) were valid, then the paradoxical inference 
	 from ``$A$'' to ``if $\no{A}$ then $B$'' would also be valid. For ``$A$ or $B$'' would follow from ``$A$'', and then ``if $\no{A}$ then $B$'' would follow by (\ref{EQ:DISCOND})
 (see \cite{gilio12} for a full treatment of when this inference is, and is not, intuitive; see also \cite{Cruz2015}). 
\end{example}
We remark that analogous comments can be made for the full deduction theorem derivation of ``$B$ entails (if $A$ then $B$)'' in the trivalent analysis, making the conditional ``close to'' the material conditional and its paradoxes. Indeed, by the full deduction theorem it holds that
\[
B \,,\; A \;\; \models_{J/TT} \;\;  \;\; B \;\;\; \Longleftrightarrow \;\;\; B \;\; \models_{J/TT} \;\; B|A \,.
\]
We do not have this problem either with p-entailment. Indeed $\{B,\, A\} \; \Rightarrow_{p} \; B$, but $B \; \nRightarrow_{p} \; B|A$. 
As observed above, the full deduction theorem states that 
given a set $\Gamma$ of conditionals and two further conditionals $A$ and $B$,   $\Gamma \cup\{A\}$ entails $B$ if and only if  $\Gamma$ entails the conditional \emph{if} $A$ \emph{then} $B$; we  call  this result \emph{Classical Deduction Theorem}.  It  holds when a conditional  \emph{if} $H$ \emph{then} $E$ is looked at as the event $\no{H}\vee E$ (the material conditional) and,  
as we observed before, it also holds  for Jeffrey's conditionals and TT-validity.

Our approach, where  we look at  conditionals as  conditional events,  is based on the notion of  p-entailment. Using the notion of p-entailment the  Classical Deduction Theorem is not valid; indeed, given two events 
$A\neq \emptyset$, $B\neq \emptyset$ and a   p-consistent family  $\Gamma$  of conditional events, 
from   $\Gamma\cup \{A\} \Rightarrow_p B$  it does not follow that  $\Gamma \Rightarrow_p B|A$. Interestingly,  under the assumption that $\Gamma\cup \{A\}$ is p-consistent,   
the converse holds, as shown  by the result below.
\begin{theorem}\label{THM:CONVERSE}
	Let $A\neq \emptyset$, $B\neq \emptyset$ be any events and   $\Gamma$ be any finite  family of conditional events such that  $\Gamma \Rightarrow_p B|A$.  If $\Gamma\cup\{A\}$ is p-consistent, then  ${\Gamma \cup \{ A\}\Rightarrow_p B}$.
\end{theorem}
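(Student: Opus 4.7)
The plan is to show that every coherent assessment on $\Gamma \cup \{A, B\}$ with probability $1$ on each element of $\Gamma \cup \{A\}$ must also assign probability $1$ to $B$. The key tool is the indicator representation of the conditional event $B|A$ together with the p-entailment hypothesis $\Gamma \Rightarrow_p B|A$, which pins down $P(B|A)$; linearity of prevision then forces $P(AB) = 1$, and monotonicity gives $P(B) = 1$.

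First, by p-consistency of $\Gamma \cup \{A\}$ (Definition \ref{DEF:PC}), the assessment assigning $1$ to each $E_i|H_i \in \Gamma$ and $P(A) = 1$ is coherent. Fix any such coherent assessment and any coherent extension of it to $\Gamma \cup \{A, B\}$, writing $z = P(B)$. To conclude $\Gamma \cup \{A\} \Rightarrow_p B$ via Definition \ref{DEF:PE}, it suffices to show $z = 1$.

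Second, enlarge the family once more to include the conditional event $B|A$; by the fundamental extension property of coherent assessments, such an extension exists. Since $\Gamma \Rightarrow_p B|A$ and the values on $\Gamma$ are all $1$, Definition \ref{DEF:PE} forces $P(B|A) = 1$ in this extension. Now invoke the indicator identity (\ref{EQ:AgH}) with $x = P(B|A) = 1$:
\[
B|A \;=\; AB + x\,\no{A} \;=\; AB + \no{A}.
\]
Taking previsions on both sides and using $P(\no{A}) = 1 - P(A) = 0$ yields
\[
1 \;=\; P(B|A) \;=\; P(AB) + P(B|A)\,P(\no{A}) \;=\; P(AB),
\]
so $P(AB) = 1$. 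Since $AB \subseteq B$, monotonicity of coherent probability gives $z = P(B) \geq P(AB) = 1$, hence $z = 1$.

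The main technical point, rather than a genuine obstacle, is the bookkeeping around coherent extensions: we must ensure that projecting the $\{A, B, B|A\}$-assessment back to $\{A, B\}$ is consistent with the original extension, which follows directly from coherence. Everything else reduces to a single application of linearity of prevision to the indicator of $B|A$, and the hypothesis $\Gamma \cup \{A\}$ is p-consistent is used only to guarantee that the relevant ``all ones'' assessment on the premises exists in the first place (so that the statement $\Gamma \cup \{A\} \Rightarrow_p B$ is non-vacuous in the sense of Definitions \ref{DEF:PC}--\ref{DEF:PE}).
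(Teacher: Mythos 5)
Your proof is correct, but it follows a genuinely different route from the paper's. The paper argues entirely at the level of the algebra of compound conditionals: from $\Gamma \Rightarrow_p B|A$ it invokes the characterization of p-entailment via conjunction (Theorem \ref{THM:PENT}) to get the pointwise inequality $\C(\Gamma)\leq B|A$, then observes $\C(\Gamma)\wedge A\leq (B|A)\wedge A=AB\leq B$, and concludes again by Theorem \ref{THM:PENT} using the p-consistency of $\Gamma\cup\{A\}$. You instead work directly with Definition \ref{DEF:PE}: you fix an arbitrary coherent assessment on $\Gamma\cup\{A,B\}$ with all premises equal to $1$, extend it coherently to include $B|A$ (and, implicitly, $AB$, since your linearity step $P(B|A)=P(AB)+P(B|A)P(\no{A})$ is the compound probability theorem applied within the extended assessment), use the hypothesis $\Gamma\Rightarrow_p B|A$ to force $P(B|A)=1$, and finish with $P(AB)=P(B|A)P(A)=1$ and monotonicity $AB\subseteq B$. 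The trade-off is that your argument leans on the fundamental extension theorem for coherent assessments (every coherent assessment admits a coherent extension to further conditional events), a standard fact in the coherence setting but one the paper never needs to invoke, whereas the paper's proof stays within its toolkit of inequalities between conditional random quantities and is arguably shorter; your version, on the other hand, makes the probabilistic mechanism (why $P(B)$ must be $1$) more transparent and uses only the definition of p-entailment rather than its conjunction-based characterization.
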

\begin{proof}
	As $\Gamma \Rightarrow_p B|A$, it holds that $\C(\Gamma)\leq B|A$.
	We observe that $\C(\Gamma)\wedge A$ coincides with $\C(\Gamma)$, or $0$ according to whether $A$ is true, or false, respectively.  Moreover, $(B|A)\wedge A$ coincides with $B|A$, or $0$ according to whether $A$ is true, or false, respectively.
	Then, as $\C(\Gamma)\leq B|A$ and $(B|A)\wedge A=AB$, it follows that 
	\[
	\C(\Gamma)\wedge A\leq  (B|A)\wedge A=AB\leq B,
	\]
	which, under the assumption that $\Gamma\cup\{A\}$ is p-consistent,  amounts to the p-entailment of $B$ from $\Gamma \cup \{ A\}$, that is ${\Gamma \cup \{ A\} \Rightarrow_p B}$.
\end{proof} 
\begin{remark}
	We observe that even if $\Gamma$ is  p-consistent and 
	$\Gamma \Rightarrow_p B|A$, it could be that $\Gamma \cup \{ A\}$ is not p-consistent. For instance, let  $\Gamma=\{\no{A}\}$, where $A\neq \emptyset$ and $A\neq \Omega$.   $\Gamma$ is p-consistent because $\no{A}\neq \emptyset$, and 
	$\Gamma \Rightarrow_p A|A$ because    $A|A=1$; but  $\Gamma\cup\{A\}=\{\no{A},A\}$ is not p-consistent because $P(A)=P(\no{A})=1$ is not coherent.
\end{remark}
In  what follows, after a preliminary result, we show that,  by adding the  condition $\Gamma \Rightarrow_p A$  to the assumption  $\Gamma \cup\{A\} \Rightarrow_p B$, we obtain that $\Gamma  \Rightarrow_p B|A$ (and $\Gamma  \Rightarrow_p A|B$), which we have been calling  \emph{Probabilistic   Deduction Theorem}.
By recalling Theorem \ref{THM:PENT},   the property 	$\F \Rightarrow_p E|H$, with   $\F$  p-consistent, can be equivalently written as  $\C(\F)\leq E|H$, or $\C(\F)\wedge(E|H)= \C(\F)$, where by recalling Remark \ref{REM:CONGCONG}, $\C(\F)\wedge(E|H)=\C(\F\cup \{E|H\})$. 

\begin{theorem}\label{THM:PRELPDT}
	Let $A\neq \emptyset$, $B\neq \emptyset$ be any events and   $\Gamma$ be any finite  family of conditional events, with $\Gamma \cup\{A\}$ p-consistent. Then,  the following assertions are equivalent:
	\\
	$(i)$	 
	$\Gamma \Rightarrow_p A$ and  $\Gamma \cup \{ A\} \Rightarrow_p B$.  \\
	$(ii)$		$\Gamma \Rightarrow_p AB$;\\
	$(iii)$
	$\Gamma \Rightarrow_p A$ and  $\Gamma  \Rightarrow_p B$; 
\end{theorem}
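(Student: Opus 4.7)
The plan is to translate each of the three assertions into an equivalent inequality on indicators via Theorem \ref{THM:PENT}, and then exploit the fact that $A$ and $B$ are two-valued events while $\C(\Gamma)$ takes values in $[0,1]$. Since $\Gamma\cup\{A\}$ p-consistent implies $\Gamma$ p-consistent, all three p-entailments are well-posed. For any event $A$, Definition \ref{DEF:CONGn} (together with Remark \ref{REM:CONGCONG}) yields $\C(\Gamma\cup\{A\})=\C(\Gamma)\wedge A$, a random quantity equal to $\C(\Gamma)$ when $A=1$ and equal to $0$ when $A=0$ (since the conditioning event of $A$ is $\Omega$, never false). Thus Theorem \ref{THM:PENT} rewrites (i) as $\C(\Gamma)\leq A$ together with $\C(\Gamma)\wedge A\leq B$; (ii) as $\C(\Gamma)\leq AB$; and (iii) as $\C(\Gamma)\leq A$ together with $\C(\Gamma)\leq B$.

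Next I would prove (ii)$\Leftrightarrow$(iii) by a pointwise argument. Since $A,B\in\{0,1\}$, one has $AB=\min\{A,B\}$. If $\C(\Gamma)\leq AB$, the inequalities $\C(\Gamma)\leq A$ and $\C(\Gamma)\leq B$ follow at once from $AB\leq A$ and $AB\leq B$. Conversely, on outcomes where $A=0$ or $B=0$, the two inequalities combined with $\C(\Gamma)\geq 0$ force $\C(\Gamma)=0=AB$; on outcomes where $A=B=1$ we have $AB=1\geq\C(\Gamma)$. Then I would prove (i)$\Leftrightarrow$(ii) by a similar case analysis. Assuming (ii), clearly $\C(\Gamma)\leq A$; moreover $\C(\Gamma)\wedge A$ equals $\C(\Gamma)\leq B$ when $A=1$ and equals $0\leq B$ when $A=0$, so $\Gamma\cup\{A\}\Rightarrow_p B$. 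Conversely, (i) gives $\C(\Gamma)\leq A$, so when $A=0$ we have $\C(\Gamma)=0=AB$, and when $A=1$ we have $\C(\Gamma)=\C(\Gamma)\wedge A\leq B=AB$; in both cases $\C(\Gamma)\leq AB$.

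The main obstacle is pinning down the behaviour of $\C(\Gamma)\wedge A$ when $A$ is an unconditional event, so that the chain of equivalences rests on solid ground; once this is settled via Definition \ref{DEF:CONGn} (or equivalently Remark \ref{REM:CONGCONG} applied with $A=A|\Omega$), the remainder is a straightforward pointwise case analysis on the two possible values of the event $A$, leveraging that $\C(\Gamma)\in[0,1]$ while $B\in\{0,1\}$.
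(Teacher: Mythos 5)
Your proposal is correct and follows essentially the same route as the paper: both reduce each assertion to the conjunction characterization of p-entailment (Theorem \ref{THM:PENT}) and exploit the fact that $\C(\Gamma)\wedge A$ equals $\C(\Gamma)$ or $0$ according to whether $A$ is true or false. The only difference is stylistic: the paper chains the equalities $\C(\Gamma)=\C(\Gamma\cup\{A\})=\C(\Gamma\cup\{A,B\})=\C(\Gamma)\wedge AB$ and the inequality $\C(\Gamma)\leq AB$ directly, whereas you carry out the same reductions by a pointwise case analysis on the value of the event $A$.
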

\begin{proof}
	We will prove the theorem by verifying that 
	\[
	(i)\Longrightarrow (ii) \Longrightarrow (iii) \Longrightarrow (i).
	\]
	$(i)\Longrightarrow (ii).$	
	By  Theorem \ref{THM:PENT}, 
	as $\Gamma \Rightarrow_p A$ and 
	$\Gamma \cup\{A\}\Rightarrow_p B$,  it holds that 
	\[
	\C(\Gamma)=\C(\Gamma\cup\{A\})=\C(\Gamma \cup \{A\}\cup \{B\})=\C(\Gamma) \wedge AB.
	\]
	Then,   still by Theorem \ref{THM:PENT},  $\Gamma\Rightarrow_p AB$.\\
	$(ii)\Longrightarrow (iii)$. 
	The condition  $\Gamma \Rightarrow_p AB$ is equivalent to the condition
	$\C(\Gamma) \leq AB$. Thus, $\C(\Gamma) \leq A$ and $\C(\Gamma) \leq B$;  hence 
	$\Gamma \Rightarrow_p A$ and 	   $\Gamma \Rightarrow_p B$.\\
	$(iii)\Longrightarrow (i)$. 
	We only need to prove that  $\Gamma \cup \{ A\} \Rightarrow_p B$.
	As  $\Gamma \Rightarrow_p A$ and 	   $\Gamma \Rightarrow_p B$, by Theorem \ref{THM:PENT}, it follows that 
	$\C(\Gamma)\wedge A=\C(\Gamma\cup\{A\})= \C(\Gamma)\leq B$. Then,  $\Gamma \cup \{ A\} \Rightarrow_p B$.  
\end{proof}
\begin{remark}\label{REM:EXCHANGE}
	We observe that Theorem \ref{THM:PRELPDT} still holds if we exchange the roles of $A$ and $B$. Therefore, 
	\[
	\Gamma \Rightarrow_p A\;\;\mbox{ and }\;\; \Gamma \cup \{ A\} \Rightarrow_p B \;\;\;\; \Longleftrightarrow \;\;\;\; \Gamma \Rightarrow_p B\;\;\mbox{ and }\;\; \Gamma \cup \{ B\} \Rightarrow_p A.
	\]
	Notice that, if $\Gamma \cup \{A\}\Rightarrow_p B$, then the assessment $P(E|H)=1,\forall E|H\in\Gamma, P(A)=P(B)=1$ is coherent, and hence $\Gamma \cup \{B\}$ is p-consistent.
\end{remark}
By Theorem \ref{THM:PRELPDT} we obtain the following result.

\begin{theorem}[Probabilistic Deduction Theorem]\label{THM:PDT}
	Let $A\neq \emptyset$, $B\neq \emptyset$ be any events and   $\Gamma$ be any finite  family of conditional events, with $\Gamma \cup\{A\}$ p-consistent.	 If $\Gamma \Rightarrow_p A$ and $\Gamma \cup \{ A\} \Rightarrow_p B$, then: $(a)$ $\Gamma \Rightarrow_p B|A$;  $(b)$ $\Gamma \Rightarrow_p A|B$. 
\end{theorem}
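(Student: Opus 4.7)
The plan is to reduce Theorem \ref{THM:PDT} to Theorem \ref{THM:PRELPDT} plus a pointwise comparison between the indicator of $AB$ and the indicators of the conditional events $B|A$ and $A|B$, and then to invoke the characterization of p-entailment via conjunction given in Theorem \ref{THM:PENT}.

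First, from the hypotheses $\Gamma \Rightarrow_p A$ and $\Gamma \cup \{A\} \Rightarrow_p B$, together with the p-consistency of $\Gamma \cup \{A\}$, Theorem \ref{THM:PRELPDT} (implication $(i) \Longrightarrow (ii)$) immediately gives $\Gamma \Rightarrow_p AB$. By Theorem \ref{THM:PENT}, this is equivalent to the pointwise inequality $\C(\Gamma) \leq AB$ between random quantities.

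Next, I would verify the simple inequalities $AB \leq B|A$ and $AB \leq A|B$ by checking the three cases of the partition associated with $A$ and $B$. When $AB$ is true, both sides equal $1$; when $A\no{B}$ (resp.\ $\no{A}B$) is true, $AB=0$ and $B|A=0$ (resp.\ $A|B=0$); when $\no{A}\,\no{B}$ is true, $AB=0$ while $B|A$ and $A|B$ take their (nonnegative) prevision values in $[0,1]$. Only the last case requires a small remark: if either $A$ or $B$ is the empty event the claim is trivial by Remark \ref{REM:CONST} and the case is vacuous; otherwise both conditional events are well-defined and their third values lie in $[0,1]$.

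Chaining the inequalities yields $\C(\Gamma) \leq AB \leq B|A$ and $\C(\Gamma) \leq AB \leq A|B$. Applying Theorem \ref{THM:PENT} in the reverse direction then gives $\Gamma \Rightarrow_p B|A$ and $\Gamma \Rightarrow_p A|B$, establishing both $(a)$ and $(b)$. I expect no serious obstacle: the real content is packaged inside Theorems \ref{THM:PRELPDT} and \ref{THM:PENT}, and the only new observation is the elementary bound $AB \leq B|A$, which captures the intuition that conjoining $A$ on both sides of a p-entailment can only make the conclusion \emph{weaker} than the corresponding conditional event.
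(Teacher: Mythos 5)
Your proposal is correct and matches the paper's own proof essentially step for step: it invokes Theorem \ref{THM:PRELPDT} to get $\Gamma \Rightarrow_p AB$, i.e.\ $\C(\Gamma)\leq AB$, and then uses the pointwise bounds $AB\leq AB+y\no{B}=B|A$ and $AB\leq AB+x\no{A}=A|B$ together with Theorem \ref{THM:PENT} to conclude. Your case-by-case verification of these bounds is just an expanded form of the paper's one-line algebraic identity, so there is nothing substantive to add.
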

\begin{proof}
	By Theorem \ref{THM:PRELPDT} ,  as $\Gamma \Rightarrow_p A$ and $\Gamma \cup \{ A\} \Rightarrow_p B$, it holds that 	$\Gamma \Rightarrow_p AB$, or equivalently $\C(\Gamma)\leq AB$. By defining $P(A|B)=x$ and $P(B|A)=y$, it holds that $AB\leq AB+x\no{A}=A|B$ and $AB\leq AB+y\no{B}=B|A$. Therefore 
	$\C(\Gamma)\leq A|B$ and $\C(\Gamma)\leq B|A$. Then, 
	both conditions $(a)$ and $(b)$ are satisfied.
\end{proof}
\begin{remark}\label{REM:GAMMA'}
	Given a non p-valid inference rule ``from $\Gamma$ infer $B|A$'', which satisfies the Classical Deduction Theorem, we can get a p-valid inference rule
	by adding the antecedent $A$ to the premise set  $\Gamma$. More precisely, given a set $\Gamma$
	and a conditional event $B|A$, with   $\Gamma\cup\{A\}$ p-consistent, such 
	that $\Gamma \cup\{A\}\Rightarrow_p B$, but  $\Gamma \nRightarrow_p B|A$,
	for the set  $\Gamma'=\Gamma\cup \{A\}$ it holds that $\Gamma' \Rightarrow_p A$ and $\Gamma'\cup \{A\} \Rightarrow_p B$. Thus, by 
	Theorem \ref{THM:PDT}, it holds that 
	$\Gamma' \Rightarrow_p B|A$ and $\Gamma' \Rightarrow_p A|B$.
\end{remark}	

The next result, which concerns the biconditional event $(A|B)\wedge (B|A)$, suggests an equivalent formulation of the Probabilistic Deduction Theorem.
\begin{theorem}\label{THM:BICONDRULE}
	Let $A\neq \emptyset$, $B\neq \emptyset$ be any events and   $\Gamma$ be any finite p-consistent  family of conditional events. Then, 
	$\Gamma \Rightarrow_p B|A$ and
	$\Gamma \Rightarrow_p A|B$ if and only if 
	$\Gamma \Rightarrow_p (A|B)\wedge (B|A)$. 
\end{theorem}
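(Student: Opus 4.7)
The plan is to reduce the equivalence to inequalities between conditional random quantities by applying the characterization of p-entailment given in Theorem \ref{THM:PENT}, namely that $\Gamma \Rightarrow_p X$ is equivalent to $\C(\Gamma) \leq X$ (and also to $\C(\Gamma) \wedge X = \C(\Gamma)$). Everything then boils down to manipulating conjunctions using commutativity, associativity, and monotonicity as recalled in Remark \ref{REM:CONGCONG}.

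For the backward direction, assume $\Gamma \Rightarrow_p (A|B)\wedge (B|A)$, so that $\C(\Gamma)\leq (A|B)\wedge (B|A)$. By the monotonicity of conjunction (\ref{EQ:MONOTGEN}) applied to $\F_1=\{A|B\}$, $\F_2=\{B|A\}$, we have
\[
(A|B)\wedge (B|A)\;\leq\; A|B \quad\text{and}\quad (A|B)\wedge (B|A)\;\leq\; B|A.
\]
Chaining the inequalities yields $\C(\Gamma)\leq A|B$ and $\C(\Gamma)\leq B|A$, which by Theorem \ref{THM:PENT} gives $\Gamma\Rightarrow_p A|B$ and $\Gamma\Rightarrow_p B|A$.

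For the forward direction, suppose $\Gamma \Rightarrow_p B|A$ and $\Gamma \Rightarrow_p A|B$. By the equivalence $(ii)$ of Theorem \ref{THM:PENT}, this gives $\C(\Gamma\cup\{B|A\})=\C(\Gamma)$ and $\C(\Gamma\cup\{A|B\})=\C(\Gamma)$; equivalently, $\C(\Gamma)\wedge (B|A)=\C(\Gamma)$ and $\C(\Gamma)\wedge (A|B)=\C(\Gamma)$. Using commutativity and associativity of $\wedge$, together with the identity $\C(\F_1)\wedge \C(\F_2)=\C(\F_1\cup \F_2)$ from Remark \ref{REM:CONGCONG}, I would compute
\[
\C(\Gamma)\wedge[(A|B)\wedge (B|A)] = [\C(\Gamma)\wedge (A|B)]\wedge (B|A) = \C(\Gamma)\wedge (B|A) = \C(\Gamma),
\]
which is the desired inequality $\C(\Gamma)\leq (A|B)\wedge (B|A)$ and hence $\Gamma\Rightarrow_p (A|B)\wedge (B|A)$.

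The main subtlety to address is that Theorem \ref{THM:PENT} as stated uses a conditional event $E|H$ as the conclusion, whereas $(A|B)\wedge (B|A)$ is a five-valued conditional random quantity taking values in $[0,1]$. I would therefore rely on the underlying Definition \ref{DEF:PE} of p-entailment for conditional random quantities, noting that the characterization $\Gamma \Rightarrow_p X \iff \C(\Gamma)\wedge X = \C(\Gamma)$ carries over by the same coherence argument used to prove Theorem \ref{THM:PENT}, since $(A|B)\wedge (B|A)$ is bounded in $[0,1]$ and the compound prevision theorem (\ref{EQ:COMPPREVTHM}) forces the prevision of $\C(\Gamma)\wedge X$ to coincide with that of $\C(\Gamma)$ whenever the latter is $1$. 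This observation is what makes the reduction to inequalities legitimate here.
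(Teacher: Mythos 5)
Your proof is correct, but it takes a genuinely different route from the paper's. The paper argues directly on the biconditional event: it recalls that $(A|B)\wedge(B|A)=AB|(A\vee B)$ (\cite[Theorem 7]{SPOG18}) and that the unique coherent extension of a coherent assessment $(x,y)$ on $\{A|B,B|A\}$ to $z=P[AB|(A\vee B)]$ is $z=0$ when $(x,y)=(0,0)$ and $z=\frac{xy}{x+y-xy}$ otherwise (\cite[Section 7]{gilio13}); since $z=1$ if and only if $x=y=1$, both directions follow at once from the definition of p-entailment. You instead run the general conjunction calculus---Theorem \ref{THM:PENT}, monotonicity (\ref{EQ:MONOTGEN}), commutativity and associativity from Remark \ref{REM:CONGCONG}---which is essentially the argument the paper deploys later for the more general Theorem \ref{THM:PRELPDTGEN}, of which the present statement is the instance $A|H=B|A$, $B|K=A|B$ of the equivalence $(ii)\Leftrightarrow(iii)$. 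Your route buys generality (it works verbatim with $A|B$, $B|A$ replaced by arbitrary conditional events); the paper's buys brevity and an explicit quantitative formula for the prevision of the biconditional. Two points about your closing paragraph. First, the ``main subtlety'' is moot: since $AB\no{A}=\no{B}BA=\emptyset$, formula (\ref{EQ:CONGQC}) shows that $(A|B)\wedge(B|A)$ is not a five-valued object but coincides with the conditional event $AB|(A\vee B)$, so Theorem \ref{THM:PENT} applies to it directly in both directions. Second, the compound prevision theorem (\ref{EQ:COMPPREVTHM}) is not the tool that closes the argument for a random-quantity conclusion; what does the job (and what the paper uses in proving Theorem \ref{THM:PRELPDTGEN}) is that $P(E|H)=1$ for all $E|H\in\Gamma$ forces $\prev[\C(\Gamma)]=1$ by the Fr\'echet--Hoeffding bounds (Theorem \ref{THM:TEOREMAAI13}), after which $\C(\Gamma)\leq (A|B)\wedge(B|A)$ and the monotonicity of coherent prevision yield $\prev[(A|B)\wedge(B|A)]=1$. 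With either of these repairs your argument is complete; note also that the middle equality in your chain is cleanly justified by observing that $\C(\Gamma\cup\{A|B\})=\C(\Gamma)\leq B|A$, so that $\Gamma\cup\{A|B\}\Rightarrow_p B|A$ and Theorem \ref{THM:PENT}$(ii)$ applies once more.
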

\begin{proof}
	$(\Rightarrow)$	
	We recall that  $(A|B)\wedge (B|A)=AB|(A\vee B)$ (\cite[Theorem 7]{SPOG18}). 
	Given a coherent assessment $(x,y)$ on $\{A|B,B|A\}$ the unique coherent extension $z$ on $AB|(A\vee B)$ is $z=0$, when $(x,y)=(0,0)$, or $z=\frac{xy}{x+y-xy}$ when $(x,y)\neq (0,0)$ (\cite[Section 7]{gilio13}).  By assuming $P(E|H)=1$, for each  $E|H\in \Gamma$ it follows that $x=y=1$. Then $z=1$ and hence $\Gamma \Rightarrow_p (A|B)\wedge (B|A)$.\\
	$(\Leftarrow)$
	We observe that $z=1$ if and only if $x=y=1$.
	By assuming $P(E|H)=1$, for each  $E|H\in \Gamma$ it follows that $z=1$. Then $x=y=1$ and hence
	$\Gamma \Rightarrow_p B|A$
	and 
	$\Gamma \Rightarrow_p A|B$.
\end{proof}
In order to illustrate the previous results we examine some examples.

\begin{example}[Transitivity rule]\label{EX:TR}	
	Let $A,B,C$	 be logically independent events and 
	$\Gamma=\{C|B,B|A\}$. We consider the inference from $\Gamma$ to the conditional event  $C|A$.
	It is well known that transitivity is not p-valid, that is  $\Gamma \nRightarrow_{p} C|A$ (\cite{gilio16}, see also \cite[Section 10.1]{GiSa19}). 
	However,   it can be verified  that  $\Gamma \cup\{A\}\Rightarrow_{p} C$ (see also \cite[p.132, Exercise 1]{adams98}). Indeed, as $(B|A)\wedge A=BA$, it follows that 
	\begin{equation}\label{EQ:TRR}
\C(\Gamma \cup\{A\})=		(C|B)\wedge(B|A)\wedge A=(C|B)\wedge B A=ABC.
	\end{equation}
As 
	$\C(\Gamma \cup\{A\})=ABC\leq C  $, by Theorem \ref{THM:PENT} , $\Gamma \cup\{A\}\Rightarrow_{p} C$. Like Example \ref{EX:ORIF}, this example shows that the Classical Deduction Theorem does not hold in our approach, because   $\Gamma \cup\{A\}\Rightarrow_{p} C$, but $\Gamma \nRightarrow_{p} C|A$.
\end{example}	
\begin{example}[Modus ponens and  Transitivity]\label{EQ:TRR2}
	As in Example \ref{EX:TR},	
	let us consider the set $\Gamma=\{C|B,B|A\}$ and the conditional event  $C|A$. Based on Remark \ref{REM:GAMMA'}, we set $\Gamma'=\Gamma\cup\{A\}=\{C|B,B|A,A\}$.
	Of course $
	\Gamma'\Rightarrow_p A 
	$ because $A\in \Gamma'$; moreover,  it can be shown that 
	$
	\Gamma'\cup \{A\}\Rightarrow_p C 
	$. Indeed, $\Gamma' \cup\{A\}=\Gamma'$ and, as shown in (\ref{EQ:TRR}),  it holds that $\C(\Gamma')=(C|B)\wedge(B|A)\wedge A=ABC\leq C$.
	Then, by Theorems \ref{THM:PDT} and \ref{THM:BICONDRULE}, it holds that: $\Gamma' \Rightarrow_{p} C|A$,  $\Gamma' \Rightarrow_{p} A|C$, and $\Gamma' \Rightarrow_{p} (A|C)\wedge (C|A)$. 
\end{example}	
\begin{example}[On combining evidence: an example from Boole]\label{EX:BOOLE}	
	Let $A,B,C$	 be logically independent events.  We recall that  the inference from $\Gamma=\{C|B,C|A\}$ to  $C|AB$ is not p-valid (\cite[Section 10.1]{GiSa19}). 
	Moreover, $\Gamma\cup \{AB\}\Rightarrow_p C$, because 
	\begin{equation}\label{EQ:GAMMABOOLE}
		\C(\Gamma \cup\{AB\})=(C|B)\wedge (C|A)\wedge AB=(C|B)\wedge B \wedge (C|A)\wedge A=ABC\leq C.
	\end{equation}
	However,  the Classical Deduction Theorem does not hold in our approach because 
	$\Gamma\nRightarrow_{p} C|AB$ (\cite[Section 10.1]{GiSa19}. 
	Based on Remark \ref{REM:GAMMA'}, by defining  $\Gamma'=\Gamma \cup\{AB\}=\{C|B,C|A,AB\}$,  
	by Theorems \ref{THM:PDT} and \ref{THM:BICONDRULE}  it holds that: $\Gamma'\Rightarrow_{p} C|AB$,  $\Gamma'\Rightarrow_{p} AB|C$, and $\Gamma'\Rightarrow_{p} (AB|C)\wedge (C|AB)=ABC|(AB\vee C)$.
	Indeed, the assumptions 
	$	\Gamma'\Rightarrow_p \{AB\} 
	$ and	$
	\Gamma'\cup \{AB\}\Rightarrow_p C 
	$ of Theorem \ref{THM:PDT}   
	are satisfied because $AB\in \Gamma'$ and 
	$\C(\Gamma'\cup \{AB\})=\C(\Gamma\cup \{AB\})\leq AB$,  as shown in (\ref{EQ:GAMMABOOLE}).
\end{example}	

\begin{example}[Contraposition]
	\label{EX:CONTR}
	Recall that the inference from $\Gamma=\{C|A\}$ to $\no{A}|\no{C}$ is not p-valid. Indeed,  the assessment $(1,z)$ on $\{C|A,\no{A}|\no{C}\}$ is coherent for every $z\in[0,1]$.
	Moreover, $\Gamma \cup\{\no{C}\}\Rightarrow_{p} \no{A}$ because
	\begin{equation}\label{EQ:GAMMACONTRAP}
		\C(\Gamma \cup\{\no{C}\})=(C|A)\wedge \no{C}=0(A+ \no{A}C)+P(C|A)\no{A}\,\no{C}=P(C|A)\no{A}\,\no{C}\leq \no{A}.
	\end{equation}
	Based on Remark \ref{REM:GAMMA'}, by defining  $\Gamma'=\Gamma \cup\{\no{C}\}=\{C|A,\no{C}\}$,  
	by Theorems \ref{THM:PDT} and \ref{THM:BICONDRULE} it holds that: $\{C|A,\no{C}\}\Rightarrow_{p} \no{A}|\no{C}$, $\{C|A,\no{C}\}\Rightarrow_{p} \no{C}|\no{A}$, $\{C|A,\no{C}\}\Rightarrow_{p} \no{A}\,\no{C}|(\no{A}\vee \no{C})$.
	Indeed, the assumptions 
	$	\Gamma'\Rightarrow_p \{\no{C}\} 
	$ and	$
	\Gamma'\cup \{\no{C}\}\Rightarrow_p \no{A} 
	$ of Theorem \ref{THM:PDT}   
	are satisfied because $\no{C}\in \Gamma'$ and 
	$\C(\Gamma'\cup \{\no{C}\})=\C(\Gamma\cup \{\no{C}\})\leq  \no{A}$, as shown in (\ref{EQ:GAMMACONTRAP}).
\end{example}
\begin{remark}
	Notice that in Example \ref{EX:CONTR}, as $\Gamma \cup\{\no{C}\}= \{C|A,\no{C}\}$,  the inference $\Gamma \cup\{\no{C}\}\Rightarrow_{p} \no{A}$  coincides with  Modus Tollens.   Then, in the context of iterated conditionals, from  Theorem \ref{THM:PEITER} it holds that  $\no{A}|(\no{C}\wedge (C|A))=1$
		(see also \cite[Section 4.1]{GiPS20}). However, as Contraposition is not p-valid, from (\ref{EQ:p-entail-iter}) it holds that  $(\no{A}|\no{C})| (C|A)\neq 1$, that is the ``export''  of $C|A$ from $\no{A}|(\no{C}\wedge (C|A))$ to $(\no{A}|\no{C})| (C|A)$ does not hold. Moreover, as $\{C|A,\no{C}\}\Rightarrow_{p} \no{A}|\no{C}$, in terms of iterated conditionals, by Theorem \ref{THM:PEITER} , 
	it holds  that  $(\no{A}|\no{C})|(\no{C}\wedge (C|A))=1$.	
\end{remark}	

\begin{example}[Weak monotonicity]\label{EX:MONOT}
	 Monotonicity, that is the inference from $\Gamma=\{C|A\}$ to $C|AB$ is not p-valid. Indeed,  the assessment $(1,z)$ on $\{C|A,C|AB\}$ is coherent for every $z\in[0,1]$.
	Moreover, $\Gamma \cup\{AB\}\Rightarrow_{p} C$ because
	\begin{equation}\label{EQ:GAMMAMONOT}
		\C(\Gamma \cup\{AB\})=(C|A)\wedge A\wedge B=ABC\leq C.
	\end{equation}
	Based on Remark \ref{REM:GAMMA'}, by defining  $\Gamma'=\Gamma \cup\{AB\}=\{C|A,AB\}$,  
	by Theorems \ref{THM:PDT} and \ref{THM:BICONDRULE} it holds that:
	\begin{equation}\label{EQ:WEAKMONOTO}
		\{C|A,AB\}\Rightarrow_{p} C|AB \;\;\; \text{(Weak monotonicity)}
	\end{equation}	
	and
	\[
	\{C|A,AB\}\Rightarrow_{p} AB|C,\;\; \{C|A,AB\}\Rightarrow_{p} ABC|(AB\vee C).
	\]
	Indeed, the assumptions 
	$	\Gamma'\Rightarrow_p \{AB\} 
	$ and	$
	\Gamma'\cup \{AB\}\Rightarrow_p C 
	$ of Theorem \ref{THM:PDT}   
	are satisfied because $AB\in \Gamma'$ and 
	$\C(\Gamma'\cup \{AB\})=\C(\Gamma\cup \{AB\})\leq  C$, as shown in (\ref{EQ:GAMMAMONOT}).
\end{example}	
The next theorem gives a more general result related with 	Transitivity.
\begin{theorem}
	Let $A_1,\ldots, A_n$ be logically independent events. Then, 
	\[
	\{A_1, A_2|A_1,\ldots, A_n|A_{n-1}\} \Rightarrow_p A_i|A_j,\;\;\; \forall\; i,j=1,\ldots,n.
	\]
\end{theorem}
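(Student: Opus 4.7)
The plan is to reduce the statement to an application of Theorem \ref{THM:PENT} by first computing the conjunction $\C(\Gamma)$ of the premise family $\Gamma = \{A_1,A_2|A_1,\ldots,A_n|A_{n-1}\}$ explicitly. My conjecture, motivated by the calculation $(B|A) \wedge A = AB$ used in Example \ref{EX:TR}, is that
\[
\C(\Gamma) \;=\; A_1 A_2 \cdots A_n.
\]
I would prove this by induction on $n$, using the associativity of conjunction from Remark \ref{REM:CONGCONG}. The inductive step amounts to showing that if $E = A_1 \cdots A_k$ (an unconditional event) then $E \wedge (A_{k+1}|A_k) = E \, A_{k+1}$. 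This is a direct application of Definition \ref{CONJUNCTION} with $H = \Omega$ and $K = A_k$: the values indexed by a void antecedent all collapse because $\no{\Omega} = \emptyset$ and because $E \subseteq A_k$ forces $E\no{A_k} = \emptyset$, so only the ``true'' case $E A_{k+1}$ and the ``false'' case remain; a short check shows the ``false'' case is logically equivalent to $\overline{A_1 \cdots A_{k+1}}$, giving $E \wedge (A_{k+1}|A_k) = A_1 \cdots A_{k+1}$.

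Once $\C(\Gamma) = A_1 \cdots A_n$ is established, the rest is immediate. For any $i,j \in \{1,\ldots,n\}$ with $i \neq j$, set $x = P(A_i|A_j)$ so that, as indicators,
\[
A_i|A_j \;=\; A_i A_j + x\, \no{A_j} \;\geq\; A_i A_j \;\geq\; A_1 \cdots A_n \;=\; \C(\Gamma),
\]
while for $i = j$ we have $A_i|A_i = 1 \geq \C(\Gamma)$ trivially. P-consistency of $\Gamma$ follows from Theorem \ref{THM:PCC} together with the observation that, by logical independence of $A_1,\ldots,A_n$, the assessment $P(A_1 \cdots A_n) = 1$ is coherent. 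Hence Theorem \ref{THM:PENT} yields $\Gamma \Rightarrow_p A_i|A_j$ for every pair $(i,j)$.

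The only non-routine step is the inductive identification $\C(\Gamma) = A_1 \cdots A_n$; in particular the case analysis on the possible values $x_S$ of the conjunction in Definition \ref{DEF:CONGn}. The key point making it work is that each conditioning event $A_k$ in the $(k{+}1)$-th premise is already forced to be true on the support of the partial conjunction $A_1 \cdots A_k$, so all ``void'' branches in the conjunction vanish and $\C(\Gamma)$ reduces to an ordinary conjunction of events rather than a genuine five-valued (or $(2^n{+}1)$-valued) conditional random quantity. Once this collapse is observed, the theorem is an instance of the general fact that a premise set whose conjunction equals $A_1 \cdots A_n$ p-entails every conditional event whose antecedent and consequent are among the $A_i$'s.
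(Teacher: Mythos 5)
Your proposal is correct and follows essentially the same route as the paper's proof: you iteratively collapse the conjunction to $\C(\Gamma)=A_1\cdots A_n$ (the paper states the same key identity $A_1\cdots A_{k-1}\wedge(A_k|A_{k-1})=A_1\cdots A_k$) and then use $A_1\cdots A_n\leq A_iA_j\leq A_i|A_j$ together with the characterization of p-entailment. Your explicit check of p-consistency via logical independence is a small, welcome addition that the paper leaves implicit.
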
 
\begin{proof}
	We set $ \Gamma=\{A_1, A_2|A_1,\ldots, A_n|A_{n-1}\}$. Then, by observing that  $A_iA_j\leq A_i|A_j$, for all  $i,j$, and that 
	\[
	A_1\cdots A_{k-1}\wedge (A_k|A_{k-1})=A_1\cdots A_k, \;\; k=2,\ldots,n,
	\]
	we obtain
	\[
	\begin{array}{ll}
		\C(\Gamma)=A_1\wedge (A_2|A_1)\wedge \cdots \wedge (A_n|A_{n-1})=A_1A_2\wedge  (A_3|A_2)\wedge \cdots \wedge (A_n|A_{n-1})=\\
		=\cdots =A_1\cdots A_{n-1}\wedge   (A_n|A_{n-1})=A_1\cdots A_n\leq A_iA_j\leq A_i|A_j,  \;\; i, j = 1, 2,...,n.
	\end{array}
	\]
	Thus, $\Gamma\Rightarrow_p A_i|A_j,$ for every $ i,j=1,\ldots,n$.
	\end{proof}
%
%\begin{remark}
%Example \ref{EQ:TRR2}	can be generalized in the following way. Let $A_1,\ldots, A_n$ be logically independent events. We set $\Gamma=\{A_2|A_1,\ldots, A_n|A_{n-1}\}$ and $\Gamma'=\Gamma\cup\{A_1\}=\{A_1,A_2|A_1,\ldots, A_n|A_{n-1}\}$.
%We consider the inference from $\Gamma'$ to $A_{n}|A_1$.
%Of course $
%\Gamma'\Rightarrow_p A_1
%$ because $A_1\in \Gamma'$; moreover, 
%$
%\Gamma'\cup A\Rightarrow_p A_n 
%$. Indeed, $\Gamma' \cup\{A\}=\Gamma'$ and
% it holds that 
% \[
% \begin{array}{ll}
% \C(\Gamma')=A_1\wedge (A_2|A_1)\wedge \cdots \wedge (A_n|A_{n-1})=A_1A_2\wedge  (A_3|A_2)\wedge \cdots \wedge (A_n|A_{n-1})=\\
%=\cdots =A_1\cdots A_{n-1}\wedge   (A_n|A_{n-1})=A_1\cdots A_n\leq A_n.
% \end{array}
% \]
%Then, by Theorem \ref{THM:PDT}, it holds that $\Gamma' \Rightarrow_{p} A_n|A_1$. Moreover, by observing that 
%\[
% \C(\Gamma')=A_1\cdots A_n\leq A_iA_j\leq A_i|A_j, \;\; i, j = 1, 2,...,n,
%\]
%it holds that $\Gamma' \Rightarrow_{p} A_i|A_j$ for every  $i, j = 1, 2,...,n$.
% We observe that,  by Theorem \ref{THM:AND``RULE},  
%$\Gamma'$  p-entails   $(A_i|A_j)\wedge (A_j|A_i)$ for every $i, j = 1, 2,...,n$.
%\end{remark}	
We now give two results, 
Theorem \ref{THM:PRELPDTGEN} and Theorem \ref{THM:PDTGEN},
which generalize  Theorem \ref{THM:PRELPDT} and Theorem \ref{THM:PDT}, respectively. In particular, the events $A$ and $B$ are replaced by the conditional events $A|H$ and $B|K$, respectively. Then, the conjunction $AB$ becomes $(A|H)\wedge (B|K)$; moreover, $B|A$ and $A|B$ become the iterated conditionals $(B|K)|(A|H)$ and $(A|H)|(B|K)$, respectively.
\begin{theorem}\label{THM:PRELPDTGEN}
	Let $A|H$, $B|K$ be two conditional events, with $AH\neq \emptyset$  and  $BK\neq \emptyset$, and  $\Gamma$  any finite  family of  conditional events, with $\Gamma \cup\{A|H\}$ p-consistent. 
	Then,  the following assertions are equivalent:
	\\
	$(i)$	 
	$\Gamma \Rightarrow_p A|H$ and  $\Gamma \cup \{ A|H\} \Rightarrow_p B|K$.  \\
	$(ii)$		$\Gamma \Rightarrow_p (A|H)\wedge (B|K)$;\\
	$(iii)$
	$\Gamma \Rightarrow_p A| H$ and  $\Gamma  \Rightarrow_p B|K$; 
\end{theorem}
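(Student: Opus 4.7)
The plan is to adapt the proof of Theorem~\ref{THM:PRELPDT} line by line, replacing the role of ordinary conjunction $AB$ by the conjunction $(A|H) \wedge (B|K)$ of conditional events, and replacing the trivial bounds $AB \leq A$, $AB \leq B$ by the monotonicity inequalities in (\ref{EQ:MONOTGEN}). The main tools will be Theorem~\ref{THM:PENT} (which characterizes p-entailment of a conditional event $E|H$ from a p-consistent family $\F$ as $\C(\F) \leq E|H$, or equivalently $\C(\F \cup \{E|H\}) = \C(\F)$), together with the commutativity and associativity of $\wedge$ recalled in Remark~\ref{REM:CONGCONG}. The proof will be a cyclic chain $(i) \Rightarrow (ii) \Rightarrow (iii) \Rightarrow (i)$.

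For $(i) \Rightarrow (ii)$, starting from $\Gamma \Rightarrow_p A|H$ I would apply Theorem~\ref{THM:PENT} to get $\C(\Gamma) = \C(\Gamma \cup \{A|H\})$; combining with $\Gamma \cup \{A|H\} \Rightarrow_p B|K$, which yields $\C(\Gamma \cup \{A|H\}) = \C(\Gamma \cup \{A|H, B|K\})$, and using associativity/commutativity of $\wedge$ (Remark~\ref{REM:CONGCONG}) to rewrite $\C(\Gamma \cup \{A|H, B|K\}) = \C(\Gamma) \wedge (A|H) \wedge (B|K)$, I obtain $\C(\Gamma) \leq (A|H) \wedge (B|K)$. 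By Theorem~\ref{THM:PENT} this is $\Gamma \Rightarrow_p (A|H) \wedge (B|K)$.

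For $(ii) \Rightarrow (iii)$, from $\C(\Gamma) \leq (A|H) \wedge (B|K)$ I would invoke the monotonicity inequalities $(A|H) \wedge (B|K) \leq A|H$ and $(A|H) \wedge (B|K) \leq B|K$ from (\ref{EQ:MONOTGEN}) to conclude $\C(\Gamma) \leq A|H$ and $\C(\Gamma) \leq B|K$, i.e.\ both $\Gamma \Rightarrow_p A|H$ and $\Gamma \Rightarrow_p B|K$. For $(iii) \Rightarrow (i)$, it suffices to derive $\Gamma \cup \{A|H\} \Rightarrow_p B|K$; using $\Gamma \Rightarrow_p A|H$ I would rewrite $\C(\Gamma \cup \{A|H\}) = \C(\Gamma) \wedge (A|H) = \C(\Gamma)$, and then $\Gamma \Rightarrow_p B|K$ gives $\C(\Gamma \cup \{A|H\}) = \C(\Gamma) \leq B|K$, whence $\Gamma \cup \{A|H\} \Rightarrow_p B|K$ by Theorem~\ref{THM:PENT}.

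I do not anticipate any substantial obstacle: the assumptions $AH \neq \emptyset$, $BK \neq \emptyset$, and the p-consistency of $\Gamma \cup \{A|H\}$ guarantee that all conjunctions and iterated constructions appearing above are well-defined and that Theorem~\ref{THM:PENT} applies at each step (in particular, $\Gamma$, $\Gamma \cup \{B|K\}$ and $\Gamma \cup \{A|H, B|K\}$ are all p-consistent whenever the corresponding hypotheses hold, since probability $1$ on all elements is coherent). The only mildly delicate point is to make sure one invokes Theorem~\ref{THM:PENT} in the characterization $\C(\F)\wedge(E|H) = \C(\F)$ rather than the inequality form at the right moment, so that the telescoping of conjunctions works cleanly via Remark~\ref{REM:CONGCONG}.
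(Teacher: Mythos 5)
Your overall plan---the cyclic chain $(i)\Rightarrow(ii)\Rightarrow(iii)\Rightarrow(i)$, the telescoping of conjunctions via Theorem \ref{THM:PENT} and Remark \ref{REM:CONGCONG}, and the use of the monotonicity inequalities---is the same as the paper's, and your derivation of $\C(\Gamma)\leq (A|H)\wedge(B|K)$ in $(i)\Rightarrow(ii)$ as well as your whole step $(iii)\Rightarrow(i)$ coincide with the paper's proof. The gap is at the two points where you treat assertion $(ii)$ as interchangeable with the inequality $\C(\Gamma)\leq (A|H)\wedge(B|K)$ ``by Theorem \ref{THM:PENT}''. That theorem characterizes p-entailment only when the conclusion is a \emph{conditional event} $E|H$; the conclusion in $(ii)$ is the conjunction $(A|H)\wedge(B|K)$, which is a five-valued conditional random quantity, so its p-entailment is the one of Definition \ref{DEF:PE}, for which no such characterization is available in the paper. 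Hence, as written, the last step of your $(i)\Rightarrow(ii)$ is unjustified, and your $(ii)\Rightarrow(iii)$ starts from the inequality rather than from $(ii)$ itself, which in a cyclic proof tacitly presupposes the (unproved) converse implication as well.

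Both points are repairable, and the paper shows how. For $(i)\Rightarrow(ii)$: since $\Gamma$ is p-consistent, the assessment $P(E|H)=1$ for every $E|H\in\Gamma$ is coherent, and by Theorem \ref{THM:TEOREMAAI13} it forces $\prev[\C(\Gamma)]=1$; combined with your inequality $\C(\Gamma)\leq (A|H)\wedge(B|K)$ and the monotonicity of coherent prevision, this yields $\prev[(A|H)\wedge(B|K)]=1$, which is exactly $(ii)$ by Definition \ref{DEF:PE}. For $(ii)\Rightarrow(iii)$: argue directly from Definition \ref{DEF:PE}---under the all-ones assessment on $\Gamma$, $(ii)$ gives $\prev[(A|H)\wedge(B|K)]=1$, and the upper Fr\'echet--Hoeffding bound $\prev[(A|H)\wedge(B|K)]\leq\min\{P(A|H),P(B|K)\}$ (Theorem \ref{THM:TEOREMAAI13}) then forces $P(A|H)=P(B|K)=1$---without ever passing through the inequality $\C(\Gamma)\leq(A|H)\wedge(B|K)$. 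With these adjustments your argument becomes the paper's proof.
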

\begin{proof}
	We will prove the theorem by verifying that 
	\[
	(i)\Longrightarrow (ii) \Longrightarrow (iii) \Longrightarrow (i).
	\]
	$(i)\Longrightarrow (ii).$	
	By 	Theorem \ref{THM:PENT}, 
	as $\Gamma \Rightarrow_p A|H$ and 
	$\Gamma \cup\{A|H\}\Rightarrow_p B|K$,  it holds that 
	\begin{equation}\label{EQ:GAMMACONG}
	\C(\Gamma)=\C(\Gamma)\wedge (A|H)=\C(\Gamma) \wedge (A|H)\wedge (B|K)\leq (A|H)\wedge (B|K).
	\end{equation}
	As $\Gamma$ is p-consistent, the assessment $P(E|H)=1$, for every $E|H\in\Gamma$, is coherent.
	Moreover,  $P(E|H)=1$,  $\forall E|H\in\Gamma$, by Theorem \ref{THM:TEOREMAAI13}, implies that  $\prev[C(\Gamma)]=1$ and  from (\ref{EQ:GAMMACONG}) $\prev[(A|H)\wedge (B|K)]=1$.
	Thus, by Definition \ref{DEF:PE},  $\Gamma\Rightarrow_p (A|H)\wedge (B|K)$.\\
	$(ii)\Longrightarrow (iii)$. 
	As $\Gamma \Rightarrow_p (A|H)\wedge (B|K)$, when $P(E|H)=1$, for every $E|H\in\Gamma$, it follows that  $\prev[(A|H)\wedge (B|K)]=1$ and hence  $P(A|H)=P(B|K)=1$, because  $(A|H)\wedge (B|K)\leq \min\{(A|H),(B|K)\}$. Then 
	$\Gamma \Rightarrow_p A|H$ and 	   $\Gamma \Rightarrow_p B|K$.\\
	$(iii)\Longrightarrow (i)$. 
	We only need to prove that  $\Gamma \cup \{ A|H\} \Rightarrow_p B|K$.
	As  $\Gamma \Rightarrow_p A|H$ and 	   $\Gamma \Rightarrow_p B|K$, by Theorem~\ref{THM:PENT}, it follows that 
	$\C(\Gamma)\wedge (A|H)=\C(\Gamma\cup\{A|H\})= \C(\Gamma)\leq B|K$. Then,  $\Gamma \cup \{ A|H\} \Rightarrow_p B|K$.  
\end{proof}
\begin{remark}\label{REM:EXCHANGEGEN}
	Theorem \ref{THM:PRELPDTGEN} still holds if we exchange the roles of $A|H$ and $B|K$. Therefore, 
	\[
	\Gamma \Rightarrow_p A|H\;\;\mbox{ and }\;\; \Gamma \cup \{ A|H\} \Rightarrow_p B|K \;\; \Longleftrightarrow \;\;\Gamma \Rightarrow_p B|K\;\;\mbox{ and }\;\; \Gamma \cup \{ B|K\} \Rightarrow_p A|H.
	\]
\end{remark}
\begin{theorem}[Generalized Probabilistic Deduction Theorem]\label{THM:PDTGEN}
	Let $A|H$, $B|K$ be two conditional events, with $AH\neq \emptyset$  and  $BK\neq \emptyset$, and  $\Gamma$  any p-consistent finite  family of  conditional events, with $\Gamma \cup \{ A|H\}$ p-consistent.  If $\Gamma \Rightarrow_p A|H$ and $\Gamma \cup \{ A|H\} \Rightarrow_p B|K$, then: $(a)$ $\Gamma \Rightarrow_p (B|K)|(A|H)$;  $(b)$ $\Gamma \Rightarrow_p (A|H)|(B|K)$. 
\end{theorem}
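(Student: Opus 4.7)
The plan is to mirror the strategy used in the proof of Theorem \ref{THM:PDT} (where one deduced $\Gamma \Rightarrow_p B|A$ and $\Gamma \Rightarrow_p A|B$ from the inequality $\mathcal{C}(\Gamma) \leq AB \leq \min\{A|B,\, B|A\}$), replacing the unconditional events $A,B$ by the conditional events $A|H, B|K$, the conjunction $AB$ by the compound $(A|H)\wedge(B|K)$, and the role of the inequality by the compound prevision theorem in equation (\ref{EQ:COMPREV}).

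First I would invoke Theorem \ref{THM:PRELPDTGEN}: the two hypotheses $\Gamma \Rightarrow_p A|H$ and $\Gamma\cup\{A|H\}\Rightarrow_p B|K$ together yield the key intermediate p-entailment $\Gamma \Rightarrow_p (A|H)\wedge(B|K)$, which is the natural analogue of the step $\Gamma \Rightarrow_p AB$ in Theorem \ref{THM:PDT}. This reduces the problem to showing that $\Gamma \Rightarrow_p (A|H)\wedge(B|K)$ entails each of $\Gamma \Rightarrow_p (B|K)|(A|H)$ and $\Gamma \Rightarrow_p (A|H)|(B|K)$.

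Next, to establish (a), I would fix an arbitrary coherent prevision assessment extending $P(E|H')=1$ for every $E|H'\in\Gamma$ to the family $\Gamma \cup \{A|H,\, B|K,\, (A|H)\wedge(B|K),\, (B|K)|(A|H)\}$. From the intermediate p-entailment we have $\prev[(A|H)\wedge(B|K)]=1$; combined with the Fréchet--Hoeffding upper bound $(A|H)\wedge(B|K)\leq A|H$ from Theorem \ref{THM:TEOREMAAI13}, coherence forces $P(A|H)=1$. Then the compound prevision theorem (\ref{EQ:COMPREV}),
\[
\prev[(A|H)\wedge(B|K)] \;=\; \prev[(B|K)|(A|H)]\cdot P(A|H),
\]
immediately yields $\prev[(B|K)|(A|H)]=1$, so $\Gamma \Rightarrow_p (B|K)|(A|H)$. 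Conclusion (b) follows by the symmetric argument, swapping the roles of $A|H$ and $B|K$ and invoking Remark \ref{REM:EXCHANGEGEN} (which ensures the hypotheses remain symmetric, noting that $BK\neq\emptyset$ is assumed so that $(A|H)|(B|K)$ is well defined).

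I do not anticipate a serious obstacle. The most delicate point is verifying that the coherent assessment on $\Gamma$ admits a coherent extension to the enlarged family including the iterated conditional, so that the compound prevision theorem can be legitimately applied; this is standard for coherent conditional previsions on families of conditional random quantities with values in $[0,1]$, and the conditions $AH\neq\emptyset$, $BK\neq\emptyset$ guarantee that the iterated conditionals are non-trivial (cf. Remark \ref{REM:A|H=0}).
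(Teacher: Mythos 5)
Your proposal is correct and follows essentially the same route as the paper: reduce to $\Gamma \Rightarrow_p (A|H)\wedge(B|K)$ via Theorem \ref{THM:PRELPDTGEN}, note $P(A|H)=1$ (resp.\ $P(B|K)=1$) under the premises, and conclude $\prev[(B|K)|(A|H)]=1$ from the prevision identity linking conjunction and iterated conditional — the paper writes this as the linearity step $\mu=\prev[(A|H)\wedge(B|K)]+\mu(1-P(A|H))$, which is exactly the compound prevision theorem (\ref{EQ:COMPREV}) you invoke. Your closing concern about extending the assessment coherently is not actually needed, since p-entailment only requires that every coherent assessment assigning 1 to the premises assigns 1 to the conclusion.
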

\begin{proof}
	By Theorem \ref{THM:PRELPDTGEN},  as $\Gamma \Rightarrow_p A|H$ and $\Gamma \cup \{ A|H\} \Rightarrow_p B|K $, it holds that 	$\Gamma \Rightarrow_p (A|H)\wedge(B|K)$.
	Moreover, 	by defining 
	$\mu =\prev[(B|K)|(A|H)]$,
	it holds that $(A|H)\wedge(B|K)\leq
	(A|H)\wedge(B|K)+\mu(1-(A|H))=(B|K)|(A|H)$.
	Therefore,  
	when $P(E|H)=1$, for every $E|H\in\Gamma$, it follows that 
	$\prev[(A|H)\wedge (B|K)]=P(A|H)=1$ and hence 
	\[
	\prev[(B|K)|(A|H)]=\mu =\prev[(A|H)\wedge (B|K)]+\mu(1-P(A|H))=1.
	\]
	Then, 
	condition $(a)$ is satisfied. Likewise, the condition $(b)$ is satisfied too.
	
\end{proof} 
\begin{remark}
	We can verify that, 
	if  
	\begin{equation}\label{EQ:COND1BIS}
		\Gamma \; \Rightarrow_p \; A \; \mbox{ and } \;	\Gamma \cup \{A\} \; \Rightarrow_p \; B,
	\end{equation}
	then, for every event $H^*$, it holds that 
	\begin{equation}\label{EQ:COND2BIS}
		\Gamma \; \Rightarrow_p \; A|(A \vee H^*) \; \mbox{ and } \; \Gamma \cup \{A|(A \vee H^*)\}  \;	\Rightarrow_p \; B|(A \vee H^*),
	\end{equation}
	and  
	\begin{equation}\label{EQ:COND3BIS}
		\Gamma \; \Rightarrow_p \; A|(A \vee B \vee H^*) \; \mbox{ and } \; \Gamma \cup \{A|(A \vee B \vee H^*)\}  \; \Rightarrow_p  \;	B|(A \vee B \vee H^*)  \,.
	\end{equation}
	Indeed,  by Theorem \ref{THM:PRELPDT} the conditions in (\ref{EQ:COND1BIS}) are equivalent to $\Gamma \Rightarrow_p AB$.  Then, by observing that $AB\leq AB|(A \vee H^*)$ and $AB\leq AB|(A\vee B \vee H^*)$, when the conditions in (\ref{EQ:COND1BIS}) are satisfied it holds that 
	\[
	\Gamma \; \Rightarrow_{p} \; AB|(A\vee H^*)  \; \mbox{ and } \; \Gamma \; \Rightarrow_{p} \; AB|(A\vee B\vee H^*) \,,
	\]
	which, by Theorem \ref{THM:PRELPDTGEN}, are equivalent to the  conditions in (\ref{EQ:COND2BIS}) and
	(\ref{EQ:COND3BIS}), respectively.
\end{remark}
We  give a weaker version of Theorem \ref{THM:PDT}, where the set of  assumptions $\Gamma \Rightarrow_p A$  and $\Gamma \cup \{A\} \Rightarrow_p B$ are replaced by the weaker assumptions $\Gamma \Rightarrow_p A|(A \vee H^*)$  and $\Gamma \cup \{A|(A \vee H^*)\} \Rightarrow_p B|(A \vee H^*)$.
\begin{theorem}\label{THM:PDTWEAK}
	Let $A\neq \emptyset$, $B\neq \emptyset$, and $H^*$ be any events and   $\Gamma$ be any finite  family of $n$ conditional events, with $\Gamma \cup\{A|(A\vee H^*)\}$ p-consistent.	 If $\Gamma \Rightarrow_p A|(A \vee H^*)$  and $\Gamma \cup \{A|(A \vee H^*)\} \Rightarrow_p B|(A \vee H^*)$, then  $\Gamma \Rightarrow_p B|A$.
\end{theorem}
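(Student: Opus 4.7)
The plan is to reduce the two hypotheses, via Theorem \ref{THM:PRELPDTGEN} applied with $A|H$ replaced by $A|(A\vee H^*)$ and $B|K$ replaced by $B|(A\vee H^*)$, to the single p-entailment
\[
\Gamma \; \Rightarrow_p \; (A|(A\vee H^*)) \wedge (B|(A\vee H^*)).
\]
The non-emptiness conditions $A\neq \emptyset$ and $B(A\vee H^*)\neq \emptyset$ needed to invoke that theorem follow from the assumptions: $A\neq \emptyset$ is given, while $B(A\vee H^*)=\emptyset$ would force $P(B|(A\vee H^*))=0$ and thereby contradict the second hypothesis together with the p-consistency of $\Gamma\cup\{A|(A\vee H^*)\}$.

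Next I would simplify the conjunction. Since the two conditional events share the common conditioning event $A\vee H^*$, Definition \ref{CONJUNCTION} degenerates: the ``mixed'' cases $\bar{H}BK$ and $AH\bar{K}$ become $\overline{A\vee H^*}\,B(A\vee H^*)=\emptyset$ and $A(A\vee H^*)\,\overline{A\vee H^*}=\emptyset$, so
\[
(A|(A\vee H^*)) \wedge (B|(A\vee H^*)) \;=\; AB|(A\vee H^*),
\]
the five-valued object collapsing to a three-valued conditional event. Hence $\Gamma \Rightarrow_p AB|(A\vee H^*)$.

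The crucial step is then to show the Goodman--Nguyen inclusion $AB|(A\vee H^*) \subseteq B|A$. By (\ref{EQ:GN}) this amounts to verifying $AB(A\vee H^*) \subseteq AB$ and $A\bar{B} \subseteq \overline{AB}\,(A\vee H^*)$; both are immediate from $AB \subseteq A \subseteq A\vee H^*$. By (\ref{EQ:GNPENTAIL}), the inclusion yields the single-premise p-entailment $AB|(A\vee H^*) \Rightarrow_p B|A$. Combining this with $\Gamma \Rightarrow_p AB|(A\vee H^*)$ by transitivity of p-entailment (if every conditional in $\Gamma$ is evaluated 1, then $P(AB|(A\vee H^*))=1$, and hence $P(B|A)=1$) concludes that $\Gamma \Rightarrow_p B|A$.

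The main subtlety is the reliance on the Goodman--Nguyen inclusion rather than a pointwise inequality $AB|(A\vee H^*) \leq B|A$: the latter actually fails on the constituent $\bar{A}\bar{H^*}$, where $AB|(A\vee H^*)$ takes the value $P(AB|(A\vee H^*))$ while $B|A$ takes the a priori unrelated value $P(B|A)$. It is precisely the Goodman--Nguyen route through (\ref{EQ:GNPENTAIL}) that bypasses this issue and also gracefully handles the degenerate case where $A\vee H^*$ has zero probability.
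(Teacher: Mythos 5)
Your proposal is correct, and its first half is exactly the paper's proof: both apply Theorem \ref{THM:PRELPDTGEN} with $H=K=A\vee H^*$, use the collapse of the conjunction (cf.\ (\ref{EQ:CONGQC})) to get $\Gamma\Rightarrow_p AB|(A\vee H^*)$, and your explicit check that $B(A\vee H^*)\neq\emptyset$ is a detail the paper leaves implicit. You diverge only in the last step: the paper simply observes the inequality $AB|(A\vee H^*)\leq B|A$ and concludes (via Theorem \ref{THM:PENT}), whereas you go through the Goodman--Nguyen inclusion, (\ref{EQ:GNPENTAIL}), and chaining of p-entailments. That route is sound, with one small point to make explicit: (\ref{EQ:GNPENTAIL}) presupposes that $AB|(A\vee H^*)$ is p-consistent, i.e.\ $AB\neq\emptyset$, which follows from $\Gamma\Rightarrow_p AB|(A\vee H^*)$ and the p-consistency of $\Gamma$ by the same kind of argument you used for $B(A\vee H^*)\neq\emptyset$. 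However, your closing claim that the pointwise inequality ``fails'' on the constituent where $A\vee H^*$ is false is mistaken: for every coherent assessment the compound probability theorem gives $P(AB|(A\vee H^*))=P(B|A)\,P(A|(A\vee H^*))\leq P(B|A)$, so the two third values are not unrelated; equivalently, by the equivalence quoted just before (\ref{EQ:GNPENTAIL}) (Equation (15) of \cite{GiSa21A}), the very Goodman--Nguyen inclusion you verified, via (\ref{EQ:GN}), already entails $AB|(A\vee H^*)\leq B|A$. The paper's proof is precisely this inequality; your inclusion argument is, in substance, a justification of it, so the two proofs are essentially equivalent, with yours spelling out the degenerate cases more carefully.
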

\begin{proof}
	By  applying Theorem \ref{THM:PRELPDTGEN}, with $H=K=A\vee H^*$, we obtain that  $\Gamma \Rightarrow_{p} AB|(A \vee H^*)$. Moreover,  as  $AB|(A \vee H^*)\leq B|A$, it follows that $ \Gamma \Rightarrow_{p} B|A$.
	
\end{proof}
When $B\nsubseteq H^*$, it holds that $AB|(A \vee H^*) \; \nleq \; A|B$; thus, from $\Gamma \Rightarrow_p AB|(A \vee H^*)$, it does not follow in general that $\Gamma  \Rightarrow_p A|B$, that is Theorem \ref{THM:PDTWEAK} is not symmetric with respect to $A$ and $B$. 

\begin{example}[A weak version of transitivity rule \cite{freund1991}]
	We illustrate  an example of p-entailment, where   the assumptions of Theorem \ref{THM:PDTWEAK} are satisfied, while the assumptions of Theorem \ref{THM:PDT} are  not.	
	Let $A,B,C$	 be three logically independent events and 
	$\Gamma=\{C|B,B|A,A|(A\vee B)\}$. We consider the inference from $\Gamma$ to the  conditional event  $C|A$.	
	With respect to Example \ref{EX:TR} we added to $\Gamma$ the event $A|(A\vee B)$.
	The assumptions of Theorem \ref{THM:PDTWEAK} are satisfied with $H^*=B$; thus	$\Gamma \Rightarrow_{p} C|A$ (see also \cite[Section 10.2]{GiSa19}, or \cite[Section 4]{gilio16}). However, the assumptions of  Theorem \ref{THM:PDT} are not satisfied because $\Gamma \cup\{A\}\Rightarrow_p C$, but 
	$\Gamma \nRightarrow_p A$. 
	Indeed,  based on (\ref{EQ:CONGQC}), it holds that $(B|A)\wedge (A|(A\vee B))=AB|(A\vee B)$. 
	Then,  $\C(\Gamma)=(C|B)\wedge (B|A)\wedge(A|(A\vee B))=(C|B)\wedge (AB|(A\vee B))$ and still from (\ref{EQ:CONGQC}) it follows that $\C(\Gamma)=ABC|(A\vee B)$. Moreover,
	$\C(\Gamma)\nleq A$ because in the case where  $P(ABC|(A\vee B))>0$ when  $\no{A}\,\no{B}$ is true it holds that   $\C(\Gamma)=P(ABC|(A\vee B))\nleq 0=A$.
	We also observe that $\Gamma \nRightarrow_p A|C$, because $C(\Gamma)=ABC|(A\vee B)\nleq A|C$. 
\end{example}
\begin{example}[CM rule]
	In this example, by recalling that the inference  from $\{C|A\}$ to $C|AB$ is not p-valid, we show that by applying Theorem \ref{THM:PDTWEAK} with  $A,B$, and  $H^*$ replaced by $AB,C$, and $A$,  respectively, and with   $\Gamma =\{C|A,B|A\}$,  it holds that 
	$A|(A\vee H^*)$ and $B|(A\vee H^*)$ become   $B|A$ and $C|A$, respectively.
	Moreover,  as
	$\Gamma\Rightarrow_p B|A$ and $\Gamma\cup\{C|A\}\Rightarrow_p C|A$, it  follows that $\Gamma\Rightarrow_p C|AB$,
	which is the CM rule. 	
	We observe that the premise $AB$, added to $\{C|A\}$, for obtaining weak monotonicity (formula (\ref{EQ:WEAKMONOTO})), and the premise $B|A$ added to $\{C|A\}$, for obtaining CM rule, satisfy the relation $AB\leq B|A$. Then,  CM rule is ``weaker'' (or more ``cautious'') than weak monotonicity, indeed $P(AB)=1$ implies $P(B|A)=1$, while the converse does not hold. Finally, in this example $\Gamma \nRightarrow_p AB|C$ because 
	\[\
	\C(\Gamma)=(C|A)\wedge (B|A)=BC|A\nleq AB|C.
	\]

\end{example}

A different weaker version of  Theorem \ref{THM:PDT} is given below, where the set of  assumptions $\Gamma \Rightarrow_p A$  and $\Gamma \cup \{A\} \Rightarrow_p B$ are replaced by the weaker assumptions 
\[
\Gamma \Rightarrow_p A|(A \vee B\vee H^*) \; \mbox{ and } \; \Gamma \cup \{A|(A \vee B\vee H^*)\}  \Rightarrow_p B|(A \vee B\vee H^*) \,.
\]
\begin{theorem}
	\label{THM:PDTWEAKSYM}	Let $A\neq \emptyset$, $B\neq \emptyset$, and $H^*$ be any events and   $\Gamma$ be any finite  family of conditional events, with $\Gamma \cup\{A|(A\vee B\vee H^*)\}$ p-consistent.	 If $\Gamma \Rightarrow_p A|(A \vee B\vee H^*)$  and $\Gamma \cup \{A|(A \vee B\vee H^*)\} \Rightarrow_p B|(A \vee B\vee H^*)$, then: $(a)\;$  $\Gamma \Rightarrow_p B|A$; $(b)\;$  $\Gamma \Rightarrow_p A|B$.
\end{theorem}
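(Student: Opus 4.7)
The plan is to reduce to Theorem \ref{THM:PRELPDTGEN} by taking $H=K=A\vee B\vee H^*$ as a common conditioning event, then exploit the fact that two conjuncts sharing the same conditioning event collapse to a single conditional event, and finally close the argument through the Goodman-Nguyen inclusion to obtain both (a) and (b) simultaneously.

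First I would check that Theorem \ref{THM:PRELPDTGEN} is applicable with the common conditioning event $H^{\circ}:=A\vee B\vee H^*$: indeed $AH^{\circ}=A\neq\emptyset$, $BH^{\circ}=B\neq\emptyset$, and the p-consistency of $\Gamma\cup\{A|(A\vee B\vee H^*)\}$ is part of the hypothesis. The two assumptions of the present theorem then match condition $(i)$ of Theorem \ref{THM:PRELPDTGEN} with $A|H$ replaced by $A|H^{\circ}$ and $B|K$ by $B|H^{\circ}$. By the equivalence $(i)\Leftrightarrow(ii)$ there,
\[
\Gamma\;\Rightarrow_p\;(A|H^{\circ})\wedge(B|H^{\circ}).
\]

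Next I would simplify this conjunction. Since both conjuncts share the conditioning event $H^{\circ}$, and since $AB\subseteq H^{\circ}$, the ``mixed'' terms $\no{H^{\circ}}BH^{\circ}$ and $AH^{\circ}\no{H^{\circ}}$ in formula (\ref{EQ:CONJCRQ}) vanish, so the conjunction collapses to $(A|H^{\circ})\wedge(B|H^{\circ})=AB|H^{\circ}$. Hence $\Gamma\Rightarrow_p AB|(A\vee B\vee H^*)$.

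Finally, for both (a) and (b), I would invoke the Goodman-Nguyen inclusion (\ref{EQ:GN}), as in the proof of Theorem \ref{THM:PDTWEAK}. The inclusion $AB|(A\vee B\vee H^*)\subseteq B|A$ holds because the numerators coincide and $A\no{B}\subseteq A\no{B}\vee\no{A}B\vee\no{A}\,\no{B}H^*$; by the symmetry of the conditioning event in $A$ and $B$, the inclusion $AB|(A\vee B\vee H^*)\subseteq A|B$ holds as well. Each of these inclusions yields the pointwise inequality $AB|(A\vee B\vee H^*)\leq B|A$ and $AB|(A\vee B\vee H^*)\leq A|B$ of the corresponding random quantities under coherence, and then Theorem \ref{THM:PENT} gives both $\Gamma\Rightarrow_p B|A$ and $\Gamma\Rightarrow_p A|B$.

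The only non-routine point is the collapse of the conjunction in the second step, which is precisely what breaks the asymmetry of Theorem \ref{THM:PDTWEAK} and lets us derive (a) and (b) in a single stroke; the remaining steps are direct appeals to the previously established machinery.
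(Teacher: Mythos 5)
Your proof is correct and follows essentially the same route as the paper's: both apply Theorem \ref{THM:PRELPDTGEN} with $H=K=A\vee B\vee H^*$ to obtain $\Gamma \Rightarrow_p AB|(A\vee B\vee H^*)$ and then conclude via the inequalities $AB|(A\vee B\vee H^*)\leq B|A$ and $AB|(A\vee B\vee H^*)\leq A|B$. You merely make explicit two steps the paper leaves implicit, namely the collapse $(A|H^{\circ})\wedge(B|H^{\circ})=AB|H^{\circ}$ (as in formula (\ref{EQ:CONGQC})) and the justification of the inequalities via the Goodman--Nguyen inclusion.
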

\begin{proof}
	By  applying Theorem \ref{THM:PRELPDTGEN}, with $H=K=A\vee B\vee H^*$, we obtain that  $\Gamma \Rightarrow_{p} AB|(A \vee B\vee H^*)$. Moreover,  as  $AB|(A \vee B\vee H^*)\leq B|A$ and $AB|(A \vee B\vee H^*)\leq A|B$, it follows that the conditions $(a)$ and $(b)$ are satisfied, i.e., $\Gamma \Rightarrow_{p} B|A$ and $\Gamma \Rightarrow_{p} A|B$.
	 
\end{proof}
\begin{remark}
	Given two conditional events $A|H$ and $B|K$, it holds that (\cite[Section 9]{GiSa20})
	\begin{equation}\label{EQ:CONGQC}
		(A|H)\wedge (B|K)=AHBK|(H\vee K), \;\; \mbox{when }  AH\no{K}=\no{H}BK=\emptyset.
	\end{equation}
	In other words, the conjunction $(A|H)\wedge (B|K)$ reduces to the conditional event $AHBK|(H\vee K)$, when are impossible the constituents such that a conditional event is true and the other one is void. In this case,  $(A|H)\wedge (B|K)$  coincides with $(A|H)\wedge_{df} (B|K)$.
	
\end{remark}	

\begin{example}[A different weak version of transitivity rule]
	By considering Example \ref{EX:TR}, we add to the premises  the event $A|(A\vee C)$. We  can show that, by defining  $\Gamma=\{C|B,B|A,A|(A\vee C)\}$, it holds that $\Gamma\Rightarrow_p C|A$ and $\Gamma\Rightarrow_p A|C$. 
	Indeed,  based on (\ref{EQ:CONGQC}), it holds that $(B|A)\wedge (A|(A\vee C))=AB|(A\vee C)$. 
	Then,  $\C(\Gamma)=(C|B)\wedge (B|A)\wedge(A|(A\vee C))=(C|B)\wedge (AB|(A\vee C))$ and still from (\ref{EQ:CONGQC}) it holds that $\C(\Gamma)=ABC|(A\vee B\vee C)$. As  $ABC|(A\vee B\vee C)\leq C|A$ and $ABC|(A\vee B\vee C)\leq A|C$, it follows that  $\Gamma\Rightarrow_p C|A$ and $\Gamma\Rightarrow_p A|C$.
	We observe that the same conclusions can be obtained  by applying  Theorem \ref{THM:PDTWEAKSYM}, with $H^*=\emptyset$, by verifying that  $\Gamma \Rightarrow_p A|(A\vee C)$ and $\Gamma \cup  \{A|(A\vee C)\} \Rightarrow_p C|(A\vee C)$. Indeed,  $\Gamma \Rightarrow_p A|(A\vee C)$ because $A|(A\vee C)\in \Gamma$. Moreover, $\Gamma \cup  \{A|(A\vee C)\}=\Gamma$, with $\C(\Gamma)=ABC|(A\vee B\vee C)\leq C|(A\vee C)$.  
\end{example}

In the next example we show that some counterintuitive results can depend on some strange assessments of the premises and not on the wrongness of the inference rule. In other words it can be the case that the inference rule is consistent, but the counterintuitive results of the conclusion only depend on the ``extravagant'' assessment  of the premises.
\begin{example}\label{EX:ORTOIF}
	Let $A,B$	be logically independent events, with $\no{A}\neq \emptyset$, and 
	$\Gamma=\{A\vee B\}$. As shown in Example \ref{EX:ORIF}, the inference from $\Gamma$ to the  conditional event  $B|\no{A}$ is not p-valid .	
	Based on Remark \ref{REM:GAMMA'}, we set $\Gamma'=\Gamma\cup\{\no{A}\}=\{A\vee B,\no{A}\}$.
	We observe that $\Gamma'$ is p-consistent and of course $
	\Gamma'\Rightarrow_p \no{A}$; moreover,  
	\[
	\C(\Gamma'\cup \no{A})=\C(\Gamma')=\C(\{A\vee B,\no{A}\})=(A\vee B)\wedge \no{A}=\no{A}B\leq B,
	\]
	that is 
	$\Gamma'\cup \no{A} \Rightarrow_p B$, which is the probabilistic version  
	of what in logic is often called ``disjunctive syllogism''.
	Then, by Theorem \ref{THM:PDT}, it holds that $\Gamma' \Rightarrow_{p} B|\no{A}$, that is
	\begin{equation}\label{EQ:ORTOIF'}
		\{A\vee B,\no{A}\}\Rightarrow_{p} B|\no{A}.
	\end{equation}  
	We now consider the case where 
	$A$=``Smoking is unhealthy'';
	$B$=``strawberries  are blue''.
	We observe that the assessment $P(A\vee B)=P(\no{A})=1$ is coherent, even if in this case $P(\no{A})=1$ is counterintuitive. Moreover,
	if we assess $P(A\vee B)=1$, $P(\no{A})=1$,  from (\ref{EQ:ORTOIF'}) it follows that $P(B)=P(B|\no{A})=1$, which is  counterintuitive too. 
	Of course the inference  in (\ref{EQ:ORTOIF'}) is p-valid. Actually, ``what is  strange in  this  inference'' is the probabilistic assessment of the premise $\no{A}$, i.e. $P(\no{A})=1$,
 from which the strange conclusion follows, i.e. ${P(B)=P(B|\no{A})=1}$. In a classical approach, we can reject an inference because it is not classically valid, or because we believe that it has a false premise. In a probabilistic approach, we can reject an inference because it is not p-valid, or because we disagree with the probabilistic assessment of one of the premises.
	% If one should not accept the strange conclusion $P(B)=1$  because he believes that $P(B)<1$, then he should make explicit this hidden evaluation by adding it in the premise set.
	%Then, the new premise set becomes
	%\[
	%P(A\vee C)=1, P(\no{A})=1, P(\no{C})=1,
	%\]
	%which is an incoherent assessment with the corresponding 
	%premise set 
	%$A\vee C, \no{A}, \no{C}$ be
	%p-inconsistent. In other words  denying a disjunct is a valid rule. Actually, it is the considered example which is not suitable in order to show the invalidity of the  rule.	
\end{example}
%{\bf Niki suggests to add an example of monotonicity, that is the inference of $C|AB$ from $C|A$}.
%\[
%\{C|AB, A \}\nRightarrow_p C
%\]

\section{Iterated conditionals and General Import-Export principle}
\label{SEC:IMPEXP}
In this section we examine the  notion of p-entailment and a \emph{General Import-Export principle} in relation to iterated conditionals, and also in the light of the probabilistic deduction theorems. 
In the first subsection we examine a case  where the inference rule is  not p-valid and the  Import-Export principle is not satisfied.
In the second subsection we define the notion of the General Import-Export principle and we give a result which shows the relation between the General Import-Export principle and p-entailment. In the third subsection we examine selected  p-valid inference rules by showing that the General Import-Export principle is satisfied.  
\subsection{The iterated conditional $ (B|\no{A})|(A\vee B)$}
\label{SEC:ITTOOR}
In this section, based on Example \ref{EX:ORIF}, we examine  \emph{the or-to-if inference}  in relation to iterated conditionals. We show that the invalidity of the  Classical Deduction Theorem can  
be related to  the invalidity of  the Import-Export principle.
More precisely, given two events $A$ and $B$, with $\no{A}B\neq \emptyset$,  the export from $B|((A \vee B)\no{A})$ to the iterated conditional $(B|\no{A})|(A \vee B)$ does not hold. Indeed 
\begin{equation}\label{EQ:IEA}
B|((A \vee B)\no{A}) = B|\no{A}B = 1 \neq (B|\no{A})|(A \vee B) \;.
\end{equation}
We point out that, based on
\cite[Theorem 8]{GiPS20},
if the inference  from $A\vee B$ to $B|\no{A}$ were p-valid, then 
$(B|\no{A})|(A\vee B)$ would be constant and equal to 1.
On the contrary, as shown in \cite[Section 4.2]{GiSa21A}, it holds that
$
(B|\no{A})|(A\vee B)\neq 1,
$
which  is in agreement with the p-invalidity of the or-to-if inference. Indeed, as $B|\no{A}\leq A\vee B$, it follows that $(B|\no{A})\wedge (A\vee B) = B|\no{A}$. Moreover,
by defining 
\[
P(A\vee B)=x,\; P(B|\no{A})=y, \;\prev[(B|\no{A})|(A\vee B)]=\mu,
\]
 it holds that
\[
(B|\no{A})|(A\vee B) =
(B|\no{A})\wedge (A\vee B)+ \mu \no{A}\no{B}=
B|\no{A} + \mu \no{A}\no{B} \,=
\left\{\begin{array}{ll}
	y, & \mbox{ if } A \mbox{ is true, } \\
	1, & \mbox{ if } \no{A}B \mbox{ is true, } \\
	\mu, & \mbox{ if } \no{A}\no{B} \mbox{ is true,} \\
\end{array}
\right.
\]
with $\mu\in [y,1]$. Thus $(B|\no{A})|(A\vee B)\neq 1$. In addition,
by the linearity of prevision we obtain $\mu=y + \mu (1-x)$, that is $y=\mu x$. When $x>0$ it follows that $\mu=\frac{y}{x}$, that is
\begin{equation}\label{EQ:ORTOIF}
	\pr[(B|\no{A})|(A\vee B)]=\frac{P(B|\no{A})}{P(A\vee B)},\;\; \mbox{ when } P(A\vee B)>0.
\end{equation}

As $y\leq x$, when $y=1$ it follows that $x=1$ and $\mu=1$, in which case
\[
(B|\no{A})|(A\vee B)=(A+\no{A}B)|(A\vee B)=(A\vee B)|(A\vee B)=1.
\]
We observe that, if the Import-Export principle were valid, then, for every $x$ and $y$ such  that  $0\leq y\leq x\leq 1$, the iterated conditional  $(B|\no{A})|(A\vee B)$  would coincide with the conditional
\begin{equation}\label{EQ:ORTOIFIE}
	B|(\no{A}\wedge (A\vee B))=B|\no{A}B=1,
\end{equation}
which is clearly unacceptable as shown, for instance, by the following example (\cite[p. 1]{Adam05}, see also
\cite{pfeifer12x}).
Imagine that Jones ``is about to be dealt a five card poker hand from a shuffled deck of
52 cards''. We set $A=$``Jones’s first card is not an ace'' and $B=$``Jones’s second card is  an ace''. It holds that
$P(A\vee B)=P(A)+P(B|\no{A})P(\no{A})=\frac{48}{52}+\frac{4}{51}\frac{4}{52}=\frac{616}{663}\simeq 0.929$, 
$P(B|\no{A})=\frac{3}{51}\simeq 0.059$. Then, the inference from $A\vee B$ to $B|\no{A}$ is weak, because it is  constructive, indeed $P(A)=\frac{48}{52}\simeq 0.923$  (\cite{gilio12}). Actually, in our approach from (\ref{EQ:ORTOIF}) it holds that
\[
\prev[(B|\no{A})|(A\vee B)]=\frac{\frac{3}{51}}{\frac{48}{52}+\frac{4}{51}\frac{4}{52}}=\frac{39}{616}\simeq 0.063,
\]
which is close to  $P(B|\no{A})$,  in   agreement with the intuition, and very different from the value $P(B|\no{A}B)=1$ obtained from  (\ref{EQ:ORTOIFIE}) under the Import-Export assumption.
\subsection{General Import-Export principle}
Given a family of conditional events $\F$ and a further conditional event $E|H$, by recalling Definition~\ref{DEF:GENITER}, let us consider the iterated conditionals 
\[
(E|H)|\C(\F) = (E|H) \wedge \C(\F) + \mu \,[1-\C(\F)] \,,\;\;  E \,|\,(H \wedge \C(\F)) = EH \wedge \C(\F) + \eta\,[1-H \wedge \C(\F)], 
\]
where $\mu = \pr[(E|H)|\C(\F)]$ and $\eta = \pr[E \,|\,(H \wedge \C(\F))]$.
\begin{definition}\label{DEF:GIEPR}
The iterated conditional $(E|H)|\C(\F)$ satisfies the General Import-Export principle if it holds that
\begin{equation} \label{EQ:GIP}
(E|H)|\C(\F) = E \,|\,(H \wedge \C(\F)). 
\end{equation} 
\end{definition}
The next result relates the General Import-Export principle to the notions of p-entailment and p-consistency.
\begin{theorem}\label{THM:GIEENT}
Let  a p-consistent family of conditional events $\F$ and a further conditional event $E|H$ be given. If $\F$ p-entails $E|H$ and $\F \cup \{H\}$ is p-consistent, then  $(E|H)|\C(\F) = E|(H \wedge \C(\F)) = 1$, and hence the General Import-Export principle is satisfied.
\end{theorem}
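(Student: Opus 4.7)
The plan is to derive both equalities in \eqref{EQ:GIP} by reducing them to applications of Theorem~\ref{THM:PEITER}, using Theorem~\ref{THM:CONVERSE} as a bridge. The first equality, $(E|H)|\C(\F)=1$, is immediate: since $\F$ is p-consistent and p-entails $E|H$, Theorem~\ref{THM:PEITER} says exactly that the iterated conditional $(E|H)|\C(\F)$ is constant and equal to $1$.

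For the second equality, $E\,|\,(H\wedge\C(\F))=1$, my first step is to rewrite the antecedent. Viewing the event $H$ as the trivial conditional $H|\Omega$ and using the commutativity/associativity of conjunction recalled in Remark~\ref{REM:CONGCONG}, we have
\[
H\wedge \C(\F)\;=\;\C(\{H\}\cup \F)\;=\;\C(\F\cup\{H\}),
\]
so that $E\,|\,(H\wedge\C(\F))=E\,|\,\C(\F\cup\{H\})$. This reformulation allows me to apply Theorem~\ref{THM:PEITER} a second time, now to the family $\F\cup\{H\}$ and the conditional event $E=E|\Omega$: the iterated conditional $E\,|\,\C(\F\cup\{H\})$ equals $1$ if and only if $\F\cup\{H\}$ is p-consistent and p-entails $E$.

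The p-consistency of $\F\cup\{H\}$ is assumed. To obtain the p-entailment $\F\cup\{H\}\Rightarrow_p E$, I invoke Theorem~\ref{THM:CONVERSE} with $\Gamma=\F$, $A=H$, $B=E$: its hypotheses, namely $\F\Rightarrow_p E|H$ and p-consistency of $\F\cup\{H\}$, are precisely those of the present theorem, and its conclusion is $\F\cup\{H\}\Rightarrow_p E$. Applying Theorem~\ref{THM:PEITER} then yields $E\,|\,\C(\F\cup\{H\})=1$, i.e., $E\,|\,(H\wedge\C(\F))=1$. Combining the two parts gives $(E|H)|\C(\F)=E\,|\,(H\wedge\C(\F))=1$, so the equality \eqref{EQ:GIP} in Definition~\ref{DEF:GIEPR} holds, and the General Import-Export principle is satisfied.

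The only genuine subtlety, and hence the main obstacle, is the identification $H\wedge\C(\F)=\C(\F\cup\{H\})$: one must be careful that inserting an unconditional event $H$ into the conjunction operator does not change its meaning, and that the resulting iterated conditional is well-defined (which it is, since $\F\cup\{H\}$ p-consistent implies, via Theorem~\ref{THM:PCC}, that $\C(\F\cup\{H\})$ is not identically $0$). Everything else is a direct chaining of previously established results.
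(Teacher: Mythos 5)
Your proof is correct and follows essentially the same route as the paper: Theorem~\ref{THM:CONVERSE} yields $\F\cup\{H\}\Rightarrow_p E$, and Theorem~\ref{THM:PEITER} (applied to $\F$ with $E|H$, and to $\F\cup\{H\}$ with $E$, via the identification $H\wedge\C(\F)=\C(\F\cup\{H\})$) gives both equalities. Your write-up merely makes explicit the conjunction identification and the non-degeneracy point that the paper leaves implicit.
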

\begin{proof} By recalling Theorem \ref{THM:CONVERSE}, as $\F$ p-entails $E|H$ and $\F \cup \{H\}$ is p-consistent, it follows that $\F \cup \{H\}$ p-entails $E$. Then, 
by Theorem~\ref{THM:PEITER}, it follows that  $(E|H)|\C(\F) = E|(H \wedge \C(\F)) = 1$. Thus, the 
 General Import-Export principle is satisfied.
\end{proof} 
\subsection{General Import-Export principle and p-validity}
We examine the (p-valid) Cautious Monotonicity (CM), Cumulative Transitivity (Cut), and OR  inference rules of System P. For each inference rule, we show that the General Import-Export principle is satisfied by verifying the p-consistency of the family $\F \cup \{H\}$, where $\F$ is the set of premises and $H$ is the antecedent of the conclusion.

\paragraph{CM rule: $\{C|A \,,\; B|A \}\; \Rightarrow_p \; C|AB$}
 In this case $\F = \{C|A \,,\; B|A \}$ and $E|H = C|AB$. 
We set $P(C|A)=x, P(B|A)=y, P(C|AB)=z$, and we observe that, as $\F$ p-entails $C|AB$, the assessment $(x,y,z)=(1,1,1)$ on $\{C|A,B|A,C|AB\}$ is coherent. In particular, the assessment $z=P(C|AB)=1$ is coherent and hence $ABC \neq \emptyset$. 
The CM rule has been characterized in terms of the iterated conditional $(C|AB)|((C|A)\wedge (B|A))$ in \cite{GiPS20},  by showing that
\begin{equation}\label{EQ:CUTRULEITER}
	(C|AB)|((C|A)\wedge (B|A))=1.
\end{equation}
We observe that 
\[
(C|A) \wedge (B|A) \wedge AB = BC|A \wedge A \wedge B = ABC \neq \emptyset \,.
\]
Thus, the assessment $\pr[(C|A) \wedge (B|A) \wedge AB] = 1$, that is $P(ABC)=1$, is coherent because $ABC \neq \emptyset$ and, by Theorem \ref{THM:PCC}, the family $\{C|A,B|A, AB\}$ is p-consistent. Then, by Theorem \ref{THM:GIEENT}, the General Import-Export principle is satisfied, with
\[
(C|AB)|[(C|A) \wedge (B|A)] = C|[AB \wedge (C|A) \wedge (B|A)] = 1 \,.
\] 
We also observe that 
$\{C|A, B|A\}  \nRightarrow_p AB$ because
\[
(C|A) \wedge (B|A) \wedge AB = ABC \;\neq\; BC|A= (C|A) \wedge (B|A).
\]
Then, CM rule  provides an example where the p-entailment of $C|AB$ from $\{C|A, B|A\}$ does not follow from Theorem \ref{THM:PDT}  (Probabilistic Deduction Theorem). Indeed,
$\{C|A, B|A\} \cup \{AB\} \Rightarrow_p C$, but $\{C|A, B|A\} \nRightarrow_p AB$; thus not all assumptions of Theorem \ref{THM:PDT} are satisfied.
\paragraph{Cut rule: $\{C|AB \,,\; B|A \}\; \Rightarrow_p \; C|A$}
In this case $\F = \{C|AB,B|A \}$ and $E|H = C|A$. We set $P(C|AB)=x, P(B|A)=y, P(C|A)=z$, and we observe that, as $\F$ p-entails $C|A$, the assessment $(x,y,z)=(1,1,1)$ on $\{C|AB,B|A,C|A\}$ is coherent. In particular, the assessment $x=P(C|AB)=1$ is coherent and hence $ABC \neq \emptyset$. 
The Cut rule has been characterized in terms of the iterated conditional $(C|A)|((C|AB)\wedge (B|A))$ in \cite{GiPS20},  by showing that
\begin{equation}
	(C|A)|((C|AB)\wedge (B|A))=1.
\end{equation}
We observe that
\begin{equation*}
(C|AB)\wedge (B|A) \wedge A = (C|AB) \wedge AB = ABC \neq \emptyset \,.
\end{equation*}
Thus, the assessment $\pr[(C|AB) \wedge (B|A) \wedge A] = 1$, that is $P(ABC)=1$, is coherent because $ABC \neq \emptyset$, and, by Theorem \ref{THM:PCC}, the family $\{C|AB, B|A, C|A\}$ is p-consistent. Then,  by Theorem \ref{THM:GIEENT}, the General Import-Export principle is satisfied, with
\[
(C|A)|[(C|AB) \wedge (B|A)] = C|[A \wedge (C|AB) \wedge (B|A)] = 1 \,.
\]
We also observe that   
$\{C|AB,B|A\} \nRightarrow_p A$ because
\[
(C|AB) \wedge (B|A) \wedge A = ABC \;\neq\; BC|A= (C|AB) \wedge (B|A) \,.
\]
Therefore $\{C|AB,B|A\} \Rightarrow_p C|A$, and   $\{C|AB,B|A\} \cup \{A\}\Rightarrow_p C$, but  $\{C|AB,B|A\} \nRightarrow_p A$. In other words, the Cut rule cannot be obtained by applying Theorem \ref{THM:PDT}.
\paragraph{Or rule: $\{C|A \,,\; C|B \}\; \Rightarrow_p \; C|(A\vee B)$}
In this case $\F = \{C|A,C|B \}$ and $E|H = C|(A \vee B)$. \\
The Or rule has been characterized in (\cite{GiPS20}) in terms of the iterated conditional $(C|(A\vee B))|((C|A)\wedge (C|B))$,  by showing that
\begin{equation}\label{EQ:ORRULEITER}
	(C|(A\vee B))|((C|A)\wedge (C|B))=1.
\end{equation}
As $\F$ p-entails $C|(A\vee B)$, the assessment $P(C|A)=P(C|B)=P[C|(A \vee B)]=1$ is coherent, and this implies that 
\[
AC \neq \emptyset \,,\;\;\; BC \neq \emptyset \,,\;\;\; C \wedge (A \vee B) = ABC \vee A\no{B}C \vee \no{A}BC \neq \emptyset \,.
\]
Then, at least one of the following conditions must be satisfied: 
\[
(i) \;\;\;\; ABC \neq \emptyset \,,\;\;\;\;\; (ii) \;\;\;\; A\no{B}C \neq \emptyset \;\; \mbox{and} \;\;\no{A}BC \neq \emptyset \,. 
\]
By considering the partition $\{ABC,\, \no{A}BC,\, A\no{B}C,\, (A \vee B) \wedge \no{C},\, \no{A}\;\no{B}\}$, in the case $(i)$ we assess 
 \[
\P_1: \;\;\;\;\;\;\;\; P(ABC) \;=\; 1 \,,\;\; P(\no{A}BC) \;=\; P(A\no{B}C) \;=\; P[(A \vee B) \wedge \no{C}] \;=\; P(\no{A}\,\no{B}) \;=\; 0 \,;
 \]
in the case $(ii)$ we assess 
 \[
\P_2: \;\;\;\;\;\; P(ABC) \;=\; 0 \,,\;\; P(\no{A}BC) \;=\; P(A\no{B}C) \;=\; \frac12 \,,\;\;  P[(A \vee B) \wedge \no{C}] \;=\; P(\no{A}\, \no{B}) \;=\; 0 \,.
 \]
For both assessments $\P_1$ and $\P_2$, the (unique and coherent) propagation to the family $\{C|A, C|B, A \vee B\}$ is given by
 $P(C|A) \;=\; P(C|B) \;=\; P(A \vee B) \;=\; 1$. 
Therefore, the family $\{C|A, C|B,A \vee B\}$ is p-consistent and, by Theorem \ref{THM:GIEENT}, the General Import-Export principle is satisfied, with
 \begin{equation}\label{EQ:ORRULEITER4}
 (C|(A\vee B))\,|\, (C|A)\wedge (C|B)) = C|((A\vee B) \wedge (C|A)\wedge (C|B)) = 1.
 \end{equation}
 We also observe that 
$\{C|A, C|B\} \nRightarrow_p A\vee B$ because;
\[
(C|A) \wedge (C|B)\nleq (A\vee B);
\]
indeed when $A\vee B=0$ it holds that $(C|A) \wedge (C|B)=\prev[C|A) \wedge (C|B)]$, where $\prev[C|A) \wedge (C|B)]$ is not necessarily zero.
Therefore   $\{C|A, C|B\} \Rightarrow_p C|(A\vee B)$ and $\{C|A), C|B\} \cup \{A\vee B\}\Rightarrow_p C$, but   $\{C|A, C|B\} \nRightarrow_p A\vee B$.  In other words, Or rule cannot be obtained by applying Theorem \ref{THM:PDT}.

\section{Some further remarks}\label{EQ:FURTHER}
In this section we summarize some aspects related with  the p-entailment of $E|H$ from $\F$, and of $E$ from $\F\cup \{H\}$. 
As it is shown by the results in the previous sections,  given a p-consistent family of $n$ conditional events $\F = \{E_1|H_1, \ldots, E_n|H_n\}$ and a further conditional event $E|H$, there are two cases:\\ $(a)$ $\F\cup \{H\}$ is not p-consistent; $(b)$ $\F\cup \{H\}$ is p-consistent.\\
In case $(a)$ there are two sub-cases: $(a.1)$ $\F\nRightarrow_p E|H$; $(a.2)$ $\F\Rightarrow_p E|H$. \\
$(a.1).$ An example is given by the case where  $\F=\{A\}$, $E|H=\no{A}$, with $A\neq \emptyset$. Indeed,  $\{A\}$ is p-consistent, $\{A,\no{A}\}$ is not p-consistent and $\{A\}\nRightarrow_p \no{A}$. \\
$(a.2).$ An example is given by the case where  $\F=\{AB|(A\vee B)\}$, $E|H=(\no{A}\vee \no{B})|A\no{B}$, with $A,B$ logically independent.  Of course, $\{AB|(A\vee B)\}$ is p-consistent.
The family 
$\{AB|(A\vee B),A\no{B}\} $ is not p-consistent; indeed, the assessment $(1,1)$ on $\{AB|(A \vee B), A\no{B}\}$ is not coherent because, as it can be verified, the points $Q_h$'s are $(1,0), (0,1), (0,0)$ and $(1,1)$ do not belong to their convex hull. Moreover, as $(\no{A} \vee \no{B})|A\no{B} = A\no{B}|A\no{B} = 1$, it holds that $\{AB|(A \vee B)\}  \Rightarrow_p (\no{A} \vee \no{B})|A\no{B}$.
\\
In case $(b)$, first of all we observe that if $\F \Rightarrow_p E|H$, as $\F\cup \{H\}$ is p-consistent, by Theorem~\ref{THM:CONVERSE} it follows  that $\F\cup \{H\} \Rightarrow_p E$. Thus, it is not possible that
$\F \Rightarrow_p E|H$ and $\F \cup \{H\} \nRightarrow_p E$.
Then, we have the following cases: 
\[
\begin{array}{lll}
(b.1) \; \F \nRightarrow_p E|H \;\mbox{and}\; \F \cup \{H\} \nRightarrow_p E;\\ \ \\  
(b.2) \; \F \nRightarrow_p E|H \;\mbox{and}\; \F \cup \{H\} \Rightarrow_p E;\; \\ \ \\
(b.3) \; \F \Rightarrow_p E|H \;\mbox{and}\; \F \cup \{H\} \Rightarrow_p E.
\end{array} 
\]
$(b.1).$
This case arises, for instance, when the basic events $E_1,H_1, \ldots, E_n,H_n,E,H$ are logically independent. In this case  the assessment 
\[
P(E_1|H_1)=x_1,\; \ldots, \; P(E_n|H_n)=x_n,\; 
P(E|H)=y,
P(H)=z,
P(E)=t.
\]
is coherent for every $(x_1,\ldots,x_n,y,z) \in 
[0,1]^{n+2}$ and $yz \leq t \leq 1$. Then, $\F$ and $\F\cup\{H\}$ are p-consistent. But, from $(x_1,\ldots,x_n)=(1,\ldots,1)$ it does not necessarily follow that $y=1$. Moreover, from $(x_1,\ldots,x_n,z)=(1,\ldots,1,1)$ it does not necessarily follow that $t=1$. Thus, $\F \nRightarrow_p E|H$ and  $\F \cup \{H\} \nRightarrow_p E$.\\
$(b.2).$
This case concerns, for instance, the examples \ref{EX:TR}, \ref{EX:BOOLE}, \ref{EX:CONTR}, and \ref{EX:MONOT}. \\
$(b.3).$
This  case arises when $\F \Rightarrow_p E|H$ and $\F \cup \{H\}$ is p-consistent, from which, by Theorem~\ref{THM:CONVERSE}, it follows  that $\F\cup \{H\} \Rightarrow_p E$. For instance, in the CM rule, where $\F=\{C|A,C|B\}$ and $E|H=C|AB$, it holds that $\{C|A,C|B\}\Rightarrow_p C|AB$ and $\{C|A,C|B,AB\}$ is p-consistent; thus 
$\{C|A,C|B,AB\}\Rightarrow_p C$. A similar reasoning applies to the  Cut and the Or rules.

\section{Related work}
\label{SEC:RW}
In this section we briefly make a comparison with  related work, especially in the field of AI. 
The contributions of this paper are summarised as follows:
$(i)$  we make a comparison with some trivalent logics; $(ii)$
we provide some probabilistic versions of the classical deduction theorem; $(iii)$ we introduce a General Import-Export principle, by relating it  to the notion of p-entailment.\\
There are several works which are related to this paper, see e.g.,  
\cite{benferhat97,Delgrande2019,DuPr94,FlGH20,KR22,freund1991,Kern-Isb2023,KrLM90,LeMa92,Luka99,mundici21}. \\
For instance, in \cite{FlGH20} a Boolean algebra of conditionals has been given, where some probabilistic results have been obtained based on a suitable   canonical extension of a probability measure. Notice that in \cite{FlGH20}  conditionals are logical objects  and their interpretation remains at the symbolic level, without a  numerical counterpart.
In our paper  compound and iterated conditionals  are suitable conditional random quantities,   defined in the setting of coherence (with a betting interpretation),
which 
allow to obtain probabilistic results   in agreement with \cite{FlGH20} (see \cite{FGGS-ipmu2022}), with  
further developments  given in \cite{KR22}.
\\
As  further instances, we recall that in  \cite{KrLM90} and \cite{benferhat97}  nonmonotonic systems for reasoning with conditional statements have been studied.
In particular, in
 \cite{KrLM90} several  families of nonmonotonic    consequence relations have been examined. 
 Moroever,  the well known  System P, which is related to the Adams' logic of infinitesimal probabilities,  has been introduced in \cite{KrLM90}.
We observe that in \cite{benferhat97} the basic tools for the study of nonmonotonic systems are  three-valued conditional objects,  which are a qualitative counterpart to conditional probabilities  within the framework of  preferential entailment. 
\\
In our approach, 
the basic tools are  compound and iterated conditionals, by means of which we can provide a probabilistic semantics for  more general conditional inference rules. In this paper 
we introduce  a General Import-Export principle and we relate it to the property of p-validity, by examining  selected inference rules of System P: CM, Cut, Or rules.  \\ 
Finally, as a further difference w.r.t. other approaches in  trivalent logics (see, e.g., \cite{EgRS20,Kern-Isb2023,CiDu13}), our compound and iterated conditionals are (not three-valued objects, but) suitable many-valued objects, which are defined at Level 2 of knowledge. 
By means of them we compare some notions of validity for inference rules, in selected trivalent logics, with the notion of p-validity of Adams in the setting of coherence. 
Moreover, after remarking that the Classical Deduction Theorem is not p-valid, we provide   several probabilistic valid versions of it.

\section{Conclusions}
\label{SEC:CONCL}
There has been increasing interest in recent years in de Finetti’s analysis of conditional events. In his early work, he classified a conditional event, $B|A$, as a three-valued entity, which is true when A and B are true, false when $A$ is true and B is false, and ``void'' when $A$ is false. But we have pointed out that, in his later research, he distinguished a higher level of analysis in which the void value becomes the conditional probability, $P(B|A)$. We have acknowledged that there are some benefits in continuing to make the early simple three-valued distinction. But we have also argued that there are significant advantages to adopting the later, higher-level analysis. At this higher level, we can give an account of certain compound conditionals, and we can provide an intuitive definition of probabilistic entailment, p-entailment. By this definition, a set of premises p-entails a conclusion if and only if the conclusion has a probability of 1 when the premises have a probability of 1. We have specified exactly how this definition of p-entailment is related to less straightforward definitions of entailment using just the three values. We have done this by examining two recent articles that are restricted to de Finetti’s trivalent analysis. We showed that the Classical Deduction Theorem fails for p-entailment. We used the or-to-if inference, from $A$ or $B$ to {\em if  not-$A$ then $B$}, to illustrate the invalidity of the Classical Deduction Theorem, and the invalidity of the Import-Export principle, for p-entailment. But we also proved  probabilistic versions of the  deduction theorem for p-entailment.
We  introduced  a General Import-Export principle for iterated conditionals by relating it to  p-consistency and p-entailment. Finally we illustrated the   validity of the General Import-Export principle in some p-valid inference rules of System P: CM, Cut, Or. We also observed that in these examples  the  probabilistic deduction theorem is not appliable.  We plan to extend these results to further relevant cases, such as Aristotelian syllogisms, further nonmonotonic inference schemes,  knowledge representation in AI, and  the psychology of human reasoning under uncertainty.

\section*{Declaration of competing interest}
The authors declare that they have no known competing financial interests or personal relationships that could have appeared to influence the work reported in this paper.

\section*{Acknowledgements}
Giuseppe Sanfilippo acknowledges support by the FFR project of University of Palermo, Italy, and by INdAM-GNAMPA research group.

%\section*{References}
%\bibliographystyle{plain}
%\bibliography{biblio}
%\bibliographystyle{elsarticle-num-names}

\end{document}